\def\Eqref#1{Equation~\eqref{#1}}
\def\1{\bm{1}}
\def\vone{{\bm{1}}}
\def\vg{{\bm{g}}}
\def\vu{{\bm{u}}}
\def\vv{{\bm{v}}}
\def\vw{{\bm{w}}}
\def\vx{{\bm{x}}}
\def\mA{{\bm{A}}}
\def\mB{{\bm{B}}}
\def\mD{{\bm{D}}}
\def\mG{{\bm{G}}}
\def\mI{{\bm{I}}}
\def\mP{{\bm{P}}}
\def\mU{{\bm{U}}}
\def\mW{{\bm{W}}}
\DeclareMathAlphabet{\mathsfit}{\encodingdefault}{\sfdefault}{m}{sl}
\SetMathAlphabet{\mathsfit}{bold}{\encodingdefault}{\sfdefault}{bx}{n}
\def\gG{{\mathcal{G}}}
\def\sS{{\mathbb{S}}}
\def\sV{{\mathbb{V}}}
\def\sW{{\mathbb{W}}}
\def\sX{{\mathbb{X}}}
\def\emA{{A}}
\def\emG{{G}}
\newcommand{\R}{\mathbb{R}}
\newcommand{\Var}{\mathrm{Var}}
\DeclareMathOperator{\sign}{sign}
\DeclareMathOperator{\Tr}{Tr}
\newcommand{\V}{\mathcal V}
\newcommand{\Ed}{\mathcal E}
\newcommand{\y}{\mathbf y}
\newcommand{\bigO}{\mathcal O}
\newcommand{\Rsp}{R_{\textrm{sp}}}
\newcommand{\Esp}[1]{\ifthenelse{\equal{#1}{}}{E_{\textrm{sp}}}{E_{\textrm{sp},#1}}}
\DeclarePairedDelimiter\parens{[}{]}
\DeclareMathOperator{\EXop}{\mathbb{E}}% expected value
\newcommand\EX[1]{\EXop\parens*{#1}}
\newcommand\EXs[2]{\EXop_{#1}\parens*{#2}}
\DeclarePairedDelimiter\norm{\lVert}{\rVert}  % ||norm||
\DeclareMathOperator{\DistOp}{\mathrm{dist}}
\newcommand\dist[2]{\DistOp{\!\left(#1,#2\right)}}
\theoremstyle{plain}% default
\newtheorem{thm}{Theorem}[section]
\newtheorem{lem}[thm]{Lemma}
\newtheorem{prop}[thm]{Proposition}
\newtheorem{cor}[thm]{Corollary}
\theoremstyle{definition}
\theoremstyle{remark}
\DeclareMathOperator*{\minim}{minimize}
\def \ver{internal}
\def \verinternal{internal}
\def \vertechrep{technicalrep}
\newcommand{\onlytechrep}[1]{% %parts that appear only in the technical report
	\ifx\ver\verinternal%
		{}
		%{\textcolor{blue}{FOR TECH. REP.: #1}}
	\else%
		\ifx\ver\vertechrep%
			{{#1}}%
		\else {}%
		\fi%
	\fi%
}
\newcommand{\onlysub}[1]{% %parts that appear only in the technical report
	\ifx\ver\vertechrep%
		{}
		%{\textcolor{blue}{FOR TECH. REP.: #1}}
	\else%
		{#1}%
	\fi%
}
\newcommand{\highlight}[1]{%	%highlight some parts in the internal version
	\ifx\ver\verinternal%
		{\textcolor{blue}{#1}}%
	\else {#1}%
	\fi%
}
\newcommand{\ic}[1]{%	%internal comments
	\ifx\ver\verinternal%
		{\textcolor{red}{#1}}%
	\else {}%
	\fi%
}
\newcommand{\citeapp}[1]{%
	\ifx\ver\vertechrep%
		{(Appendix~\ref{#1})}%
	\else%
		\ifx\ver\verinternal%
				{(Appendix~\ref{#1})}%
		\else{{\cite[App. \ref{#1}]{sm}}}%
		\fi
	\fi%
}
\newcommand{\inciteapp}[1]{%
	\ifx\ver\vertechrep%
		{Appendix~\ref{#1}}%
	\else%
		\ifx\ver\verinternal%
				{Appendix~\ref{#1}}%
		\else{{\cite[App. \ref{#1}]{sm}}}%
		\fi
	\fi%
}
\begin{document}

% If your paper is accepted and the title of your paper is very long,
% the style will print as headings an error message. Use the following
% command to supply a shorter title of your paper so that it can be
% used as headings.
%
%\runningtitle{I use this title instead because the last one was very long}

% If your paper is accepted and the number of authors is large, the
% style will print as headings an error message. Use the following
% command to supply a shorter version of the authors names so that
% they can be used as headings (for example, use only the surnames)
%
%\runningauthor{Surname 1, Surname 2, Surname 3, ...., Surname n}

\twocolumn[
\aistatstitle{Decentralized gradient methods: does topology matter?}

%\title{Decentralized gradient methods: does topology matter?}
%\maketitle

\aistatsauthor{ Giovanni Neglia  \And Chuan Xu \And  Don Towsley \And Gianmarco Calbi }

\aistatsaddress{ Inria, Univ.~C\^ote d'Azur \And Inria, Univ.~C\^ote d'Azur \And  UMass Amherst  \And Inria, Univ.~C\^ote d'Azur \AND 
France\And France\And USA \And France \AND
giovanni.neglia@inria.fr \And chuan.xu@inria.fr \And towsley@cs.umass.edu \And gianmarco.calbi@inria.fr
}

]

\begin{abstract}
Consensus-based distributed optimization methods have recently been advocated as alternatives to parameter server and ring all-reduce paradigms for large scale %data-parallel 
training of machine learning models. In this case, each worker maintains a local estimate of the optimal parameter vector and iteratively updates it by averaging the estimates obtained from its neighbors, and applying a correction on the basis of its local dataset. While theoretical results suggest that worker communication topology should have strong impact on the number of epochs needed to converge, previous experiments have shown the opposite conclusion. This paper sheds lights on this apparent contradiction and show how sparse topologies can lead to faster convergence even in the absence of communication delays. 
\end{abstract}

\section{INTRODUCTION}
%!TEX root = main.tex

In 2014, Google's Sybil machine learning (ML) platform was processing hundreds of terabytes through thousands of cores to train models with hundreds of billions of parameters~\citep{canini14}. At this scale, no single machine can solve these problems in a timely manner, and, as time goes on, the need for efficient distributed  solutions  becomes even more urgent. 
%For example, conservative forecasts  estimate the size of human genomes databases to double every 12 months until 2025~\cite{stephens15}. This growth rate well exceed what the outdated Moore's law would predict for transistor density. More and more computing nodes will then be required in the future. 
For example, experiments in~\citep{young17} rely on more than $10^4$ computing nodes to iteratively improve the (hyper)parameters of a deep neural network.

The example in~\citep{young17} is typical of a large class of iterative  ML distributed algorithms. Such algorithms begin with a guess of an optimal vector of parameters and proceed through multiple iterations over the input data to improve the solution. The process evolves in a data-parallel manner: input data is divided among worker threads.
%, each of which iterates over its data subset and adjust the solution based on its local view of the latest parameter values.  Solution adjustments are then exchanged among workers. 
%This operation appears to fit general-purpose computing frameworks like Apache Spark, which was designed to extend MapReduce and other data parallel abstractions to the iterative setting.
Currently, two communication paradigms are commonly used to coordinate the different workers~\citep{distr_tf18}: %asynchronous 
parameter server and 
%synchronous 
ring all-reduce. Both paradigms are natively supported by TensorFlow~\citep{abadi16}.

In the first case, a stateful parameter server 
(PS)~\citep{smola10}  %\todo{I think this reference is not so related, suggest to change to \cite{li14nips}}
maintains the current version of the model parameters. Workers  use locally available versions of the model to compute ``delta'' updates of the parameters (e.g.,~through a gradient descent step). Updates are then aggregated by the parameter server and combined with its current state to produce a new estimate of the optimal parameter vector.
%The parameter server could wait for all workers before updating the vector (synchronous operation), but 
%\emph{stragglers}, i.e.~slow tasks, would significantly reduce computation speed in a multi-machine setting~\cite{ananthanarayanan13,karakus17,li18}. Transient slowdowns are common in computing systems (especially in shared ones) and have many causes, such as resource contention, background OS activities,  garbage collection, and (for ML tasks) stopping criteria calculations. For this reason,  PS often operates asynchronously, updating the parameter vector as soon as  it receives the result of a single worker.
%an input  from $k$ of the $M$ workers (where $k$ can be as small as $1$). 
%While this approach increases system throughput (parameter updates per time unit), some workers will operate on stale versions of the parameter slowing and, in some cases, even preventing convergence to the optimal model~\cite{gonzalez15,kadav16}. %,lian18icml}. 
%Moreover, the PS can easily become a networking or computation bottleneck.

As an alternative, 
%When workers are homogeneous computing units (e.g.,~GPUs or TPUs) located in the same machine, synchronous operation becomes a viable solution. Moreover, 
it is possible to remove the PS, by letting each worker aggregate the inputs of all other workers through the ring all-reduce algorithm~\citep{ringallreduce17}. With $M$ workers, each aggregation phase requires $2(M-1)$ communication steps with $\bigO(1)$ data transmitted per worker. %Ring all-reduce has been deployed in many efficient low level implementations, e.g., in NVIDIA's library NCCL. 
There are many efficient low level implementations of ring all-reduce, e.g., in NVIDIA's library NCCL. 

We observe that  both the PS and the ring all-reduce paradigms 1)~maintain a unique candidate parameter vector at any given time and 2) rely \emph{logically} on an all-to-all communication scheme.\footnote{Each node needs to receive the aggregate of all other nodes' updates to move to the next iteration. Aggregation is performed by the PS or along the ring through multiple rounds.} 
Recently~\cite{lian17nips,lian18icml} have promoted an alternative approach in the ML research community, where each worker 1)~keeps updating a local version of the parameters and 2)~broadcasts its updates only to a subset of nodes (its neighbors). This family of algorithms became originally popular in the control community, starting from the seminal work of \cite{tsitsiklis86}
%Tsitsiklis \emph{et al} 
on distributed gradient methods. They are often referred to as \emph{consensus-based distributed optimization methods}.
Experimental results in~\citep{lian17nips,lian18icml,luo19} show that 
\begin{enumerate}
	\item in terms of number of epochs, the convergence speed is almost the same when the communication topology is a ring or a clique, 
	%\todo{and the data is uniformly random split?}
	\emph{contradicting} theoretical findings that predict convergence to be faster on a clique;
	\item in terms of wall-clock time, convergence is faster for sparser topologies, an effect attributed in~\citep{lian17nips} to smaller communication load. %worker communication costs. 
\end{enumerate}
In particular, \cite{luo19} summarize their findings as follows ``\emph{in theory, the bigger the spectral gap,} [i.e., the more connected the topology] \emph{the fewer iterations it takes to converge. However, our experiments do not show a significant difference in the convergence rate w.r.t. iterations, even when spectral gaps are very dissimilar}.''

In this paper we contribute to a better understanding of the potential advantages of consensus-based gradient methods. In particular,
\begin{enumerate}
%\item we explore the whole range of connectivity, not limiting ourselves to the extreme cases of rings and cliques,
\item we present a refined convergence analysis that helps to explain the apparent contradiction among theoretical results and empirical observations,
\item we show that sparse topologies can speed-up  wall-clock time convergence even when communication costs are negligible, because they intrinsically mitigate the straggler problem.
%\item our experiments indicate that, under a realistic distribution of computation times, sparse topologies like rings and 3-degree expander graphs may be the best practical choices.
\end{enumerate}

The paper is organized as follows. Section~\ref{s:background} provides  required background. Our theoretical analysis of the effect of  communication topology is in Sect.~\ref{s:quality}.   Experiment results in Sect.~\ref{s:experiments} confirm our findings. %Sect.~\ref{s:related} discusses our conclusions and relates them to existing literature. 
Section~\ref{s:concl} concludes the paper.

\section{NOTATION AND BACKGROUND}
\label{s:background}
%!TEX root = icml.tex
The goal of supervised learning is to learn a function that maps an input to an output  using $S$ examples from a training dataset $\sS = \{(\vx^{(l)},y^{(l)}), l =1, \dots S\}$. Each example $(\vx^{(l)},y^{(l)})$ is a pair consisting of an input object $\vx^{(l)}$ and an associated target value $y^{(l)}$.
In order to find the best statistical model, ML techniques often find the set of parameters $\vw \in \R^n$ that solves the following optimization problem:
\begin{align}
\label{e:problem}
\minim_\vw  \;\; \sum_{l=1}^S f(\vw,\vx^{(l)},y^{(l)})
%& \textrm{subject to} & &  \vw \in \sW \subset \R^n \nonumber
\end{align}
where function $f(\vw,\vx^{(l)},y^{(l)})$ %: \to \R$ 
represents the error the model commits on the $l$-th element of the dataset $\sS$ when parameter vector $\vw$ is used. The objective function may also include a regularization term that enforces some ``simplicity'' (e.g., sparseness) on $\vw$; such a term is easily taken into account in our analysis. 
%Both $f$ and $C$ are often convex functions of $\vw$. 
%For example, this is the case of standard linear regression, ridge and  lasso regression, as well as Support Vector Machine methods  or logistic regression for classification. Sparse inverse covariance estimation  and  matrix completion problems also lead to convex instances of problem~\eqref{e:problem}~\cite{bubeck15}.

%Different iterative techniques have been developed to solve problem~(\ref{e:problem}) (see~\cite{bubeck15} for a nice introduction). 
%\todo{Due to growth of available data and statistical model's complexity?}
Due to increases in available data and statistical model complexity, 
%\ic{I mean statistical models are becoming more complex} 
distributed solutions are often required to determine the parameter vector in a reasonable time.
The dataset in this case is divided among $M$ workers ($\sS = \cup_{j=1}^M\sS_j$), possibly with some overlap. For simplicity, we consider that all local datasets $\sS_j$ have the same size. Problem~(\ref{e:problem}) can be restated in an equivalent form as minimization of the sum of functions local to each node:
\begin{align}
\label{e:problem2}
\minim_\vw \;\;  F(\vw)=\sum_{j=1}^M F_j(\vw),%\\
%& \minim_\vw & & F(\vw)=\frac{1}{M}\sum_{i=1}^M F_i(\vw),%\\
%& \textrm{subject to} & & \vw \in \sW\nonumber
\end{align}
where $F_j(\vw) = \frac{1}{|\sS_j|}\sum_{(\vx^{(l)},y^{(l)}) \in \sS_j} f(\vw, \vx^{(l)}, y^{(l)})$. 

%In this paper, we consider as archetype the Distributed Subgradient Method (DSM)  for non-differentiable functions proposed in~\cite{nedic09}, but most of our considerations hold for many schemes including distributed dual averaging in~\cite{duchi12} and their stochastic variants considered in~\cite{lian17nips,lian18icml}.  In DSM there is no shared parameter vector, but each worker (node in what follows) keeps updating a local version of the parameters and broadcasts its updates to a subset of nodes (its neighbors).\footnote{
%	Message compression techniques like those described in~\cite{li14osdi} have been proved to be very effective and can be applied also in this context.
%}
The distributed system can be represented by a directed \emph{dataflow graph}  $\gG = (\V,\Ed)$, where $\V = \{1, 2, \dots M\}$ is the set of nodes (the workers) and 
% $\gG = (\V,\Ed)$ be a directed \emph{dependency graph}.
 % with $\V$ as set of nodes and $\Ed$ as set of edges. 
 an edge $(i,j) \in \Ed$ indicates that, at each iteration, node~$j$ waits for updates from node $i$ for the previous iteration. We assume the graph is strongly connected.
 Let $N_j =  \{i | (i,j) \in \Ed\}$ denote the in-neighborhood of node~$j$, i.e.,~the set of predecessors of node~$j$ in $\gG$. Each node $j$ maintains a local estimate of the parameter vector $\vw_j(k)$ and broadcasts it to its successors. The local estimate is updated as follows:
\begin{equation}
\label{e:dsm}
%\vw_i(k+1) = \Pi_\sW\!\!\left(\sum_{j\in N_i \cup \{i\}} \emA_{i,j} \vw_j(k) - \eta(k)\vg_i(\vw_i(k))\right),
\vw_j(k+1) = \sum_{i\in N_j \cup \{j\}} \vw_i(k) \emA_{i,j}  - \eta(k) \vg_j(\vw_j(k)).
\end{equation}
%$\Pi_\sW$ denotes the projection operator on the feasible set $\sW$. Apart from the projection, 
The node computes a weighted average (consensus/gossip component) of the estimates of its neighbors and itself, and then corrects it taking into account a stochastic subgradient\footnote{
	Given a function $f(\vw)$, a subgradient of $f()$ in $\vw$ is a vector $\vg$, such that $f(\y)-f(\vw)\ge \vg^{\intercal}(\y - \vw)$. In general a function can have many subgradients in a point $\vw$. %, this set is called the differential of the function and is denoted as $\partial f(\vw)$. 
	When the function $f$ is differentiable in $\vw$, the only subgradient is the gradient. %, i.e. $\partial f(\vw) = \{\nabla f(\vw)\}$. 
	With some abuse of notation, we indicate \emph{a} subgradient in $\vw$ as $\vg(\vw)$, even if $\vg$ is not a function.
} 
$\vg_j (\vw_j(k))$ of  its local function, i.e.,
\[\vg_j(\vw_j(k)) = \frac{1}{B}\sum_{(\vx^{(l)},y^{(l)}) \in \xi_j(k)} \partial f(\vw_j(k), \vx^{(l)}, y^{(l)}),\]
where $\partial f(\vw, \vx^{(l)}, y^{(l)})$ denotes a subgradient of $f$ with respect to $\vw$, and $\xi_j(k)$ is a random minibatch of size $B$ drawn from $\sS_j$.
Parameter $\eta(k) > 0$ is the (potentially time-varying) learning rate. $\mA=(\emA_{i,j})$ is an $M \times M$ matrix of non-negative weights. We call $\mA$ the \emph{consensus matrix}.\footnote{
	We are describing a synchronous DSM. The consistency model could be weaker, allowing node $i$ to use older estimates from its neighbors~\citep{li14nips}. 
}

The operation of a synchronous PS or ring all-reduce is captured by~(\ref{e:dsm}) when the underlying graph $\mathcal G$ is a clique, $\mA ={\bf 1} {\bf 1}^\intercal/M$, where $\bf 1$ is the $M \times 1$  vector consisting of all ones, and $\vw_i(0)=\vw_j(0), \forall i,j\in \V$. %TensorFlow and GraphLab allow for different graph topologies and, in general, different  consensus matrices~$\mA$.

Under some standard technical conditions, \cite[Thm.~8]{nedic18} and~\cite[Thm.~2]{duchi12} conclude, respectively for Distributed Subgradient Method (DSM)
% in the one-dimensional case ($n=1$) 
and for the Dual Averaging Distributed method, that the number of iterations $K_\epsilon$ needed to approximate the minimum objective function value by the desired error $\epsilon$ is
\begin{equation}
\label{e:kepsilon}
K_\epsilon \in \bigO\!\left( \frac{1}{\epsilon^2 \gamma(\mA)}\right),
\end{equation}
where $0\le \gamma(\mA)\le 1$ is the spectral gap of the matrix $\mA$, i.e.,~the difference between the moduli of the two largest eigenvalues of $\mA$. The spectral gap quantifies 
how information flows in the network. In particular, the spectral gap is maximal for a clique with weights $A_{i,j}=1/M$. Motivated by these convergence results,
 existing theoretically-oriented literature has concluded that a more connected network topology leads to faster convergence~\citep{nedic18,duchi12}. But some recent experimental results report that consensus-based gradient methods achieve similar performance after the same number of iterations/epochs on topologies as different as rings and cliques. For example~\cite[Fig.~3]{lian17nips} shows almost overlapping training losses for different ResNet architectures trained on CIFAR-10 with up to one hundred workers. \cite[Fig.~20]{luo19}, 
 %\todo{one of the new reference is added here}
 \cite[Fig.~11 in supplementary material]{DBLP:conf/icml/KoloskovaSJ19}, and our experimental results in~Sect.~\ref{s:experiments} confirm these findings.

 \cite{lian17nips} provide a partial explanation for this insensitivity in their Corollary~2, showing that the convergence rate is topology-independent 1)~after a large number of iterations ($\bigO(M^5/\gamma(\mA)^2)$), 
 %\todo{The other reference \cite{DBLP:conf/icml/AssranLBR19} could be added here, but it has the same property as Lian's paper do that offers a loose bound on the insensitivity. }
 2)~for a vanishing learning rate, and 3)~when the functions $F_j$ are differentiable with Lipschitzian gradients. Under the additional hypothesis of strong convexity,  \cite{olshevsky19nonasymptotic} prove that topology insensitivity should manifest after $\bigO(M/\gamma(\mA)^2)$ iterations.\footnote{We provide numerical estimates for the number of iterations predicted by \citep{lian17nips} and \citep{olshevsky19nonasymptotic} in \inciteapp{a:insensitivity}.} \cite{DBLP:conf/icml/AssranLBR19} provide similar results in terms of the graph diameter and maximum degree.
 %A tighter bound in terms of the number of workers ($\bigO(M/\gamma(A)^2)$) is in  under the additional hypothesis of strong convexity.
 These results do not explain why insensitivity is often observed in practice (as shown in~\citep{lian17nips,lian18icml,luo19,DBLP:conf/icml/KoloskovaSJ19}) 1)~since the beginning of the training phase,
 %since early stages of the training process, 
 2)~with constant learning rates, and 3)~for non-differentiable machine learning models (e.g., neural networks).  In the following section, we present a refined convergence analysis that explains when and why the effect of topology on the number of iterations needed to converge is weaker than what previously predicted.

\section{ANALYSIS} 
\label{s:quality}
A less connected topology requires more iterations to achieve a given precision as indicated by (\ref{e:kepsilon}). Our detailed analysis below shows that, when consensus-based optimization methods are used for ML training, the increase in the number of iterations is much less pronounced than previous studies predict. This is due to two different effects.
%that are illustrated also through the toy example in Sect.~\ref{s:toy_example}.
First, consensus is affected only by variability in initial estimates and subgradients across nodes, and not by their absolute values. Second,  certain configurations of initial estimates and subgradients are more difficult to achieve a consensus over, and would make the training highly dependent on the topology, but they are unlikely to be obtained by randomly partitioning the dataset.% among the nodes.

%\label{s:bounds}

Let $n$ be the number of parameters of the model, and $\mW(k)$ and $\mG(k)$ be $n\times M$ matrices, whose columns are, respectively,  node estimates $\vw_1(k), \dots, \vw_M(k)$ and subgradients $\vg_1(\vw_1(k)), \dots, \vg_M(\vw_M(k))$ at the completion of iteration $k$. \Eqref{e:dsm} can be rewritten in the  form $\mW(k+1)= \mW(k) \mA - \eta(k) \mG(k)$, from which we obtain iteratively 
\begin{equation}
\label{e:from_start}
\mW(k+1) = \mW(0) \mA^{k+1}- \sum_{h=0}^k \eta(h) \mG(h) \mA^{k-h}.
\end{equation}

%\begin{align}
%& ||  \mW(k+1)  - \bar\mW(k+1)||_{\max}  \triangleq \max_{i,j} |\emW_{i,j} - 
%\bar\evw_j|\\
%	&	\le \left\Vert\mW(0) \left( \frac{\vone \vone^\intercal}{M} - \mA^{k+1} \right)\right\Vert_{\max} + \gamma \left\Vert \sum_{h=0}^k \mG(h) \left( \frac{\vone \vone^\intercal}{M} - \mA^{k-h}  \right)\right\Vert_{\max}\\
%	& 	= \left\Vert (\mW(0) - \bar\mW(0)) \left( \frac{\vone \vone^\intercal}{M} - \mA^{k+1} \right)\right\Vert_{\max} + \gamma \left\Vert \sum_{h=0}^k (\mG(h) - \bar\mG(h) ) \left( \frac{\vone \vone^\intercal}{M} - \mA^{k-h}  \right)\right\Vert_{\max}\\
%	&	 =  \max_{i} \left \lVert (\mW_{i,:}(0) - \bar\mW_{i,:}(0)) \left( \frac{\vone \vone^\intercal}{M} - \mA^{k+1} \right) \right\rVert_\infty \nonumber\\
%	& \phantom{=} + \gamma \max_{i} \left \lVert \sum_{h=0}^k (\mG_{i,:}(0) - \bar\mG_{i,:}(0)) \left( \frac{\vone \vone^\intercal}{M} - \mA^{k+1} \right) \right\rVert_\infty\\
%	& \le \max_i \left \lVert \mW_{i,:}(0) - \bar\mW_{i,:}(0) \right \rVert_2 \sigma_2(A)^{k+1} +  \gamma  \sum_{h=0}^k  \max_{i} \left  \lVert \mG_{i,:}(0) - \bar\mG_{i,:}(0) \right \rVert_{2} \sigma_2(A)^{k-h}
%\end{align}

We make the following assumptions:\footnote{
Experiments in Sect.~\ref{s:experiments} show that our conclusions hold also when these assumptions are not satisfied. 
}
%\begin{table}[!h]
%    \centering
%    \begin{tabular}{ll}
%      {\bf A1} all functions $F_i$ are convex   & {\bf A2} the set of (global) minimizers $\sW^*$ is non-empty,\\
%      & \\
%      {\bf A3} graph $\gG$ is strongly connected, & {\bf A4} matrix $\mA$ is normal and doubly stochastic\\
%      \multicolumn{2}{c}{{\bf A5} the Frobenius norm of the subgradient matrices $\mG(k)$ is bounded in expectation over the vector $\bm{\xi}$ of random minibatches drawn at the nodes, i.e., there exists~$E$, such that $\EXs{\bm{\xi}}{\norm*{\mG(k)}_F^2} \le E$.}
%    \end{tabular}
%\end{table}
\begin{description}
	\item[A1] all functions $F_i$ are convex, 
	\item[A2] the set of (global) minimizers $\sW^*$ is non-empty,
	\item[A3] graph $\gG$ is strongly connected, 
	\item[A4] matrix $\mA$ is normal (i.e.,~$\mA^\intercal \mA = \mA \mA^\intercal$) and doubly stochastic,
	\item[A5] the squared Frobenius norm of subgradient matrix $\mG(k)$ is bounded in expectation over the vector $\bm{\xi}$ of minibatches randomly drawn at nodes, i.e., there exists~$E$, such that $\EXs{\bm{\xi}}{\norm*{\mG(k)}_F^2} \le E$.
	\end{description}
Assumptions A1-A4 are standard ones in the related literature, see for example~\citep{nedic09,duchi12,nedic18}. Assumption A5 imposes a bound on the (expected) \emph{energy} of the subgradients, because  $\norm*{\mG(k)}_F^2=\sum_{j} \norm*{\vg_j(\vw_j(k))}_2^2$. In the literature it is often replaced by the stronger requirement that the norm-2 of the subgradients $\vg_j(\vw_j(k))$ 
%(the columns of $\mG(k)$) 
is bounded.
%For example, this is the case of standard linear regression, ridge and  lasso regression, as well as Support Vector Machine methods  or logistic regression for classification. Sparse inverse covariance estimation  and  matrix completion problems also lead to convex instances of problem~\eqref{e:problem}~\cite{bubeck15}.
%Assumption A5 can be formally written as $\EXs{\bm{\xi}}{\norm*{\mG(k)}_F^2} \le E$, where the expectation is over the vector $\bm{\xi}$ of random minibatches drawn at the nodes. 
%The constant $E$ is a bound of the (expected) ``energy'' of the subgradients, because  $\norm*{\mG(k)}_F^2=\sum_{j} \norm*{\vg_j(\vw_j(k))}_2^2$. 
Let $\Delta \mG(k)$ denote the matrix $\mG(k) - \mG(k) \vone \vone^\intercal/{M}$, whose column $j$ is the difference between  subgradient $\vg_j(\vw_j(k))$ and the average of subgradients $ \sum_{j=1}^M \vg_j(\vw_j(k))/M$. %It holds $\norm*{\Delta \mG(k)}_F^2 = \norm*{\mG(k)}_F^2 - \norm*{\mG(k) \vone \vone^\intercal/{M}}_F^2 \le  \norm*{\mG(k)}_F^2$. 
%Then, A5 implies that $\norm*{\Delta \mG(k)}_F^2$ is bounded in expectation too, i.e.,~it exists $\Esp{}\le E$ such that $\EXs{\bm{\xi}}{\norm*{\Delta \mG(k)}_F^2} \le \Esp{}$. Then also the energy of the variability of the subgradients is bounded.
$\norm*{\Delta\mG(k)}_F^2$ captures the variability in the subgradients.
Assumption A5 also implies that there exist two constants $\Esp{}\le E$ and $H \le \sqrt{E}$ such that
\[\EXs{\bm{\xi}}{\norm*{\Delta \mG(k)}_F^2} \le \Esp{}, \;\;\; \norm*{\EXs{\bm{\xi}}{\mG(k)}}_F \le H.\]
%where $\Delta \mG(k)$ denotes the matrix $\mG(k) - \mG(k) \vone \vone^\intercal/{M}$, whose column $j$ is the difference between  subgradient $\vg_j(\vw_j(k))$ and the average subgradient $ \sum_{j=1}^M \vg_j(\vw_j(k))/M$.
Similarly, let $R$ denote the energy of the initial parameter vectors (or an upper bound for it),  i.e.,~$R\triangleq\norm*{\mW(0)}_F^2$. We also denote by $\Rsp$   the energy for the difference matrix $\Delta \mW(0)\triangleq \mW(0) - \mW(0) \vone \vone^ \intercal/M$, i.e.,~ $\Rsp \triangleq \norm*{\Delta \mW(0)}_F^2$. $\Rsp$ captures the variability in initial estimates. It holds $\Rsp \le R$. 

Because of Assumption~A4, the consensus matrix has a spectral decomposition with orthogonal projectors $\mA = \sum_{q=1}^Q \lambda_q \mP_q,$
where $\lambda_1, \dots, \lambda_Q$ are the $Q\le M$ distinct eigenvalues of $\mA$,  $\mP_q$ is the orthogonal projector onto the nullspace of $\mA - \lambda_q \mI$ along the range of $\mA - \lambda_q \mI$. We assume that the eigenvalues are ordered so that $|\lambda_1|\ge |\lambda_2| \ge \dots \ge |\lambda_Q|$. Assumptions~A3 and A4 imply that $\lambda_1=1$, and $|\lambda_2|<1$
%, and $\mP_1 = \vone \vone^\intercal /M$ 
\citeapp{a:linear_algebra}. 
Finally, we define
\begin{equation}
\label{e:alpha1}
\alpha \triangleq \begin{cases}
     \sqrt{\sum_{q=2}^Q e_q \left| \frac{\lambda_q}{\lambda_2} \right|^{2}}, & \textrm{if }\lambda_2 \neq 0,\\
    1, & \textrm{otherwise.}
    \end{cases}
\end{equation}
where $e_q$ is an upper-bound for the normalized fraction of energy $\EXs{\bm \xi}{\norm*{\Delta \mG(k)}_F^2}$ in the subspace defined by projector $\mP_q$ \citeapp{a:convergence}. The quantity $\alpha$ can be interpreted as an effective bound for the fraction of the energy $\Esp{}$ that falls in the subspace relative to the second largest eigenvalue $\lambda_2$.

We are now ready to introduce our main convergence result. We state it for the average model over nodes and time, i.e.,~for $\hat{\overline \vw}(K-1) \triangleq \frac{1}{K} \sum_{k=0}^{K-1} \frac{1}{M} \sum_{i=1}^M  \vw_i(k)$. 
We have also derived a similar bound for the local time-average model at each node, i.e.,~for $\hat \vw_i(K-1)\triangleq \frac{1}{K} \sum_{k=0}^{K-1}   \vw_i(k)$ \citeapp{a:conv_local_estimate}.\footnote{For subgradient methods, convergence results are usually for the time-average model.}
\begin{prop}
\label{p:new_bound}
Under assumptions A1-A5 and that~a constant learning rate $\eta(k)=\eta$ is used, an upper bound for the objective value at the end of the \mbox{($K-1$)th} iteration is given by:
\begin{align}
\hspace{-0.9cm}& \EX{F\!\left(\hat  {\overline \vw}(K-1) \right)}- F^*  \le   \frac{M}{2 \eta {K}} {\dist{\hat {\overline \vw}(0)} {\sW^*}}^2 + \frac{\eta E}{2} \nonumber\\
    & \phantom{123} + 2 H \sqrt{\Rsp}\frac{ \sqrt{M} }{K} \frac{1 - |\lambda_2|^{K}}{1- |\lambda_2|} \nonumber \\
    & \phantom{123} + 2 \eta H \sqrt{\Esp{}} \left( (1-\alpha) \frac{K-1}{K} \right.\label{e:new_bound}\\
    & \phantom{123+ 2 \eta H \sqrt{\Esp{}}\left(\right.}\left. + \frac{ \alpha}{1- |\lambda_2|} \left( 1- \frac{1}{K} \frac{1 - |\lambda_2|^{K}}{1- |\lambda_2|} \right)\right).
 \nonumber
\end{align}
\end{prop}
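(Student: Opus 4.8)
The plan is to reduce the nodes-and-time average bound to a one-dimensional subgradient recursion for the spatial average $\overline{\vw}(k)\triangleq\frac1M\sum_i\vw_i(k)=\frac1M\mW(k)\vone$, and to control separately the disagreement of the local estimates around it. Because $\mA$ is doubly stochastic (A4), $\mA\vone=\vone$, so right-multiplying $\mW(k+1)=\mW(k)\mA-\eta\mG(k)$ by $\vone/M$ collapses the gossip step and yields the exact \emph{centralized} recursion $\overline{\vw}(k+1)=\overline{\vw}(k)-\eta\,\overline{\vg}(k)$ with $\overline{\vg}(k)=\frac1M\mG(k)\vone$. Since $F$ is convex (A1), Jensen gives $F(\hat{\overline\vw}(K-1))\le\frac1K\sum_{k=0}^{K-1}F(\overline\vw(k))$, so it suffices to bound the time average of $F(\overline\vw(k))-F^*$.

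First I would run the textbook subgradient expansion: fixing $\vw^*\in\sW^*$ (A2) attaining $\dist{\overline\vw(0)}{\sW^*}$, I expand $\norm*{\overline\vw(k+1)-\vw^*}_2^2=\norm*{\overline\vw(k)-\vw^*}_2^2-2\eta\,\overline\vg(k)^\intercal(\overline\vw(k)-\vw^*)+\eta^2\norm*{\overline\vg(k)}_2^2$. To lower-bound the inner product in terms of $F(\overline\vw(k))-F^*$, I split $\overline\vw(k)-\vw^*=(\vw_j(k)-\vw^*)+(\overline\vw(k)-\vw_j(k))$ inside $\overline\vg(k)^\intercal(\overline\vw(k)-\vw^*)=\frac1M\sum_j\vg_j(\vw_j(k))^\intercal(\overline\vw(k)-\vw^*)$ and apply the subgradient inequality for each convex $F_j$ at $\vw_j(k)$. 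This produces $\frac1M(F(\overline\vw(k))-F^*)$ plus consensus-error terms of the form $\frac1M\Tr(\cdot^\intercal\Delta\mW(k))$, where $\Delta\mW(k)=\mW(k)(\mI-\vone\vone^\intercal/M)$; crucially, since the columns of $\Delta\mW(k)$ sum to zero, only the \emph{deviation} $\Delta\mG(k)$ of the subgradients (and of the subgradients of $F_j$ at $\overline\vw(k)$) enters, not their absolute values, which is the refinement announced before the statement. Telescoping, dividing by $2\eta K$, and dropping the nonpositive $-\norm*{\overline\vw(K)-\vw^*}_2^2$ gives the initialization term $\frac{M}{2\eta K}\dist{\hat{\overline\vw}(0)}{\sW^*}^2$; bounding $\norm*{\overline\vg(k)}_2^2\le\frac1M\norm*{\mG(k)}_F^2$ by Cauchy–Schwarz and invoking A5 gives $\frac{\eta E}{2}$.

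The heart of the proof is bounding $\EXs{\bm\xi}{\norm*{\Delta\mW(k)}_F}$, where the spectral decomposition enters. Right-multiplying (\ref{e:from_start}) by $\mI-\mP_1$ and using $\mA^m(\mI-\mP_1)=\sum_{q\ge2}\lambda_q^m\mP_q$ (with $\mP_1=\vone\vone^\intercal/M$, $\lambda_1=1$, $|\lambda_2|<1$) gives $\Delta\mW(k)=\sum_{q\ge2}\lambda_q^k\mW(0)\mP_q-\eta\sum_{h=0}^{k-1}\sum_{q\ge2}\lambda_q^{k-1-h}\mG(h)\mP_q$. Normality (A4) makes the projectors mutually orthogonal, so the squared Frobenius norm splits across eigenspaces: the initial block contributes $\sum_{q\ge2}|\lambda_q|^{2k}\norm*{\mW(0)\mP_q}_F^2\le|\lambda_2|^{2k}\Rsp$, while the subgradient block at lag $m=k-1-h$ contributes $\sum_{q\ge2}|\lambda_q|^{2m}\norm*{\Delta\mG(h)\mP_q}_F^2$. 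Taking expectations, factoring out $|\lambda_2|^{2m}$, and using the energy-fraction bounds $e_q$ of (\ref{e:alpha1}) with $|\lambda_q/\lambda_2|^{2m}\le|\lambda_q/\lambda_2|^2$ for $m\ge1$ turns this block into $\alpha^2|\lambda_2|^{2m}\Esp{}$, whereas the freshest term $m=0$ collapses to $\norm*{\Delta\mG(k-1)}_F^2\le\Esp{}$ with coefficient $1$ instead of $\alpha$ — exactly the origin of the $(1-\alpha)$ split. Summing the geometric series in $m$ gives $\EXs{\bm\xi}{\norm*{\Delta\mW(k)}_F}\lesssim|\lambda_2|^k\sqrt{\Rsp}+\eta\sqrt{\Esp{}}\big[(1-\alpha)\1[k\ge1]+\alpha\tfrac{1-|\lambda_2|^k}{1-|\lambda_2|}\big]$.

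Finally I would substitute this into the time-averaged consensus error. To pass the expectation through $\Tr(\Delta\mG(k)^\intercal\Delta\mW(k))$, whose factors are dependent since $\Delta\mW(k)$ carries the history $\mG(0),\dots,\mG(k-1)$, I would condition on that history and use the tower property: $\Delta\mW(k)$ is then deterministic and the expectation acts only on $\Delta\mG(k)$, so Cauchy–Schwarz leaves $\norm*{\EXs{\bm\xi}{\mG(k)}}_F\le H$ rather than $\sqrt E$, which is what makes $H$ multiply the two consensus terms. Averaging over $k$ and summing $\frac1K\sum_k|\lambda_2|^k=\frac1K\frac{1-|\lambda_2|^K}{1-|\lambda_2|}$ and $\frac1K\sum_k\frac{1-|\lambda_2|^k}{1-|\lambda_2|}$ reproduces the last two lines of (\ref{e:new_bound}). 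I expect the main obstacle to be precisely this spectral step: getting the exponent bookkeeping right so the lag-dependent factors $|\lambda_q/\lambda_2|^{2m}$ collapse to the single constant $\alpha$ while the freshest subgradient is handled separately, and simultaneously keeping the conditioning argument clean so that $H$ (not $\sqrt E$) multiplies the disagreement. The remaining geometric-sum algebra and the dimensional $\sqrt M$ constants are routine by comparison.
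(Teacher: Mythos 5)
Your proposal is correct and follows essentially the same route as the paper's own proof: a Nedi\'c--Ozdaglar-style recursion for the node average $\overline\vw(k)$ (the paper's Lemma on $\dist{\overline \vw(k)}{\sW^*}$), combined with a spectral-projector bound on the consensus error $\|\Delta\mW(k)\|_F$ in which the energy fractions $e_q$ produce exactly the $(1-\alpha)$/$\alpha$ split from the lag-zero term, and a conditioning-on-history step so that $H$ rather than $\sqrt{E}$ multiplies the disagreement, followed by convexity, telescoping, and the geometric sums. Your only deviations are refinements of detail --- your initial-condition term avoids the extra $\sqrt{M}$ the paper incurs via sub-multiplicativity, and you make explicit the tower-property argument the paper leaves implicit --- both of which are compatible with (indeed slightly strengthen) the stated bound.
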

%\begin{prop}
%\label{p:new_bound}
%Under assumptions A1-A5 and that~a constant learning rate $\eta(k)=\eta_0/\sqrt{K}$ is used, an upper bound for the objective value at the end of the \mbox{($K-1$)th} iteration is given, for each $i$, by:
%\begin{align}
%\hspace{-0.9cm}& \EX{F\!\left(\hat    \vw_i(K-1) \right)}- F^*  \le   \frac{M}{2 \eta_0 \sqrt{K}} {\dist{\overline \vw(0)} {\sW^*}}^2 + \frac{\eta_0 E}{2 M \sqrt{K}} + H \frac{3 M \sqrt{\Rsp}}{K} \frac{1 - |\lambda_2|^{K}}{1- |\lambda_2|} \nonumber\\
%    & \phantom{\le}+ 3\sqrt{M}H \sqrt{\Esp{}}\frac{\eta_0}{\sqrt{K}}\left(     
%       (1-\alpha(1)) \frac{K-1}{K}
%    \phantom{\le}+ \frac{ \alpha(1)}{1- |\lambda_2|} \left( 1- \frac{1}{K} \frac{1 - |\lambda_2|^{K}}{1- |\lambda_2|} \right)\right)
%\label{e:new_bound}	 
%\end{align}
%\begin{align}
%\hspace{-0.9cm}& F\!\left(\frac{1}{K} \sum_{k=0}^{K-1}   \vw_i(k) \right)- F^*  \le \frac{M}{2 \eta_0 \sqrt K} {\dist{\bar \vw(0)} {\sW^*}}^2 \nonumber\\ 
%\hspace{-0.9cm}		 			 & + \frac{\eta_0 L^2}{2 \sqrt K} + 2 L M n^{1/2} \Bigg(  \frac{\Rsp}{K} \frac{1 - |\lambda_2|^{K}}{1- |\lambda_2|}    \phantom{\Big)}\nonumber\\
%		\phantom{Bigg(}\hspace{-0.9cm} & \! +\frac{\eta_0 \Lsp}{\sqrt K} \left(\! (1\!-\!\alpha(1)) \mathbf 1_{K>1}+ \!\frac{   \alpha(1)}{1- |\lambda_2|} \left(\! 1\!- \! \frac{1 - |\lambda_2|^{K}}{K (1- |\lambda_2|)}\! \right)\!\right)\! \Bigg),
%\end{align}\label{e:new_bound}
%for each node $i$.
Here, $\dist{\vx} {\sW^*}$ denotes the Euclidean distance between vector $\vx$ and set of global minimizers $\sW^*$. 
%$\overline \vw(0)$ denotes the average of the initial parameter vectors.
The proof is in \inciteapp{a:proof_new_bound}.
 The first two terms on the right hand side of (\ref{e:new_bound}) also appear  when studying the convergence of centralized subgradient methods. The last two terms appear because of the distributed consensus component of the algorithm and depend on $|\lambda_2|<1$. We observe that $1-|\lambda_2|$ is the spectral gap of $\mA$. It measures how well connected the graph is. In particular, the larger the spectral gap (the smaller $\lambda_2$), the better the connectivity and the smaller the bound in \eqref{e:new_bound}.

From  Proposition~\ref{p:new_bound}, we can derive a looser bound analogous to the bound for DSM in~\citep{nedic09}. In fact, observing that $\Rsp \le R$, $\Esp{} \le E$, $H\le \sqrt{E}$, and $0\le \alpha\le1$, we can prove~{\citeapp{a:proof_classic_bound}}:
%Because $\Rsp < R$, $\Lsp <L$ and $0\le \alpha(1)\le1$, % and $\frac{1 - |\lambda_2|^{K}}{1- |\lambda_2|} \ge 1$ for $K>1$, 
%the following bound follows from Proposition~\ref{p:new_bound}~{\citeapp{a:proof_classic_bound}}. 
 \begin{cor}
\label{p:classic_bound}
Under assumptions A1-A5 and that constant learning rate $\eta(k)=\eta$ is used, an upper bound for the objective value at the end of the \mbox{($K-1$)th} iteration is given by:
\begin{align}
&\EX{F\!\left(\hat {\overline \vw}(K-1) \right)}  - F^*  \le   \frac{M}{2 \eta {K}} {\dist{\hat{\overline \vw}(0)} {\sW^*}}^2 + \frac{\eta E}{2 }\nonumber\\
    &\phantom{12345} + 2 \sqrt{E} \sqrt{R} \frac{\sqrt{M}}{K} \frac{1 - |\lambda_2|^{K}}{1- |\lambda_2|} \nonumber\\
    &\phantom{12345} +   2 \eta  E  \frac{1}{1- |\lambda_2|} \left( 1-  \frac{1}{K}\frac{1 - |\lambda_2|^{K}}{1- |\lambda_2|} \right).
\label{e:classic_bound}	 
\end{align}
%\begin{align}
%\EX{F\!\left(\hat    \vw_i(K-1) \right)}- F^*  &\le   \frac{M}{2 \eta_0 \sqrt{K}} {\dist{\overline \vw(0)} {\sW^*}}^2 + \frac{\eta_0 E}{2 M \sqrt{K}} + \sqrt{E} \frac{3 M \sqrt{R}}{K} \frac{1 - |\lambda_2|^{K}}{1- |\lambda_2|} \nonumber\\
%    & \phantom{\le}+ 3\sqrt{M} E\frac{\eta_0}{\sqrt{K}}\frac{  1 }{1- |\lambda_2|} \left( 1-  \frac{1}{K}\frac{1 - |\lambda_2|^{K}}{1- |\lambda_2|} \right).
%\label{e:classic_bound}	 
%\end{align}
In particular, if workers compute full-batch subgradients and the 2-norm of subgradients of functions $F_i$ is bounded by a constant $L$, we obtain: 
\begin{align}
&F\!\left(\hat {\overline \vw}(K-1) \right)- F^*  \le   \frac{M}{2 \eta {K}} {\dist{\hat{\overline \vw}(0)} {\sW^*}}^2 + \frac{\eta M L^ 2}{2 }\nonumber\\
    &\phantom{12345} + 2  L \sqrt{R} \frac{M}{K} \frac{1 - |\lambda_2|^{K}}{1- |\lambda_2|} \nonumber\\
    &\phantom{12345} +   2 \eta  L^2  \frac{M}{1- |\lambda_2|} \left( 1-  \frac{1}{K}\frac{1 - |\lambda_2|^{K}}{1- |\lambda_2|} \right).
\label{e:classic_bound2}	 
\end{align}
%\begin{align}
%\EX{F\!\left(\hat    \vw_i(K-1) \right)}- F^*  &\le   \frac{M}{2 \eta_0 \sqrt{K}} {\dist{\overline \vw(0)} {\sW^*}}^2 + \frac{\eta_0 E}{2 M \sqrt{K}} + \sqrt{E} \frac{3 M \sqrt{R}}{K} \frac{1 - |\lambda_2|^{K}}{1- |\lambda_2|} \nonumber\\
%    & \phantom{\le}+ 3\sqrt{M} E\frac{\eta_0}{\sqrt{K}}\frac{  1 }{1- |\lambda_2|} \left( 1-  \frac{1}{K}\frac{1 - |\lambda_2|^{K}}{1- |\lambda_2|} \right).
%\label{e:classic_bound}	 
%\end{align}
%\begin{align}
%\hspace{-0.9cm}& F\!\left(\frac{1}{K} \sum_{k=0}^{K-1}   \vw_i(k) \right)- F^*  \le \frac{M}{2 \eta_0 \sqrt K} {\dist{\bar \vw(0)} {\sW^*}}^2 \nonumber\\ 
%\hspace{-0.9cm}		 			 & + \frac{\eta_0 L^2}{2 \sqrt K} + 2 L M n^{1/2} \Bigg(  \frac{R}{K} \frac{1 - |\lambda_2|^{K}}{1- |\lambda_2|}  \phantom{\Big)}\nonumber\\
%		\phantom{Bigg(}\hspace{-0.9cm} &+ \frac{\eta_0 L}{\sqrt K} \frac{  1 }{1- |\lambda_2|} \left( 1-  \frac{1 - |\lambda_2|^{K}}{K(1- |\lambda_2|)} \right) \Bigg) .\label{e:classic_bound}
%\end{align}
%for each node $i$.
\end{cor}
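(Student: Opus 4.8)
The plan is to derive both displays directly from Proposition~\ref{p:new_bound} by relaxing each data-dependent constant on the right-hand side of \eqref{e:new_bound} to its crudest admissible value, exactly as announced before the statement: $\Rsp \le R$, $\Esp{} \le E$, $H \le \sqrt{E}$, and $0 \le \alpha \le 1$. Since every term being relaxed is nonnegative, each substitution can only increase the bound, so the inequality direction is preserved. First I would note that the first two terms of \eqref{e:new_bound} contain none of these constants and are copied verbatim into \eqref{e:classic_bound}. The third term $2 H \sqrt{\Rsp}\,\tfrac{\sqrt M}{K}\tfrac{1-|\lambda_2|^K}{1-|\lambda_2|}$ is handled by substituting $H \le \sqrt{E}$ and $\sqrt{\Rsp} \le \sqrt R$, yielding the third term of \eqref{e:classic_bound} immediately.

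The only non-mechanical step is the fourth term. After replacing $H\sqrt{\Esp{}}$ by $E$ (using $H\le\sqrt E$ and $\Esp{}\le E$), it reads $2\eta E\,g(\alpha)$ with
\[
g(\alpha) = (1-\alpha)\,\frac{K-1}{K} + \alpha\,B, \qquad B \triangleq \frac{1}{1-|\lambda_2|}\left(1 - \frac{1}{K}\frac{1-|\lambda_2|^K}{1-|\lambda_2|}\right),
\]
which is affine in $\alpha$. On $\alpha\in[0,1]$ it is maximized at $\alpha=1$ precisely when $g$ is nondecreasing, i.e.\ when $\tfrac{K-1}{K}\le B$. Establishing this inequality is the crux, and I would prove it through the geometric-series identity $\tfrac{1-x^K}{1-x}=\sum_{i=0}^{K-1}x^i$ with $x=|\lambda_2|\in[0,1)$: writing $1-\tfrac1K\sum_{i=0}^{K-1}x^i=\tfrac1K\sum_{i=0}^{K-1}(1-x^i)$ and using $\tfrac{1-x^i}{1-x}=\sum_{j=0}^{i-1}x^j\ge 1$ for $i\ge1$, one obtains $B=\tfrac1K\sum_{i=1}^{K-1}\sum_{j=0}^{i-1}x^j\ge \tfrac{K-1}{K}$. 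Hence $g(\alpha)\le g(1)=B$ for every $\alpha\in[0,1]$, so the fourth term of \eqref{e:new_bound} is dominated by $2\eta E\,B$, which is exactly the fourth term of \eqref{e:classic_bound}. Collecting the four bounds proves \eqref{e:classic_bound}.

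For the specialized display \eqref{e:classic_bound2}, I would instantiate \eqref{e:classic_bound} under the two extra hypotheses. Full-batch subgradients remove the minibatch randomness $\bm\xi$, so the expectation $\EX{\cdot}$ is vacuous and can be dropped. With each $\vg_j$ a subgradient of $F_j$ of 2-norm at most $L$, Assumption~A5 holds with $E=ML^2$, since $\norm*{\mG(k)}_F^2=\sum_{j=1}^M\norm*{\vg_j(\vw_j(k))}_2^2\le ML^2$. Substituting $E=ML^2$ into \eqref{e:classic_bound} — noting in particular that $\sqrt E=\sqrt M\,L$ in the third term — reproduces \eqref{e:classic_bound2} term by term.

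The main obstacle is the affine-in-$\alpha$ comparison of the fourth term; everything else is a direct substitution. The comparison is subtle only because replacing $\alpha$ by its upper bound $1$ must be shown to \emph{increase} the bound, and this monotonicity rests on the geometric-series estimate $B\ge\tfrac{K-1}{K}$ rather than being automatic.
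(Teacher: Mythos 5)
Your proposal is correct and follows essentially the same route as the paper: relax $\Rsp \le R$, $\Esp{} \le E$, $H \le \sqrt{E}$, and then show that the $\alpha$-dependent fourth term is maximized at $\alpha = 1$, with the specialized bound \eqref{e:classic_bound2} obtained by substituting $E = ML^2$. The only cosmetic difference is in that last comparison: the paper writes both sides as averages over $k$ of $\frac{1-|\lambda_2|^{k}}{1-|\lambda_2|}$ and proves the inequality termwise using $\frac{1-|\lambda_2|^{k}}{1-|\lambda_2|} \ge 1$ for $k\ge 1$, whereas you aggregate first and invoke affinity in $\alpha$ together with $B \ge \frac{K-1}{K}$ --- both arguments rest on the identical geometric-series estimate, so the reorganization is immaterial.
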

%Inequality~(\ref{e:classic_bound}) shows that the error depends on the 2-norms of the initial parameter estimates ($R$) as well as on the subgradient ($L$). This kind of relation corresponds to classic bounds for DSM and can, for example, be derived as in~\cite{nedic09}, assuming the consensus matrix is constant and normal. 

When $K$ is large enough, 
%By looking at~\eqref{e:classic_bound2} and observing that $\eta_0$ is usually proportional to $1/L$, one could conclude that 
the fourth term  in~\eqref{e:classic_bound} and~\eqref{e:classic_bound2} is dominant, so that the error is essentially proportional to $1/(1-|\lambda_2|)$. Note  that the constant multiplying $1/(1-|\lambda_2|)$  in~\eqref{e:classic_bound} is larger than the corresponding one in~\eqref{e:new_bound} by a factor
\begin{equation}
\label{e:beta}
\beta \triangleq \frac{1}{\alpha} \times \frac{E}{\sqrt{\Esp{}} H}
\end{equation}
 The value $\beta$ roughly indicates how much looser bound~\eqref{e:classic_bound} is in comparison to bound~\eqref{e:new_bound}.

Existing theoretical works like~\citep{nedic09,duchi12,nedic18} derived bounds similar to~\eqref{e:classic_bound2} and concluded then that one should select the learning rate proportional to $\sqrt{1-|\lambda_2|}$ to reduce the effect of topology. In particular, one obtains (\ref{e:kepsilon}) when $\eta= \eta_0 \sqrt{(1-|\lambda_2|)/K}$.
%\todoi{Chuan: In the third term of (\ref{e:classic_bound}), we replace $\sqrt{E*R}$ by $H\times \sqrt{\Rsp}$ right? }
Our bound~(\ref{e:new_bound}) improves bound (\ref{e:classic_bound}) by replacing $R$ in the third terms of (\ref{e:classic_bound}) by the smaller value $\Rsp$, and $\sqrt{E}$ in the third and fourth terms  by the smaller values $H$ and $\sqrt{\Esp{}}$, and introducing the new coefficient $\alpha$.  We  qualitatively describe the  effect of these constants.

\paragraph{$\mathbf\Rsp$} Bound~\eqref{e:new_bound} shows that the norm of the initial estimates ($R$) does not really matter, but rather variability among workers does. In particular, for ML computation we can make $\vw_i(0)=\vw_j(0)$ for each $i$ and $j$, and then  $\Rsp =0$, so that the third term in the RHS of~(\ref{e:new_bound}) vanishes. 

\paragraph{$\mathbf{E_{\textrm{sp}}, H}$} %, $\mathbf{H}$}
%We observe that the constant multiplying $1/(1-|\lambda_2|)$ in the fourth term in~\eqref{e:new_bound} is smaller than the corresponding one in~\eqref{e:classic_bound} by a factor $\beta \triangleq \sqrt{E/\Esp{}}\times \sqrt{E}/H \times 1/\alpha$.
For $\Esp{}$,  considerations similar to those applying to $\Rsp$ hold. What matters is the variability of the subgradients. 
Assume that the dataset is replicated at each node and each node computes the subgradient over the full batch ($B=S$). In this case all subgradients would be equal, and $\norm*{\Delta \mG(k)}=0$,  $\Esp{}=0$, and the fourth term would also vanish. This corresponds to the fact that, when initial parameter vectors, as well as local functions, are the same, the parameter vectors are equal at any iteration $k$ and the system evolves exactly as it would under a centralized subgradient method. %POSSIBLE REMOVAL
In general, local subgradients can be expected to be close (and $E \gg \Esp{}$), if 1)~local datasets are representative of the entire dataset (the dataset has been randomly split and $|\sS_j|\gg M$), and 2)~large batch sizes are used. On the other hand, when batch sizes are very small, one expects stochastic subgradients to be very noisy, and as a consequence the energy of the matrix $\mG$ to be much larger than the energy of $\EXs{\bf \xi}{\mG}$,  so that $\sqrt{E} \gg H$. In both cases, $E/(\sqrt{\Esp{}} H)$ is large (in the first case because $\sqrt E \gg \sqrt{\Esp{}}$, and in the second because $\sqrt{E} \gg H$). We quantify these effects below.

\paragraph{$\bm{\alpha}$} From~\eqref{e:from_start} we see that the effect of the subgradients is modulated by $\mA^{k-h}$, that equals $\sum_{q=1}^Q \lambda_q^{k-h} \mP_q$. The energy of the subgradients is spread across the different  
subspaces defined by the eigenvectors of $\mA$. The classic bound~\eqref{e:classic_bound} implicitly assumes that all energy falls in the subspace corresponding to $\lambda_2$ (this occurs if the row of the matrices $\mG(k)$ are aligned with the second eigenvector). In reality, on average each subspace will only get $1/Q$-th of the total energy ($e_q \approx 1/Q$), and the energy in other subspaces will be dissipated faster than what happens for the subspace corresponding to~$\lambda_2$. $\alpha\le 1$ quantifies this effect.
%and we can approximate it as follows
%\begin{equation}
    %\hat \alpha(1) \triangleq  \sqrt{\sum_{q=2}^Q \frac{1}{Q} \left| \frac{\lambda_q}{\lambda_2} \right|^{2}}. \label{e:alpha1_ext}
%\end{equation}

A toy example in \inciteapp{a:toy_example} illustrates qualitatively these effects. Here we provide estimates for the expected values of $E$, $\Esp{}$, and $H$ over all possible ways to distribute the dataset $\sS$ randomly across the nodes. We reason as follows. For a given parameter vector $\vw$, consider the set of subgradients at all dataset points, i.e.,~$\cup_{(\vx^{(l)}, y^{(l)})\in \sS}\{\partial f(\vw, \vx^{(l)}, y^{(l)})\}$. The average subgradient over all  datapoints is $\partial F(\vw)$. 
%Let $\bm{\sigma^2}(\vw)$ denote a vector, whose $i$-th component is the variance of the $i$-th component of the subgradients over the whole dataset. In what follows, we will need to refer to the norm-2 of the two vectors $\partial F(\vw)$ and $\bm{\sigma^2}(\vw)$, for which we will omit the dependency on $\vw$ writing simply $\norm*{\partial F}_2^2$ and $\norm*{\bm{\sigma^2}}_2^2$.
Let $\sigma^2(\vw)$ denote the trace of the covariance matrix of all subgradients. 
%PUT BACK IN ARXIV
$\sigma^2(\vw)$ then equals the sum of the variances of all the components of the subgradients.
%\todo{a little bit confusing for me this sentence. Maybe is better we just use the definition below that: We denote by $\sS_C$ the expanded dataset that every data point in $\sS$ is replicated $C$ times.
We denote by $\sS_C$ the expanded dataset where each datapoint is replicated $C$ times with $1\le C\le M$.
%We now distribute the dataset across $M$ nodes replicating each point $C$ times (where $C\in \{1, 2, \dots M\}$). We denote by $\sS_C$ the expanded dataset. 
The dataset  $\sS_C$ is split across the $M$ nodes.
Each node selects a random minibatch from its local dataset and we denote by $\mG$ the corresponding subgradient matrix. 
\begin{prop}
\label{p:estimates}
Consider a uniform random permutation $\pi$ of~~$\sS_C$ with the constraint that  $C$ copies of the same point are placed at $C$ different nodes. The following holds
\begin{align}
 &\EXs{\pi}{\EXs{\bm\xi}{\norm*{\mG}_F^2}} = M \left( \norm*{\partial F}_2^2+ \frac{S-B}{B(S-1)}\sigma^2\right), \nonumber\\
 &\EXs{\pi}{\EXs{\bm\xi}{\norm*{\Delta\mG}_F^2}} = \sigma^2  \frac{MC(S-B) -CS +MB}{CB(S-1)}, \nonumber \\
&\EXs{\pi}{\norm*{\EXs{\bm\xi}{\mG}}_F} \label{e:estimates}\\
& \phantom{\EXs{\pi}{}}\in \left[ \sqrt{M} \norm*{\partial F}_2, \sqrt{M} \sqrt{\norm*{\partial F}_2^2 + \frac{M-C}{C(S-1)} \sigma^2} \;\right]. \nonumber 
 \end{align}
% \begin{align}
% &\EXs{\pi}{\EXs{\bm\xi}{\norm*{\mG}_F^2}} = M \left( \norm*{\partial F}_2^2+ \frac{S-B}{B(S-1)}\sigma^2\right), \label{e:estimates}\\
% &\EXs{\pi}{\EXs{\bm\xi}{\norm*{\Delta\mG}_F^2}} = M \sigma^2 \left( \frac{S-B}{B(S-1)}- \frac{CS-MB}{MCB(S-1)}\right), \nonumber \\
%&\EXs{\pi}{\norm*{\EXs{\bm\xi}{\mG}}_F}  \in \left[ \sqrt{M} \norm*{\partial F}_2, \sqrt{M} \sqrt{\norm*{\partial F}_2^2 + \frac{M-C}{C(S-1)} \sigma^2}\right]. \nonumber 
% \end{align}
\end{prop}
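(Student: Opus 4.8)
The plan is to reduce all three identities to two standard facts about simple random sampling (SRS) without replacement---unbiasedness of the sample mean and the finite-population-correction (FPC) variance formula $\Tr\Cov(\vg)=\frac{S-n}{n(S-1)}\sigma^2$ for a sample of size $n$ from a population of size $S$---together with the decomposition $\EX{\norm*{\vg}_2^2}=\norm*{\EX{\vg}}_2^2+\Tr\Cov(\vg)$. The only genuinely new computation is a cross-node covariance, which is where the replication factor $C$ enters. Throughout I write $\partial f_l$ for the subgradient at the $l$-th datapoint and $\epsilon_l=\partial f_l-\partial F$, so that $\sum_l\epsilon_l=\vzero$ and $\sum_l\norm*{\epsilon_l}_2^2=S\sigma^2$.

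First I would establish the reduction. Since $\pi$ is uniform and each node then draws its minibatch uniformly from its local dataset, for any fixed node $j$ the minibatch $\xi_j$ is, marginally over $(\pi,\bm\xi)$, an SRS of size $B$ drawn without replacement from the full dataset $\sS$: the distinct-node constraint guarantees $\xi_j$ holds $B$ distinct original points, the local dataset $\sS_j$ is by symmetry a uniform size-$CS/M$ subset of $\sS$, and a uniform sub-sample of a uniform sub-sample is again uniform. Hence each column $\vg_j$ of $\mG$ is such a sample mean, with $\EX{\vg_j}=\partial F$ and $\Tr\Cov(\vg_j)=\frac{S-B}{B(S-1)}\sigma^2$ by FPC. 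Summing $\EX{\norm*{\vg_j}_2^2}=\norm*{\partial F}_2^2+\frac{S-B}{B(S-1)}\sigma^2$ over the $M$ identical columns of $\norm*{\mG}_F^2=\sum_j\norm*{\vg_j}_2^2$ yields the first identity.

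For the second identity I would write $\norm*{\Delta\mG}_F^2=\sum_j\norm*{\vg_j}_2^2-M\norm*{\bar\vg}_2^2$ with $\bar\vg=\frac1M\sum_j\vg_j$, and expand $\Tr\Cov(\bar\vg)=\frac1{M^2}\bigl(M\,\Tr\Cov(\vg_1)+M(M-1)\,\Tr\Cov(\vg_1,\vg_2)\bigr)$ by exchangeability of the nodes, so that the whole quantity collapses to $\EX{\norm*{\Delta\mG}_F^2}=(M-1)\bigl(\Tr\Cov(\vg_1)-\Tr\Cov(\vg_1,\vg_2)\bigr)$. This isolates the single new quantity $\Tr\Cov(\vg_1,\vg_2)$, the covariance between the estimates at two distinct nodes, which I expect to be the main obstacle: it is the only step not handled by the standard SRS formulas and the only place the replication constraint must be treated carefully. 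I would evaluate it as $\EX{\langle\epsilon_A,\epsilon_B\rangle}$, where $A,B$ are the original indices of a uniformly chosen sampled point at nodes $1$ and $2$; writing this as $p_=\sum_l\norm*{\epsilon_l}_2^2+p_{\neq}\sum_{l\neq l'}\langle\epsilon_l,\epsilon_{l'}\rangle$ and using $\sum_{l\neq l'}\langle\epsilon_l,\epsilon_{l'}\rangle=-S\sigma^2$ (from $\sum_l\epsilon_l=\vzero$) reduces it to $S\sigma^2(p_=-p_{\neq})$. The combinatorial core is $p_=$, the probability that the same point is drawn at both nodes: two of its $C$ copies must land on nodes $1$ and $2$---probability $\frac{C(C-1)}{M(M-1)}$ from the distinct-node constraint---and it must be selected in both minibatches. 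Carrying this through gives $\Tr\Cov(\vg_1,\vg_2)=-\frac{M-C}{C(M-1)(S-1)}\sigma^2$; substituting and clearing denominators reproduces the stated numerator $MC(S-B)-CS+MB$.

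For the final two-sided bound, note that the inner expectation over $\bm\xi$ replaces each column by the local full-batch mean $\EXs{\bm\xi}{\vg_j}=\frac{M}{CS}\sum_{l\in\sS_j}\partial f_l$, which is the mean of an SRS of size $CS/M$ from $\sS$, so that $\EXs{\pi}{\norm*{\EXs{\bm\xi}{\vg_j}}_2^2}=\norm*{\partial F}_2^2+\frac{M-C}{C(S-1)}\sigma^2$ by FPC. The upper bound is then Jensen's inequality for the concave map $t\mapsto\sqrt t$ applied to $X=\norm*{\EXs{\bm\xi}{\mG}}_F^2$, giving $\EXs{\pi}{\sqrt X}\le\sqrt{\EXs{\pi}{X}}=\sqrt M\sqrt{\norm*{\partial F}_2^2+\frac{M-C}{C(S-1)}\sigma^2}$. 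The lower bound is the deterministic inequality $\sum_j\norm*{\EXs{\bm\xi}{\vg_j}}_2^2\ge M\norm*{\frac1M\sum_j\EXs{\bm\xi}{\vg_j}}_2^2=M\norm*{\partial F}_2^2$, where the averaged local means recover $\partial F$ exactly because every datapoint is replicated $C$ times across the nodes; taking square roots and then the expectation over $\pi$ completes the bound.
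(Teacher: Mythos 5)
Your proposal is correct and follows essentially the same route as the paper's proof: both reduce each node's minibatch (marginally over $\pi$ and $\bm\xi$) to simple random sampling without replacement from $\sS$, use the finite-population variance formula for the first identity and for the local full-batch gradients in the third, decompose $\norm*{\Delta\mG}_F^2=\norm*{\mG}_F^2-M\norm*{\bar\vg}_2^2$ for the second, isolate the cross-node term whose combinatorial core is the replica-placement probability $\tfrac{C(C-1)}{M(M-1)}$, and close the third bound with Jensen in both directions. Your only departures are cosmetic streamlinings --- packaging the cross term as $\Tr\Cov(\vg_1,\vg_2)$ via exchangeability and invoking $\sum_l\epsilon_l=\vzero$ so that only the diagonal selection probability needs explicit computation (the off-diagonal one follows from normalization), where the paper computes both selection probabilities entrywise --- and your claimed value $\Tr\Cov(\vg_1,\vg_2)=-\tfrac{M-C}{C(M-1)(S-1)}\sigma^2$ matches the paper's and yields the stated numerator.
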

We can use~\eqref{e:estimates} to study how $E$, $\Esp{}$, and $H$ vary with dataset size, batch size, and number of replicas, using the following approximations:
\begin{align}
& \widehat E  = \EXs{\pi}{\EXs{\bm\xi}{\norm*{\mG}_F^2}},\nonumber  
 \widehat{E}_{\textrm{sp}}  =  \EXs{\pi}{\EXs{\bm\xi}{\norm*{\Delta\mG}_F^2}},  \\
& \widehat H  = \sqrt{M} \sqrt{\norm*{\partial F}_2^2 + \frac{M-C}{C(S-1)} \sigma^2}.\label{e:perm_approx}
\end{align}

\begin{figure*}[ht]
    \centering
    \begin{subfigure}[b]{0.47\textwidth}
        \centering
        \includegraphics[width=0.9\textwidth]{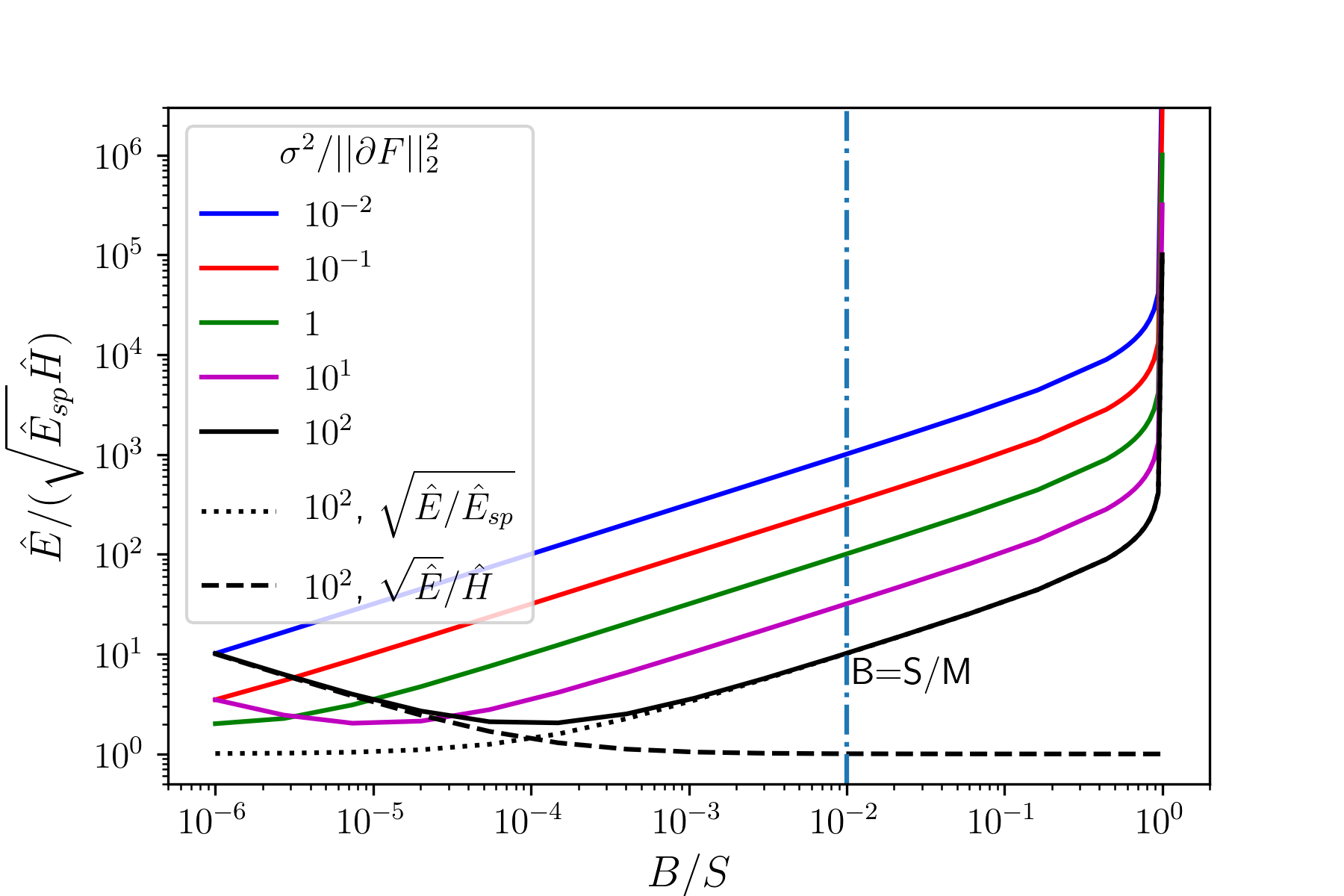}
        \caption[]{\small $C = M$}
        %\caption[]{\small Estimate of $E/(\sqrt{\Esp{}} H)$ versus the relative batch size $B/S$ for $M=100$, $S=10^6$, $C=M$, and different level of heterogeneity ($\sigma^2/\norm*{\partial F}_2^2$) of the dataset. Curves for $C=1$  are very similar, but for the fact that the batch size can scale only up to $S/M$.%The plot curves do not depend much on the replication factor $C$, but for the fact that the batch size can scale up to $C S/M$.
        %}
        \label{f:ratio}
    \end{subfigure}
    \hfill
    \begin{subfigure}[b]{0.47\textwidth}
        \centering
        \includegraphics[width=0.9\textwidth]{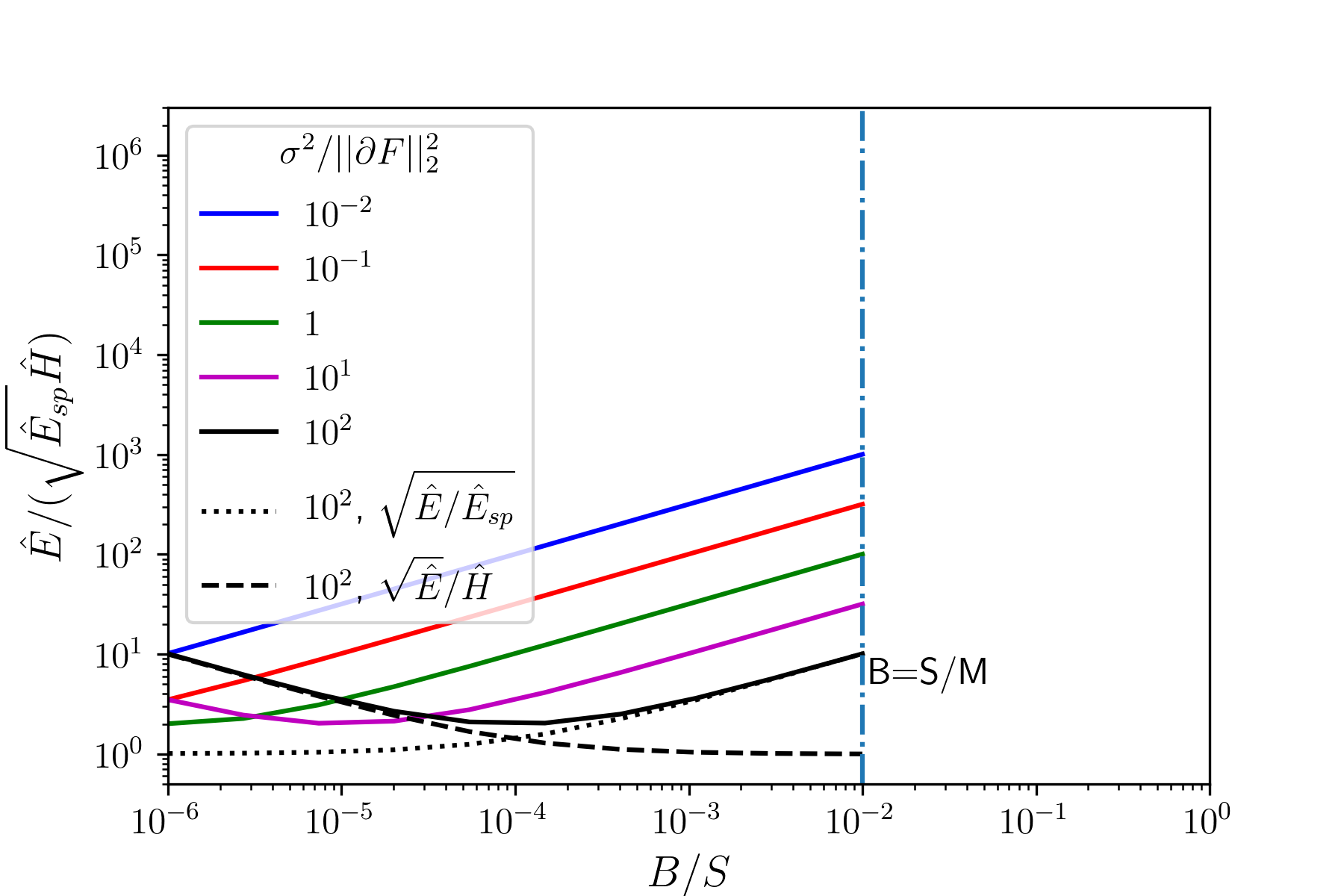}
        \caption[]{\small $C = 1$}
        %\caption[]{\small Estimate of $E/(\sqrt{\Esp{}} H)$ versus the relative batch size $B/S$ for $M=100$, $S=10^6$, $C=1$, and different level of heterogeneity ($\sigma^2/\norm*{\partial F}_2^2$) of the dataset. }
        \label{f:ratio_c1}
    \end{subfigure}
    \caption[]{\small Estimate of $E/(\sqrt{\Esp{}} H)$ versus the relative batch size $B/S$ for $M=100$, $S=10^6$ and different level of heterogeneity.}
    \label{f:ratio_total}
\end{figure*}

 Figure~\ref{f:ratio_total} illustrates the ratio $\widehat E/(\sqrt{\widehat{E}_{\textrm{sp}}} \widehat H) (=\beta \alpha)$ for a particular setting. It also highlights the two regimes discussed above:  $\beta \approx 1/\alpha \times \sqrt{E/\Esp{}}$ for large batch sizes and $\beta \approx 1/\alpha \times \sqrt{E}/H$ for small ones.
 %, where for $H$ we have considered the upper-bound in~\eqref{e:estimates}. 
 As $\beta$ indicates how much looser bound~\eqref{e:classic_bound} is in comparison to bound~\eqref{e:new_bound}, and $\beta > E/(\sqrt{\Esp{}} H)$, the figure shows that~\eqref{e:classic_bound} may indeed overestimate the effect of the  topology by many orders of magnitudes.
The comparison of Fig.~\ref{f:ratio_c1} and Fig.~\ref{f:ratio} shows that $\frac{E}{\sqrt{\Esp{}} H}$ (and then $\beta$)  does not depend much on the replication factor $C$, but for the fact that the batch size can scale up to $C S/M$.

%\subsection{A toy example}
%\label{s:toy_example}
%\input{toy_example.tex}

%\section{Topology and throughput} 
%\label{s:speed}
%\input{speed.tex}

\section{EXPERIMENTS}
\label{s:experiments}

\begin{table*}[t]
\caption{Empirical estimation of $E$, $E_{\textrm{sp}}$, $H$, $\alpha$ on different ML problems and comparison of their joint effect ($\beta$) with the value $\widehat \beta$ predicted through~\eqref{e:perm_approx}. Number of iterations by which training losses for the ring and the clique differ by $4\%$, $10\%$, as predicted by the old bound~\eqref{e:classic_bound}, $k'_o$, by the new one~\eqref{e:new_bound}, $k'_n$, and as measured in the experiment, $k'$. When a value exceeds the total number of iterations we ran (respectively 1200 for CT, 1190 for MNIST, and 1040 for CIFAR-10), we simply indicate it as $\infty$.}
\label{t:parameters}
\setlength\tabcolsep{2.5pt} 
\centering
\begin{tabular}{cccccccccccccccccc}
\hline
\multirow{2}{*}{Dataset} & \multirow{2}{*}{Model}& \multirow{2}{*}{M}& \multirow{2}{*}{B} & \multirow{2}{*}{$\eta$} & \multirow{2}{*}{$\sqrt{E/E_{sp}}$}& \multirow{2}{*}{$\sqrt{E}/H$} & \multirow{2}{*}{$\frac{1}{\alpha}$} & \multirow{2}{*}{$\beta$} & \multirow{2}{*}{$\widehat \beta$}& \multicolumn{3}{c}{$@ 4\%$}  & \multicolumn{3}{c}{$@ 10\%$}\\
&&&&&&&&&&$k'_o$& $k'_n$& $k'$ & $k'_o$& $k'_n$ & $k'$\\
\hline 
\multirow{4}{*}{ \begin{tabular}{c}CT\\ (S=52000)\end{tabular}}& 
\multirow{4}{*}{ \begin{tabular}{c}Linear regr.\\ n=384\end{tabular}}& 
\multirow{2}{*}{16}        &128 & \multirow{4}{*}{0.0003}  & 7.92  &1.01 & 1.53 & 12.23 & 12.31& 1 & $\infty$ & $\infty$ & 1& $\infty$ & $\infty$ \\
            & &            & 3250 & & 38.45  &    1.00&1.64 & 62.86 & 60.97&1 & $\infty$ & $\infty$ & 1& $\infty$ & $\infty$ \\ 
&&\multirow{2}{*}{100}     & 128 & & 7.75  &1.01 &1.54 &12.05 & 11.56&1 & 10 & $\infty$ & 1& $\infty$ & $\infty$\\
& &                        & 520 & & 15.58 &1.00 &1.51 &23.60 & 22.96&1 &17 & $\infty$ & 1& $\infty$ & $\infty$\\
\hline
\multirow{3}{*}{ \begin{tabular}{c}MNIST\\ (S=60000)\end{tabular}}& 
\multirow{4}{*}{ \begin{tabular}{c}2-conv layers\\ n=431080\end{tabular}}
& \multirow{2}{*}{16} &128 & \multirow{3}{*}{0.1} & 1.45 &1.42 &1.49 & 3.07 &2.92& 1 &16 & $\infty$ & 1& 72 & $\infty$\\
     & &                   &500 & & 2.15 & 1.14 &1.53 & 3.75 &3.71 & 1 & 22& 40&1 & 260 &  $\infty$ \\
%    & &                   &1000 & 2.64 & 1.08 &1.48 & 4.22 & \\
& &                     64    & 128 & & 1.41 & 1.42 & 1.51  &3.02 &3.03& 1 &10 &$\infty$ & 1& 24& $\infty$\\
\cline{1-1}\cline{3-16}
split by digit&&           10   &500 & 0.01 &1.01 & 1.00 &1.42 & 1.42 & 3.62& 1 & 3 & 60 & 1& 7& 100\\
\hline
\multirow{2}{*}{ \begin{tabular}{c}CIFAR-10\\ (S= 50000)\end{tabular}}& 
\multirow{2}{*}{ \begin{tabular}{c}ResNet18\\ n=11173962 \end{tabular}}& 
\multirow{2}{*}{16}        &128 & \multirow{2}{*}{0.05}  &1.07&3.35 &1.49 & 5.34&5.62 &1 & 10&30 & 1& 20&$\infty$\\
            & &          & 500 & &1.18&1.91 & 1.50&  3.40& 3.52& 1  & 21& $\infty$ & 1& $250$& $\infty$\\
%             & &          &   & & & &  &  &   & & & & & \\ 
\hline
\end{tabular}
\end{table*}

    \begin{figure*}[t]
%\vspace{.3in}
        \centering
        \begin{subfigure}[b]{0.325\textwidth}
            \centering
            \includegraphics[width=\textwidth]{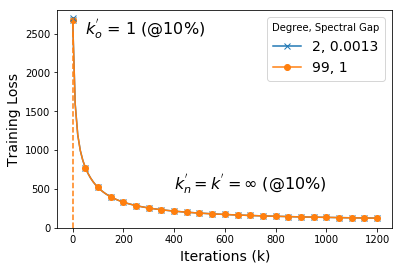}
            \caption[]%
            {{\small CT: M=100, B = 128}}
            \label{f:regr_err_vs_iter}
        \end{subfigure}
        \hfill
        \begin{subfigure}[b]{0.325\textwidth}  
            \centering 
            \includegraphics[width=\textwidth]{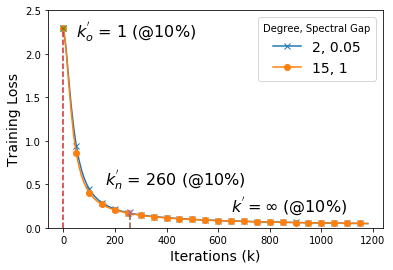}
            \caption[]%
            {{\small MNIST: M=16, B = 500}}    
            \label{f:regr_iter_vs_time}
        \end{subfigure}
        %\vskip\baselineskip
        \hfill
        \begin{subfigure}[b]{0.325\textwidth}   
            \centering 
            \includegraphics[width=\textwidth]{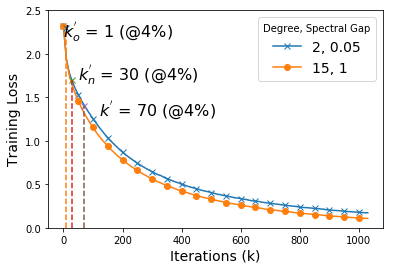}
            \caption[]%
            {{\small CIFAR-10: M=16, B = 500}}    
            \label{f:regr_error_vs_time}
        \end{subfigure}
         \caption[ The average and standard deviation of critical parameters ]
        {\small Effect of network connectivity  (degree $d$) on the iterations to convergence.} 
        \label{f:typical_behaviour}
    \end{figure*}

\begin{figure}[ht]
    \centering
    \includegraphics[scale=0.4]{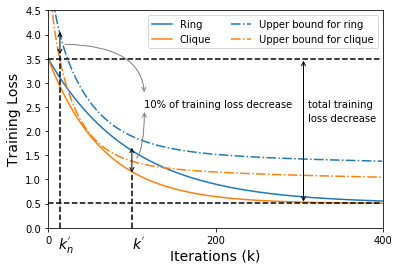}
    \caption{How to determine the number of iteration at which training loss for the clique and for the ring differs significantly.}
    \label{f:qualitative}
\end{figure}

With our experiments we want to 1) evaluate the effect of topology on the number of epochs to converge, and in particular quantify $E$, $\Esp{}$, $H$, and $\alpha$ in practical ML problems, 2) evaluate the effect of topology on the convergence \emph{time}.
We considered three different optimization problems:
\begin{enumerate}
%\item Minimization of hinge loss function for classification on the dataset SUSY from~\citep{ucimlr} ($S=5 \times 10^5$, $n=18$~\citep{baldi14}). In this case both convexity  and bounded energy subgradients  assumptions (A1 and A5) hold.
\item Minimization of  mean squared error (MSE) for linear regression on the dataset ``Relative location of CT slices on axial axis'' from~\citep{ucimlr,graf11}. Convexity holds, but gradients are potentially unbounded.
%\item Minimization of hinge loss function for classification on the dataset SUSY from~\cite{ucimlr} ($S=5 \times 10^5$ samples each with $n=18$ features consisting of kinematic properties measured by the particle detectors in an accelerator~\cite{baldi14}). In this case both convexity  and bounded energy subgradients  assumptions (A1 and A5) hold.
%\item Minimization of  mean squared error (MSE) for linear regression on the dataset ``Relative location of CT slices on axial axis'' from~\cite{ucimlr} ($S=53500$ samples each with $n=385$ features extracted from computer tomography images~\cite{graf11}). Convexity holds, but gradients are potentially unbounded.
\item Minimization of cross-entropy loss through a neural network with two convolutional layers on MNIST dataset~\citep{Lecun}. Neither convexity, nor subgradient boundness hold.
\item Minimization of cross-entropy loss through ResNet18 neural network~\citep{he2016deep} on CIFAR-10 dataset~\citep{Krizhevsky09learningmultiple}. Neither convexity, nor subgradient boundness hold. Moreover, we employ local subgradients with classical momentum~\citep{sutskever2013importance} (with coefficient 0.9). % $g_i(\vw_k)$ in \eqref{e:dsm} by adding the momentum part.  
\end{enumerate}
We have developed an ad-hoc Python simulator that allows us to test clusters with a large number of nodes, as well as a distributed application using PyTorch MPI backend to run experiments on a real GPU cluster platform.\footnote{
The platform is composed of various types of GPUs, e.g., GeForce GTX 1080 Ti, GeForce GTX Titan X and Nvidia Tesla V100.     
%	%Both the Python simulator and the PyTorch implementation are available as supplementary material.
} 
In general, datasets have been randomly split across the different workers without any replication ($C=1$). For MNIST we have also considered a scenario with $M=10$ workers, where each worker has been assigned all images for a specific digit. 
%\todo{each worker is assigned a unique class of digits}
%Data points replication ($C>1$) further reduces the sensitivity to the topology.
A constant learning rate has been set using the configuration rule from~\citep{smith17} described in \inciteapp{a:experiments}. 
%We increase geometrically the learning rate and determine two ``knees'': the  learning rate value for which the loss starts decreasing significantly and the value for which it starts increasing again.  We set the learning rate to the geometric average of these two values. 
Interestingly, for a given ML problem, when the dataset is split randomly, this procedure has led to choose the learning rate independently of the average node degree. The values selected are indicated in  Table~\ref{t:parameters}. %Consequently, differences observed across different topologies cannot be decreased by selecting larger learning rates. 
%This excludes that the difference we observe among different topologies could be systematically compensated by selecting larger learning rates. 
Each node starts from the same model parameters ($\Rsp{}=0$) that have been initialized through PyTorch default functions. 
%standard parameter initialization for neural network layers~\cite{he2015delving}.
We report here a subset of all results, the others can be found in \inciteapp{a:experiments}. 

Table~\ref{t:parameters} shows values of $\sqrt{E/\Esp{}}$, $\sqrt{E}/H$, $1/\alpha$, and their product $\beta$ for different problems and different settings.\footnote{
    Some additional experiments in \inciteapp{a:experiments}  show that $\Rsp$ and $R$ have a smaller effect on the bounds, as the third term in~\eqref{e:new_bound} and in~\eqref{e:classic_bound} converges to $0$ when $K$ diverges.
} $E$, $\Esp{}$, and $H$ have been evaluated through empirical averages using the random minibatches drawn at the first iteration. $\alpha$ is computed for an undirected ring topology.
Remember that the value $\beta$ (defined in~\eqref{e:beta}) indicates how much tighter the new bound~\eqref{e:new_bound} is in comparison to the classic one~\eqref{e:classic_bound}. We also use~\eqref{e:perm_approx} 
%and Proposition~\ref{p:estimates} 
to provide an estimate of $\beta$ as follows $\widehat \beta = 1/\alpha \times \widehat E/(\sqrt{\widehat E_{\textrm{sp}}} \widehat H)$.
The approximation is very accurate when the dataset is split randomly across the nodes. On the other hand, for MNIST, when all images for a given digit are assigned to the same node, local datasets are very different and  approximations \eqref{e:perm_approx} are too crude (but our bound~\eqref{e:new_bound} still holds).
 %We have also evaluated the same quantities at different iterations, but the results are almost the same.
Interestingly, $\beta$ is dominated by different effects for the three ML problems.
%the main contribution to $\beta$ comes from different effects (even when number of workers and batch sizes are equal as for~$M=16$ and $B=128$): 
The similarity of local datasets prevails for CT (large $\sqrt{E/\Esp{}}$), while the noise of stochastic subgradients prevails for CIFAR (large $\sqrt{E}/H$). For MNIST the three effects, including energy spreading over different eigenspaces  ($1/\alpha$), contribute almost equally. This can be explained considering that, even if local datasets have similar sizes, they are statistically more different the more complex the model  to train, i.e.,~the \mbox{larger $n$}.
%a more complex model can overfit the local datasets).

%\todo{new Fig 3}
From~\eqref{e:new_bound} and~\eqref{e:classic_bound}, we can also compute at which iteration the two bounds predict that the effect of the topology becomes significant, by identifying when the training loss difference between the clique and the ring accounts for a given percentage of the loss decrease over the entire training period. Figure~\ref{f:qualitative} qualitatively illustrates the procedure.\footnote{
    In order to be able to compare the upper bounds \eqref{e:classic_bound} and \eqref{e:new_bound} with the actual loss curves, we rescale them by a factor determined so that the upper bound curve and the experimental one are tangent for the clique topology (Fig.~\ref{f:qualitative}). Moreover, once determined at which iteration rings and cliques should differ, we update the upper-bounds with new estimates for $E$, $\Esp{}$, $H$, $R$, and $\Rsp{}$ computed at this iteration, and check if they now predict a larger number of iterations.  
} These predictions are indicated in the last columns of Table~\ref{t:parameters} and are compared with the values observed in the experiments ($k'$).
%\footnote{
%    We ignore the third term because all the initial parameter vectors are equal to zero so that $\Rsp=0$.
%}
%The values of $\beta$ observed in our experiments range from $3$ to $60$. Nevertheless, the effect of 

We note that forecasts are very different, despite the fact that, in some settings, our bound is only 3 times tighter than the classic one. Bound~\eqref{e:classic_bound} predicts that the training loss curves should differ by more than $10\%$ since the first iteration ($k'_o=1$). The new bound~\eqref{e:new_bound} correctly identifies that the topology's effect becomes evident later, sometimes beyond the total number of iterations performed in the experiment (in this case we indicate $k'_n=\infty$).
%The two predictions are roughly in a ratio equal to $\beta$, that can range from $3$ to more than $60$ (Table~\ref{t:parameters}). 

Figure~\ref{f:typical_behaviour} shows the training loss evolution $F(\hat{\overline\vw}(k))$ for specific settings (one for each ML problem) and two very different topologies (undirected ring and clique), 
when the dataset is split randomly across the nodes. The behaviour is qualitatively similar to what observed in previous works~\citep{lian17nips, lian18icml, luo19}; despite the remarkable difference in the level of connectivity (quantified also by the spectral gap), the curves are very close, sometimes indistinguishable. 
%such predictions ($K_1$ from~\eqref{e:classic_bound} and $K_2$ from~\eqref{e:new_bound}) together with the cross-entropy evolution for different levels of connectivity of the communication topology. For CIFAR-10, our bound predicts that the effect of the topology will be significant only after XXX iterations that is beyond the maximum of the x axis. For MNIST, we observe that the topology has still no evident effect even after $K_2$, suggesting that the dependency on the topology is even weaker than what our bound suggests. 

\begin{figure}
    \centering
    \includegraphics[scale=0.4]{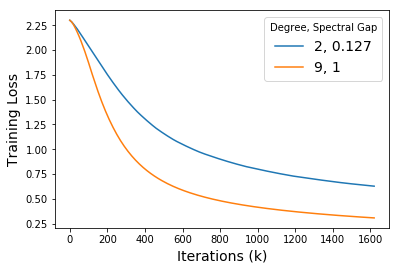}
    \caption{MNIST split by digit, M=10, B=500.}
    \label{f:minist_by_digit}
\end{figure}
Figure~\ref{f:minist_by_digit} shows the same plot for the case when MNIST 
%dataset has not been randomly split across the nodes (generating similar local  datasets), but all 
images for the same digit have been assigned to the same node. In this case the local datasets are very different and $\sqrt{E/\Esp{}}\approx 1$; the topology has a remarkable effect! This plot warns against extending the empirical finding in~\citep{lian17nips,lian18icml,luo19} to settings where local datasets can be highly different as it can be for example in the case of federated learning~\citep{konecny15}.
%\todo{If we could somehow shrink the previous paragraph to put this part to the previous page, we could succeed to have 8 pages.}

%PART WHERE WE DISCUSS THE EMPIRICAL EVALUATION OF E Esp AND H, WE COMPARE THEM WITH THE THEORETICAL ESTIMATES AND WE SHOW THEIR EFFECT OVER THE TIME AFTER WHICH THE TOPOLOGY SHOULD KICK IN.

\begin{figure*}[h]
%\vspace{.3in}
        \centering
        \begin{subfigure}[b]{0.325\textwidth}  
            \centering 
            \includegraphics[width=\textwidth]{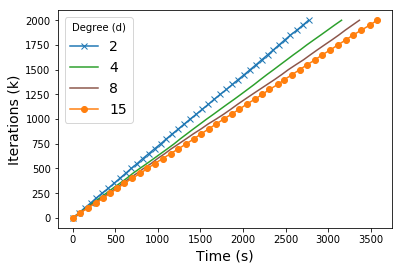}
            \caption[]%
            {{\small Throughput}}    
            \label{f:nn_iter_vs_time}
        \end{subfigure}
        \hfill
        %\vskip\baselineskip
        \begin{subfigure}[b]{0.325\textwidth}
            \centering
            \includegraphics[width=\textwidth]{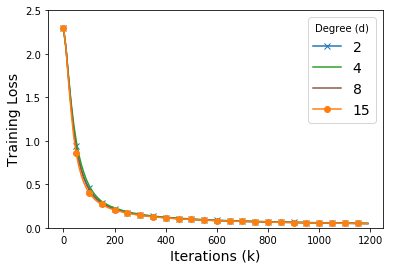}
            \caption[]%
            {{\small Error vs iterations}}    
            \label{f:nn_err_vs_iter}
        \end{subfigure}
        \hfill
        \begin{subfigure}[b]{0.325\textwidth}   
            \centering 
            \includegraphics[width=\textwidth]{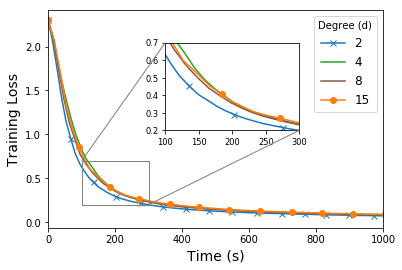}
            \caption[]%
            {{\small Error vs time}}    
            \label{f:nn_error_vs_time}
        \end{subfigure}
         \caption[ The average and standard deviation of critical parameters ]
        {\small Effect of network connectivity  (degree $d$) on the convergence for dataset MNIST with computation times from a Spark cluster. M = 16, B = 500.} 
        \label{f:nn}
    \end{figure*}
%\todo{to remove 49 from Fig 2a}
The experiments above confirm that the communication topology has little influence on the number of \emph{epochs} needed to converge (when local datasets are statistically similar). Our analysis reconciles (at least in part) theory and experiments by pushing farther the training epoch at which the effect of the topology should be evident.

The conclusion about the role of the topology is radically different if one considers the \emph{time} to converge. For example, \cite{karakus17} and \cite{luo19} observe experimentally that sparse topologies can effectively reduce the convergence wall-clock time. A possible explanation is that each iteration is faster because less time is spent in the communication phase: the less connected the graph, the smaller  the communication load at each node. \cite{lian17nips,lian18icml} advance this explanation to justify  why DSM on ring-like topologies can converge faster than the \mbox{centralized PS}. 
%This time is equal to the product of the number of iterations to converge ($K_\epsilon$) and the average time required to carry out one iteration ($\theta$).

Here, we show that sparse topologies can speed-up wall-clock time convergence even when communication costs are negligible, because they intrinsically mitigate the effect of stragglers, i.e.,~tasks whose completion time can be occasionally much longer than its typical value. Transient slowdowns are common in computing systems (especially in shared ones) and have many causes, such as resource contention, background OS activities, garbage collection, and (for ML tasks) stopping criteria calculations. Stragglers can significantly reduce computation speed in a multi-machine setting~\citep{ananthanarayanan13,karakus17,li18}. For consensus-based method, one can hope that, when the topology is sparse, a temporary straggler only slows down a limited number of nodes (its out-neighbors in $\mathcal G$), so that the system can still maintain a high throughput.

\cite{neglia19infocom} have proposed approximate formulas to evaluate the throughput of distributed ML systems for some specific random distribution of the computation time (uniform, exponential, and Pareto). Here, we take a more practical approach. Our PyTorch-based distributed application allow us to simulate systems with arbitrary distributions of the computation times and communication delays. We have carried out experiments with zero communication delays (an ideal network) and two different empirical distributions for the computation time. One was obtained by running stochastic gradient descent on a production Spark cluster with sixteen servers using Zoe Analytics~\citep{pace17},
 each with two 8-core Intel E5-2630 CPUs running at 2.40GHz.
 %(total of 32 cores with hyper-threading enabled).
 %, 128GB of memory, 1Gbps Ethernet network
%fabric and ten 1TB hard drives. 
The other was extracted from ASCI-Q super-computer traces~{\cite[Fig.~4]{petrini03}}. 
Figure~\ref{f:nn}~shows the effect of topology connectivity on the convergence time for a MNIST experiment with Spark computation distribution. We consider in this case undirected $d$-regular random graphs.
The number of iterations completed per node grows faster the less connected the topology (Fig.~\ref{f:nn}~(a)). As the training loss is almost independent of the topology (Fig.~\ref{f:nn}~(b)), the ring achieves the shortest  convergence time (Fig.~\ref{f:nn}~(c)), even if there is no communication delay. Qualitatively similar results for other ML problems and time distributions are in \inciteapp{a:experiments}.

\section{CONCLUSIONS}
\label{s:concl}
%!TEX root = icml.tex

We have explained, both through analysis and experiments, when and why the communication topology does not affect the number of epochs consensus-based optimization methods need to converge, an effect recently observed in many papers, but not thoroughly investigated. We have also shown that, as a consequence of this invariance, a less connected topology achieves  a shorter convergence time, not necessarily because it incurs a smaller communication load, but because it mitigates the stragglers' problem. 
The distributed operation of consensus-based approaches appears particularly suited for federated and multi-agent learning. Our study points out that further research is required for these applications, because the benefits observed until now are dependent on the statistical similarity of the local datasets, an assumption that is not satisfied in federated learning.

%can also  balanced communication pattern  that lead to a shorter iteration time also from topology We have also shown that whenwhenever the Whenever this independence  Moreover, we have shown that the effect of topologies can lead to faster convergence of consensus-based gradient methods, even when communication delays can be neglected. This gain is achieved through spatial model inconsistency, which mitigates the effect of stragglers. A number of interesting research questions are still open: how to combine spatial and chronological model inconsistency? how to dynamically adapt the topology during the optimization to react to observed straggling workers? how do consensus-based gradient methods generalize in comparison to more common paradigms like PS or ring all-reduce, where a unique parameter vector is maintained in the system?

%\todo{Since the acknowledgements could be excluded, I moved this part here.}
\section{ACKNOWLEDGEMENTS}
We warmly thank Alain Jean-Marie, Pietro Michiardi, and Bruno Ribeiro for their feedback on the manuscript and their help preparing the rebuttal letter.
We are also grateful to the OPAL infrastructure from Université Côte d'Azur and Inria Sophia Antipolis - Méditerranée ``NEF" computation platform for providing resources and support.
%We are also grateful to Inria IT services for providing support to use the 
%``Nef" computation cluster.
Finally, We would like to thank the anonymous reviewers for
their insightful comments and suggestions, which helped
us improve this work.

This work was supported in part by ARL under Cooperative Agreement W911NF-17-2-0196.

\bibliographystyle{plainnat}
\bibliography{asynchronous}
\newpage
\appendix
\onecolumn

\section{Notation}
\label{a:notation}
We use an overline to denote an average over all the nodes and a ``hat'' to denote the time-average. For example
\begin{align}
\overline \vw(k) = \frac{1}{M}\sum_{i=1}^M \vw_i(k), & &
\hat \vw_i(k) = \frac{1}{k+1}\sum_{h=0}^k \vw_i(h).
\end{align}
For a matrix, e.g.,~$\mW(k) = (\vw_1(k), \dots \vw_M(k))$, $\overline \mW(k)$ denotes the matrix whose column $i$ is $\overline \vw(k)$, i.e.,
\[\overline \mW(k) = \mW(k) \mP(\vone),\]
where $\mP(\vone)=\frac{\vone \vone^{\intercal}}{M}$ is the orthogonal projector on the subspace generated by $\vone$.
$\Delta \mW(k)$ is used to denote the difference $\mW(k)- \overline \mW(k)$.

Given a matrix $\mA$, $\mA_{i,:}$  and $\mA_{:,j}$ denote the $i$-th row and the $j$-th column, respectively. 

We use different standard matrix norms, whose definitions are reported here for completeness. Let $\mA$ be a $I \times J$ matrix:
%\begin{align}
%& \lVert \mA \rVert_{\max} = \max_{1 \le i \le I, 1\le j \le J} |A_{i,j} |, & & 
%\lVert \mA \rVert_{1} = \max_{1 \le j \le J} \sum_{i=1}^I |A_{i,j}|,\\
%& \lVert \mA \rVert_{\infty} = \max_{1 \le i \le I} \sum_{j=1}^J |A_{i,j}|,& & 
%\lVert \mA \rVert_2 = \sigma_{\max}(\mA),\\
%&\lVert \mA \rVert_F = \sqrt{ \sum_{1 \le i \le I, 1\le j \le J}  |A_{i,j} |^2}  = \sqrt{\sum_{i=1}^{\min(I,J)} \sigma_i^2(\mA)}&  & 
%\end{align}
\begin{align}
    & \lVert \mA \rVert_2 = \sigma_{\max}(\mA),\\
     & \lVert \mA \rVert_F = \sqrt{ \sum_{1 \le i \le I, 1\le j \le J}  |A_{i,j} |^2}  = \sqrt{\sum_{i=1}^{\min(I,J)} \sigma_i^2(\mA)}, 
\end{align}
where $\{\sigma_{i}(\mA)\}$ are the singular values of the matrix $\mA$ and  $\sigma_{\max}(\mA)$ is the largest one.

%The following inequalities will be used in what follows.
%\begin{equation}
%\label{e:frob_two}
%\norm*{\mA}_F \le \sqrt{\textrm{rank}(\mA)} \norm*{\mA}_2 \le \sqrt{\min(I,J)} \norm*{\mA}_2.
%\end{equation}
%It follows immediately considering the expressions of $\norm{\mA}_2$ and $\norm{\mA}_F$ in terms of the singular values of $\mA$.
%\begin{equation}
%\label{e:frob_max_col}
%\norm*{\mA}_F \le \sqrt{J} \max_{1\le j \le J} \norm*{\mA_{:,j}}_2.
%\end{equation}
%If the $\mA$ is a random matrix, it is possible to prove a similar inequality for the expectation of the Frobenius norm:
%\begin{equation}
%\label{e:frob_max_col_rand}
%\EXs{\bm\xi}{\norm*{\mA}_F} \le \sqrt{J} \max_{1\le j \le J} %\sqrt{\EXs{\bm\xi}{\norm*{\mA_{:,j}}_2^2}}.
%\end{equation}
%We provide the proof for~\eqref{e:frob_max_col_rand}.
%\begin{align*}
%\EXs{\bm\xi}{\norm*{\mA}_F}& = \EXs{\bm\xi}{\sqrt{\sum_{j=1}^J %\norm*{\mA_{:,j}}_2^2 }} \le\sqrt{\EXs{\bm\xi}{\sum_{j=1}^J %\norm*{\mA_{:,j}}_2^2 }} =\sqrt{\sum_{j=1}^J \EXs{\bm\xi}{ %\norm*{\mA_{:,j}}_2^2 }} \\
%& \le  \sqrt{J \max_{1\le j \le J} %\EXs{\bm\xi}{\norm*{\mA_{:,j}}_2^2} }= \sqrt{J}\max_{1\le j \le J} %\sqrt{\EXs{\bm\xi}{\norm*{\mA_{:,j}}_2^2}}, 
%\end{align*}
%where the first inequality follows from the concavity of the square root.

We will also consider the Frobenius inner product between matrices defined as follows
\begin{equation}
    \langle \mA, \mB \rangle_F \triangleq \sum_{i, j} A_{i,j} B_{i,j} = \Tr\left(\mA^\intercal \mB\right) \label{e:frobenius_prod}
\end{equation}

All the results in Appendix~\ref{a:convergence} assume that the matrix $\mA$ is irreducible, primitive, doubly stochastic,  non-negative, and normal.

\section{Linear algebra reminders}
\label{a:linear_algebra}

\subsection{Irreducible primitive doubly stochastic non-negative matrices}
\label{s:ipds_matrices}
We remind some results from Perron-Frobenius theory~{\cite[Ch.~8]{meyer00}}.
As our communication graph $\gG = (\V,\Ed)$ is strongly connected, and $A_{i,j}>0$ whenever  $(i,j)$ is an edge of $\gG$, the $M \times M$ consensus matrix $\mA$ is irreducible. Moreover, the consensus matrix has non-null diagonal elements and then it is also primitive. The spectral radius $\rho(\mA) \triangleq \max_i |\lambda_i|$ is then itself a simple eigenvalue.
Because $\mA$ is also stochastic, its eigenvalue $\lambda_1=1$  coincides with the spectral radius ($1\le \max |\lambda_i| \le \Vert \mA \Vert_1=1$). 

%NOT SURE WHY WE PUT THE FOLLOWING PARAGRAPH AND I THINK WE MAY NOT HAVE LISTED ALL THE HYPOTHESES (E.G. A IRREDUCIBLE AND PRIMITIVE AND PERHAPS EVEN STOCHASTIC)
%We also observe that, for any non-negative matrix $\mA$, $\Vert \mA \Vert_2=1$ if and only if the matrix is doubly stochastic. In fact, if $\mA$ is doubly stochastic, so is $\mA^\intercal \mA$ and hence $(\Vert A\Vert_2^2 \triangleq) \rho(\mA^\intercal \mA)=1$. For the opposite direction, assume that $\vone$ is not a left eigenvector, then the vector $\vone^\intercal \mA$ is not aligned with $\vone$ and it follows from Cauchy-Schwarz inequality:
%\[ M = \vone^\intercal \mA \vone < \Vert \mA^\intercal \vone \Vert_2  \times \Vert \vone \Vert_2 = \Vert \mA^\intercal \vone \Vert_2 \sqrt{M}.\]
%Hence $ \Vert \mA^\intercal \vone \Vert_2 > \sqrt M = \Vert \vone \Vert_2$, from which it follows $ \Vert \mA^\intercal \Vert_2 >1 $, contradicting the hypothesis $\Vert \mA \Vert_2 =1$. 

Let $\mP(\vone)\triangleq \frac{\vone \vone^\intercal}{M}$ be the orthogonal projector on the subspace generated by the unit vector $\vone$.

The non-zero singular values of $\mA$ are the positive square roots of the non-zero eigenvalues of $\mA^\intercal  \mA$. We observe that $(\mA - \mP(\vone))^\intercal (\mA - \mP(\vone)) = \mA^\intercal  \mA - \mP(\vone)$. The spectrum of $\mA^\intercal  \mA - \mP(\vone)$ is equal to the spectrum of $\mA^\intercal  \mA$ but for one eigenvalue $1$ that is replaced by an eigenvalue $0$. It follows that $\sigma_1(\mA - \mP(\vone)) = \sigma_2(\mA)$. 
%Moreover, it holds $\sigma_2(\mA) \le 1$:
%\[\sigma_2(\mA) = \underset{\textrm{dim} \mathcal V =n-1}{\min} \;\; \underset{\vw \in \mathcal V}{\max}\frac{\Vert \mA \vw \Vert}{\Vert \vw \Vert} 
%\le \underset{\vone^\intercal \vw =\mathbf 0}{\max}\frac{\Vert \mA \vw \Vert}{\Vert \vw \Vert} 
%= \underset{\vone^\intercal \vw=0}{\max}\frac{\Vert (\mA -\mP_1) \vw \Vert}{\Vert \vw \Vert} 
%\le \underset{\vw}{\max}\frac{\Vert (\mA -\mP_1) \vw \Vert}{\Vert \vw \Vert} = \sigma_1(\mA -\mP_1),\]
%where the first equality falls from Courant-Fisher theorem~{\cite[p.~555]{meyer00}} and the minimum is over all subspaces $\mathcal V$ with dimension $n-1$. The inequality comes from cons

\subsection{Normal matrices}
An $M \times M$ matrix $\mA$ is \emph{normal} if $\mA^\intercal \mA = \mA \mA^\intercal $. A  matrix $\mA$ is normal if and only if it is unitarily diagonalizable {\cite[p.~547]{meyer00}}, i.e.,~it exists a complete orthonormal set of eigenvectors $\vu_1, \vu_2, \dots \vu_M$ such that $\mU^\intercal \mA \mU = \mD$, where $\mD$ is the diagonal matrix containing the eigenvalues and $\mU$ has the eigenvectors as columns.

Normal matrices have a spectral decomposition with orthogonal projectors~{\cite[p.~517]{meyer00}}, i.e.,~
\[\mA = \sum_{q=1}^Q \lambda_q \mP_q,\]
where $\lambda_1, \dots \lambda_Q$ are the $Q\le M$ eigenvalues of $\mA$,  $\mP_i$ is the orthogonal projector onto the nullspace of $\mA - \lambda_i \mI$ along the range of $\mA - \lambda_i \mI$, $\mP_i \mP_j=\mathbf 0$ for $i\neq j$, and $\sum_{i=1}^M \mP_i = \mI$. Because the projectors $\mP_i$ are orthogonal and non null it holds $\mP_i^\intercal = \mP_i $ and $||\mP_i||_2=1$~{\cite[p.~433]{meyer00}}. Moreover, for any vector $\vx$ and $h\ge0$, it holds:
\begin{equation}
\left\Vert \mA^h \vx \right\Vert_{2}^2 = \sum_{q=1}^Q |\lambda_q |^{2 h}  \left\Vert \mP_q \vx \right\Vert_{2}^2.
\label{e:spectral_decomp}
\end{equation}

Symmetric matrices as well as circulant matrices are normal. In fact,  a circulant matrix is always diagonalizable by the Fourier matrix and then it is normal.

The non-zero singular values of a normal matrix $\mA$ are the modules of its eigenvalues, i.e.,~$\sigma_q(\mA)=| \lambda_q(\mA)| $. If the matrix $\mA$ is also imprimitive, irreducible, doubly stochastic and non-negative, it holds 
\begin{equation}
	\label{e:sigmas}
	\sigma_1(\mA - \mP(\vone)) = \sigma_2(\mA) = | \lambda_2 | < 1.
\end{equation}
Moreover, observe that in this case $\mP_1 =\mP(\vone)$.

\newpage 
\section{Insensitivity quantification in previous work}
\label{a:insensitivity}
\cite{lian17nips} and \cite{olshevsky19nonasymptotic} have studied the convergence of decentralized stochastic gradient method predicting topology independence after a certain number of iterations. Here, we evaluate quantitatively their predictions on some ML problems and 
%.  to estimate measure their predictions according to their respective bounds (i.e., we obtain the iterations started from where the topology plays a negligible role in their bounds) and 
compare them with our experimental observations. The results show that these predictions are very loose and then they do not fully explain why insensitivity is often observed since the beginning of the training phase.

These two papers have slightly different assumptions than ours.
\begin{description}
	\item[A$'$1] the consensus matrix $\mA$ is symmetric and doubly stochastic,  
	\item[A$'$2] every $F_j(\vw)$ has L-Lipschitz continuous gradient, i.e.,
	 \[
	 \lVert \nabla F_j(\vw) -  \nabla F_j(\vv) \rVert_2 \leq L \lVert \vw-\vv \rVert_2,\,\,\, \forall \vw,\vv \in \R^n, \forall j\in \{1,2,...,M\} ,
	 \]
	\item[A$'$3] the expected variance of stochastic gradient is uniformly bounded, i.e.,
\[ \mathbb{E}_{\xi}[ \lVert \vg_j(\vw)-\nabla F_j(\vw)\rVert_2^2 ]\leq \sigma^2,
\,\,\, \forall \vw \in \R^n, \forall j\in \{1,2,...,M\}, \]
%where $\tilde{\vg}_j(\vw) = \frac{1}{|\sS_j|}\sum_{(\vx^{(l)},y^{(l)}) \in \sS_j} \partial f(\vw, \vx^{(l)}, y^{(l)})$, 

  	\item[A$'$4] every $F_j(\vw)$ is $\mu$-strongly convex, i.e.,
\[
(\nabla F_j(\vw) - \nabla F_j(\vv))^T(\vw-\vv) \geq \mu \lVert \vw-\vv \rVert^2_2, \forall \vw,\vv \in \R^n, \forall j\in \{1,2,...,M\}.  
\]
\end{description}
We observe that A$'$1 implies A4, A$'$2 implies A5, and A$'$4 implies A1.

Next, we will present the bounds obtained in \citep{lian17nips,olshevsky19nonasymptotic} and how we evaluate them quantitatively.

\begin{cor}[Corollary~2 in \cite{lian17nips}]
Under assumptions A2-A3, A$'$1-A$'$3, and that a constant learning rate $\eta = \frac{1}{2L+\sigma \sqrt{K/M}} $ is used, the convergence rate is independent of the topology if the total number of iterations $K$ is sufficiently large, in particular if
\begin{equation}
K\geq K_l \triangleq \frac{4L^4M^5}{\sigma^2(f(0)+L)^2(1-|\lambda_2|)^2}.
\label{eq:lowerboundLian}
\end{equation}
\label{cor:Lian}
\end{cor}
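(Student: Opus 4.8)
The plan is to derive this statement as a corollary of the non-asymptotic convergence theorem of \cite{lian17nips} (their Theorem~1), specialized to the prescribed constant learning rate. Under A2--A3 and A$'$1--A$'$3 that theorem controls the stationarity measure $\frac{1}{K}\sum_{k=0}^{K-1}\EX{\norm*{\nabla f(\overline\vw(k))}_2^2}$ by a bound which, after collecting the model-disagreement contributions, has the schematic form
\begin{equation*}
\left(1 - \frac{c_0 \eta^2 L^2 M}{1-|\lambda_2|}\right)\frac{1}{K}\sum_{k=0}^{K-1}\EX{\norm*{\nabla f(\overline\vw(k))}_2^2} \le \underbrace{\frac{f(0)-f^*}{\eta K} + \frac{c_1 \eta L \sigma^2}{M}}_{\text{topology-independent}} + \underbrace{\frac{c_2 \eta^2 L^2 \sigma^2 M}{1-|\lambda_2|}}_{\text{topology-dependent}},
\end{equation*}
with absolute constants $c_0,c_1,c_2$. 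Only the final additive term and the bracketed coefficient on the left depend on the network, through the spectral gap $1-|\lambda_2|$; the first two right-hand terms are exactly those of centralized mini-batch SGD.

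First I would substitute $\eta = 1/(2L + \sigma\sqrt{K/M})$. For large $K$ the denominator is dominated by its second summand, so $\eta \approx \sqrt{M/K}/\sigma$, and the two topology-independent terms balance to the linear-speedup rate $\bigO\!\left((f(0)+L)\sigma/\sqrt{MK}\right)$, which is the topology-free benchmark the corollary claims to recover (here one absorbs $f^*$ into $f(0)$ and uses $f(0)-f^*\le f(0)+L$). The key structural observation is that $\eta^2 \approx M/(\sigma^2 K)$ decays one order faster in $K$ than this leading rate, so the two network-dependent quantities, both quadratic in $\eta$, must become negligible once $K$ is large enough.

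Next I would impose the two conditions that render the network terms harmless. Condition~(i): the left-hand coefficient must stay bounded away from zero, i.e.~$c_0\eta^2 L^2 M/(1-|\lambda_2|)\le 1/2$, which after inserting $\eta\approx\sqrt{M/K}/\sigma$ gives a lower bound on $K$ of order $L^2 M^2/\big(\sigma^2(1-|\lambda_2|)\big)$. Condition~(ii): the additive topology-dependent term must not exceed the leading term, i.e.
\begin{equation*}
\frac{c_2 \eta^2 L^2 \sigma^2 M}{1-|\lambda_2|} \;\lesssim\; \frac{(f(0)+L)\sigma}{\sqrt{MK}}.
\end{equation*}
Substituting $\eta^2\approx M/(\sigma^2 K)$ turns this into $c_2 L^2 M^{5/2}/\big((1-|\lambda_2|)\sqrt{K}\big)\lesssim (f(0)+L)\sigma$, and solving for $K$ produces the threshold $4L^4 M^5/\big(\sigma^2(f(0)+L)^2(1-|\lambda_2|)^2\big)$. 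Since this dominates the bound from~(i) in the relevant regime of large $M$ and small spectral gap, it is the single binding constraint, yielding
\begin{equation*}
K \ge K_l = \frac{4L^4 M^5}{\sigma^2 (f(0)+L)^2 (1-|\lambda_2|)^2},
\end{equation*}
beyond which the bound collapses to its topology-independent part.

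The main obstacle is not conceptual but one of faithful bookkeeping. The exponents that make the corollary nontrivial -- in particular the $M^5$ dependence and the $(1-|\lambda_2|)^{-2}$ factor -- emerge only if one carries the exact powers of $M$ and of the spectral gap through the substitution of $\eta$ and through both comparisons; a naive order analysis that mislabels the spectral-gap power of the disagreement term (e.g.~$(1-|\lambda_2|)^{-2}$ instead of $(1-|\lambda_2|)^{-1}$ inside the bound) would give a different exponent. Thus the delicate part is correctly identifying the form of the topology-dependent term in Theorem~1 and tracking the absolute constants so that the stated $K_l$ appears with constant $4$ rather than as a loose estimate. No new analytical idea is required beyond this careful asymptotic comparison of terms already present in the underlying theorem.
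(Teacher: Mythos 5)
Your proposal is correct and follows essentially the same route as the paper: the paper never proves this statement itself (it is quoted directly as Corollary~2 of \citet{lian17nips}), and the only mechanism it indicates --- that the bound ``is obtained imposing that the term depending on the spectral gap is smaller than the other terms'' of Lian et al.'s Theorem~1 --- is exactly the dominance comparison you carry out, with your condition~(ii) reproducing $K_l$ with the correct $M^5$ and $(1-|\lambda_2|)^{-2}$ exponents. The same argument is what the paper itself replays in its appendix to extract $K'_l$ from Olshevsky's theorem, so your reconstruction is faithful both to the cited source and to the paper's methodology.
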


%Let $K_{l}$ be the right term of (\ref{eq:lowerboundLian}), i.e., the lower bound on the iteration from which the independence of topology should appear. 
We estimate the constants $L, \sigma^2$, and $K_l$ as follows:

\begin{description}
\item[Estimate of $L$:] $\widehat{L} = \max_j \max_{(\vw,\vv) \in \Xi} \frac{\lVert \nabla F_j(\vw) -  \nabla F_j(\vv) \rVert_2}{ \lVert \vw-\vv \rVert\_2 }$, where $\Xi$ is a random set of pairs of parameter vectors. 

\item[Estimate of $\sigma^2$:]  $\widehat{\sigma^2} = \frac{\max_k \lVert \Delta G(k) \rVert_F^2}{M} = \max_k \frac{\Esp{}(k)}{M}$, as the dataset is randomly split.

\item[Estimate of $K_l$:]  $\widehat{K_{l}} =  \frac{4\widehat{L}^4M^5}{\widehat{\sigma^2}(f(0)+\widehat{L})^2(1-|\lambda_2|)^2}$.
\end{description}
We observe that $\widehat{L}$ in general underestimates the Lipschitz constant $L$, and  $\widehat{\sigma^2}$ is likely to overestimate $\sigma^2$. Both effects lead to  underestimate $K_l$, that makes our conclusions stronger.

We evaluate these quantities for three machine learning problems (SUSY,\footnote{Minimization of hinge loss function (with L2 regularization) for classification on the dataset SUSY from~\citep{ucimlr} ($S=5 \times 10^5$, $n=18$~\citep{baldi14}). In this case the function $F(.)$ is strongly convex and subgradients have bounded energy (A5 and A$'$4 hold).} CT and MNIST). The tests are done on 16 workers with batch size $B=128$ and $|\Xi|=100$. 
As we want to measure from which iteration there is no difference between undirected ring and clique topologies, we consider  in (\ref{eq:lowerboundLian}) the spectral gap ($1-|\lambda_2|$) of the ring. The results are given in Table~\ref{tab:predictions_Lian}. Corollary~\ref{cor:Lian} predicts that topology insensitivity should be observed starting from $10^6$ iterations for MNIST and even after for the other problems. On the contrary, our experiments (see Fig.~\ref{f:regr_err_vs_iter} and~\ref{f:regr_iter_vs_time}) show no significant effect of topology. 

\begin{table}[]
    \centering
    \begin{tabular}{|cccc|}
    \hline
    
    Dataset& $\widehat{L}$& $\widehat{\sigma^2}$ &  $\widehat{K_{l}}$ \\
    \hline
    SUSY&5.03 & 2.82&1.0e10   \\
    CT&  37.56&1953.27& 9.2e11 \\
    MNIST& 86.05 &12.83& 2.3e6 \\
    \hline 
    \end{tabular}
    \caption{Number of iterations after which topology-insensitivity should manifest based on \citep{lian17nips} for three ML problems.}
    \label{tab:predictions_Lian}
\end{table}

\begin{thm}[Theorem (4.2) in \citep{olshevsky19nonasymptotic}] Under assumptions A1-A3, A$'$2-A$'$4, suppose $\theta > 2$, learning rate $\eta(k) = \frac{\theta}{\mu(k+K_0)}$.  For all $k\geq K_1-K_0$ it holds
\begin{eqnarray}
\mathbb{E}[\lVert\bar{\vw}(k) -\vw^*\rVert^2]&\leq & \frac{\theta^2\sigma^2}{(1.5\theta-1)M\mu^2\tilde{k}}+\frac{3\theta^2(1.5\theta-1)\sigma^2}{(1.5\theta-2)M\mu^2 \tilde{k}^2 } + \frac{6\theta L^2 V}{(1.5\theta-2)M\mu^2 \tilde{k}^2}, \label{eq:convergence}
\end{eqnarray}
where $\vw^*$ is the minimizer of $F(\cdot)$, $K_0 = \left\lceil \frac{2\theta L^2}{\mu^2} \right\rceil$, \mbox{$K_1 = \left\lceil \frac{24L^2\theta}{(1-|\lambda_2|^2)\mu^2}\right\rceil$}, $\tilde{k} = k+K_0$,\\ \mbox{$\nabla F(\vw^* \vone^\intercal )=[\nabla F_1(\vw^*), \nabla F_2(\vw^*),\dots \nabla F_M(\vw^*)]$}, $ X = \max\{\lVert\mW(0)-\vw^* \vone^\intercal\rVert^2_F, \frac{9\sum_{i=1}^M \lVert \nabla F_i(\vw^*)\rVert^2_2 }{\mu^2} +\frac{M\sigma^2}{L^2} \}$,\\ 
$ V=\max\left\{ K_1^2 X,\frac{8\theta^2|\lambda_2|^2}{\mu^2(1-|\lambda_2|^2)} \left( \frac{4\lVert \nabla F(\vw^* \vone^\intercal)\rVert^2_F }{1-|\lambda_2|^2}+M\sigma^2 +\frac{4ML^2  Y}{(1-|\lambda_2|^2)K_1} \right)\right\}$,  
 and $  Y = \frac{K_1 X}{M}+ \frac{3}{(4\theta-3)}\left(\frac{\sigma^2\theta^2}{M\mu^2}+\frac{\sigma^2|\lambda_2|^2\theta^2}{2\mu^2}\right) + \frac{12 \lVert \nabla F(\vw^* \vone^\intercal)\rVert^2_F|\lambda_2|^2\theta^2}{(4\theta-3)M\mu^2(1-|\lambda_2|^2)} $.
\label{thm:alex}
\end{thm}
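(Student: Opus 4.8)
The plan is to track two coupled quantities: the \emph{optimization error} $a(k)\triangleq\E[\norm*{\bar\vw(k)-\vw^*}_2^2]$ of the averaged iterate, and the \emph{disagreement} $b(k)\triangleq\E[\norm*{\mW(k)-\bar\vw(k)\vone^\intercal}_F^2]$. Because $\mA$ is doubly stochastic, right-multiplying the matrix recursion $\mW(k+1)=\mW(k)\mA-\eta(k)\mG(k)$ by $\vone/M$ shows that the average obeys a perturbed centralized SGD step, $\bar\vw(k+1)=\bar\vw(k)-\frac{\eta(k)}{M}\sum_j \vg_j(\vw_j(k))$, in which the stochastic gradients are evaluated at the \emph{local} points $\vw_j(k)$ rather than at $\bar\vw(k)$. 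The whole argument consists of writing a contraction recursion for $a(k)$ in which this mismatch appears as a bias term controlled by $b(k)$, writing a separate recursion for $b(k)$ that contracts at rate $|\lambda_2|^2$ and is driven by the gradient energy, and then solving the coupled system under the prescribed step size.

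First I would derive the optimization-error recursion. Expanding the square and taking conditional expectation, assumption A$'$3 contributes the noise term $\eta(k)^2\sigma^2/M$, and unbiasedness of $\vg_j$ turns the cross term into $-2\frac{\eta(k)}{M}\langle\bar\vw(k)-\vw^*,\sum_j\nabla F_j(\vw_j(k))\rangle$. I would split this by adding and subtracting $\nabla F_j(\bar\vw(k))$: the part involving $\nabla F_j(\bar\vw(k))$ yields, via $\mu$-strong convexity (A$'$4), a contraction factor $1-c\eta(k)\mu$; the remainder is bounded through the $L$-Lipschitz gradients (A$'$2) and Young's inequality by a multiple of $b(k)$. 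This produces a recursion of the schematic form $a(k+1)\le(1-c\eta(k)\mu)a(k)+\eta(k)^2\sigma^2/M+C_1\eta(k)\,b(k)$.

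Next I would control the disagreement. Writing $\mW(k)(\mI-\mP(\vone))$ and using $\mA(\mI-\mP(\vone))=\mA-\mP(\vone)$ together with the spectral bound $\norm*{\mA-\mP(\vone)}_2=|\lambda_2|$ established in Appendix~\ref{a:linear_algebra}, the disagreement contracts by $|\lambda_2|$ each step and is forced by $\eta(k)\,\norm*{\mG(k)(\mI-\mP(\vone))}_F$. Bounding the gradient energy via $\norm*{\nabla F_j(\vw_j)}_2^2\le 2L^2\norm*{\vw_j-\vw^*}_2^2+2\norm*{\nabla F_j(\vw^*)}_2^2$ and $\norm*{\vw_j-\vw^*}_2^2\le 2\norm*{\vw_j-\bar\vw}_2^2+2\norm*{\bar\vw-\vw^*}_2^2$ feeds both $a(k)$ and $b(k)$ back into the $b$-recursion, so the two inequalities are genuinely coupled. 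For $k\ge K_1-K_0$ the step size $\eta(k)=\theta/(\mu(k+K_0))$ is small enough that the contraction $|\lambda_2|^2$ dominates the forcing; this is exactly what the threshold $K_1=\lceil 24L^2\theta/((1-|\lambda_2|^2)\mu^2)\rceil$ encodes. I would then show $b(k)=O(\eta(k)^2)$ uniformly, with the transient behaviour before $K_1$ absorbed into the constants $X$, $Y$ and ultimately $V$. Substituting $b(k)=O(1/\tilde k^{\,2})$ back into the $a$-recursion turns the consensus bias into an $O(1/\tilde k^{\,3})$ forcing term.

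The final step is to solve the scalar recursion $a(k+1)\le(1-\tfrac{p}{\tilde k})a(k)+\tfrac{q}{\tilde k^{\,2}}+\tfrac{r}{\tilde k^{\,3}}$ with $p=c\theta$ by the standard induction lemma, which produces a leading $\tfrac{q}{(p-1)\tilde k}$ term and the $1/\tilde k^{\,2}$ corrections matching the three terms of the statement (the $1.5\theta$ factors appear once the contraction constant $c=3/2$ is pinned down). The main obstacle is the coupling in the middle step: the disagreement $b(k)$ that biases the optimization recursion is itself driven by a gradient energy that grows with both $a(k)$ and $b(k)$, so neither quantity can be bounded in isolation. Handling this cleanly requires either a joint Lyapunov function or a simultaneous induction propagating bounds on $a(k)$ and $b(k)$ together, and it is precisely this step that forces the burn-in threshold $K_1$ and the intricate transient constants $V$, $X$, $Y$ in the statement.
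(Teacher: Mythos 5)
This statement is not proved in the paper at all: it is Theorem~4.2 of \citep{olshevsky19nonasymptotic}, quoted verbatim in the appendix on ``insensitivity quantification in previous work'' solely so that the authors can substitute empirical estimates of $L$, $\mu$, $\sigma^2$ into it and derive the burn-in threshold $K'_l$ that they then compare with experiments. The paper's own contribution in that section is the manipulation \emph{after} the theorem (requiring the topology-dependent third term of the bound to be dominated by the first two), not the theorem itself. So there is no in-paper proof to compare your attempt against; the only meaningful comparison is with the proof in the cited reference.

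Judged on its own merits, your outline does reconstruct the strategy that the cited work actually uses: tracking the averaged-iterate error $a(k)=\E[\lVert\bar\vw(k)-\vw^*\rVert_2^2]$ and the disagreement $b(k)=\E[\lVert\mW(k)-\bar\vw(k)\vone^\intercal\rVert_F^2]$, a contraction recursion for $a(k)$ with a bias term proportional to the disagreement, a $\lvert\lambda_2\rvert$-contraction recursion for $b(k)$ forced by gradient energy, and a burn-in threshold $K_1$ past which $b(k)=O(\eta(k)^2)$; the appearance of $\lVert\nabla F(\vw^*\vone^\intercal)\rVert_F^2$ inside $V$ and $Y$ is exactly the footprint of your bound $\lVert\nabla F_j(\vw_j)\rVert_2^2\le 2L^2\lVert\vw_j-\vw^*\rVert_2^2+2\lVert\nabla F_j(\vw^*)\rVert_2^2$. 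The genuine gap is that what you have written is a proof \emph{shape}, whereas the statement being proved is nothing but a set of explicit constants: $K_0$, $K_1$, the coefficients $1.5\theta-1$, $1.5\theta-2$, $4\theta-3$, and the exact expressions for $X$, $Y$, $V$. Your proposal leaves every one of these unspecified (``$C_1$'', ``$O(\eta(k)^2)$'', ``transient behaviour absorbed into the constants'') and explicitly defers the joint induction/Lyapunov step that produces them, acknowledging it as the main obstacle. Completing your program would yield \emph{a} bound of the form $C_1/(M\tilde k)+C_2/(M\tilde k^2)+C_3 V'/(M\tilde k^2)$, but verifying that the recursion constants can be pinned down so that the bound holds with precisely the stated quantities (in particular that the consensus transient is controlled by exactly the stated $X$, $Y$, $V$) is the entire technical content of the theorem, and it is missing here.
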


%From Theorem~\ref{thm:alex}, we derive the following looser bound:
%\begin{eqnarray}
%\mathbb{E}[\lVert\bar{\vw}(k) -\vw^*\rVert^2]&< &  \frac{\theta^2\sigma^2}{(1.5\theta-1)M\mu^2\tilde{k}}+ \frac{6\theta^2\sigma^2}{M\mu^2\tilde{k}^2}+\frac{6\theta L^2 V}{(1.5\theta-2)M\mu^2\tilde{k}^2}, \label{eq:convergence}
%\end{eqnarray}

Notice that in (\ref{eq:convergence}), only the third term is related to the topology as $ V$ depends on the spectral gap of the consensus matrix. 
The bound in Corollary~\ref{cor:Lian} is obtained imposing that the term depending on the spectral gap is smaller than the other terms \citep[Thm.~1]{lian17nips}.  
Here we apply the same idea, requiring the third term of (\ref{eq:convergence}) to be bounded by the first and the second term of (\ref{eq:convergence}), i.e.,
\[\frac{6\theta L^2 V}{(1.5\theta-2)M\mu^2\tilde{k}^2} \leq \frac{\theta^2\sigma^2}{(1.5\theta-1)M\mu^2\tilde{k}}+ \frac{3\theta^2(1.5\theta-1)\sigma^2}{(1.5\theta-2)M\mu^2\tilde{k}^2}, \] 
and then 
\[\frac{6 L^2 V}{(1.5\theta-2) \tilde{k}} \leq \frac{\theta\sigma^2}{(1.5\theta-1)}+ \frac{3\theta(1.5\theta-1)\sigma^2}{(1.5\theta-2)\tilde{k}}. \] 
Then, we have
\[
\tilde{k} \geq \left(\frac{6L^2 V}{\theta\sigma^2}-4.5\theta+3 \right)\frac{1.5\theta-1}{1.5\theta-2}.
\]

As $ V\geq \left(\frac{24L^2\theta}{(1-|\lambda_2|^2)\mu^2}\right)^2\frac{M\sigma^2}{L^2}$ and $\theta>2$, assuming that $\frac{3456 ML^4}{\mu^4 (1-|\lambda_2|^2)^2}-\frac{2L^2}{\mu^2}- 4.5>0$, we conclude that the convergence rate is independent of the topology if the total number of iterations $k$ is sufficiently large, and more precisely if
%\begin{eqnarray}
%    k &\geq&\left[\left(\frac{3456 M L^4}{\mu^4(1-|\lambda_2|^2)^2}-4.5\right)\theta+3 \right]\frac{1.5\theta-1}{1.5\theta-2} - K_0 \nonumber \\
%    &\geq& \frac{6912 ML^4}{\mu^4(1-|\lambda_2|^2)^2} -6 - K_0.
%    \label{eq:alex}
%\end{eqnarray}

\begin{eqnarray}
    k &\geq&\left[\left(\frac{3456 M L^4}{\mu^4(1-|\lambda_2|^2)^2}-4.5\right)\theta+3 \right]\frac{1.5\theta-1}{1.5\theta-2} - K_0 \nonumber \\
    &>& \left[\left(\frac{3456 M L^4}{\mu^4(1-|\lambda_2|^2)^2}-4.5\right)\theta+3 \right] - \frac{2\theta L^2}{\mu^2} -1 \nonumber \\
    &=&\left(\frac{3456 M L^4}{\mu^4(1-|\lambda_2|^2)^2}-\frac{2L^2}{\mu^2}-4.5\right)\theta + 2 \nonumber  \\
    &>& \frac{6912 ML^4}{\mu^4(1-|\lambda_2|^2)^2} - \frac{4L^2}{\mu^2} - 7.\nonumber 
\end{eqnarray}

We denote the last expression in the sequence of inequalities as $K'_l$, i.e.,
\begin{equation}
    K'_l \triangleq \frac{6912 ML^4}{\mu^4(1-|\lambda_2|^2)^2} - \frac{4L^2}{\mu^2} - 7.\label{eq:alex}
\end{equation}

%Let $K'_l$ be the right term of (\ref{eq:alex}) which gives the lower bound for the appearance of independence of topology. 
Here we evaluate $K'_l$ for SUSY dataset, as its loss function is strongly convex fulfilling the assumptions of Thm.~\ref{thm:alex}. Remind that the loss function of SUSY is the hinge loss with L2 regularization $\frac{\mu}{2}\lVert \vw \rVert^2$, which is $\mu$-strongly convex. 

The quantity $L$ in (\ref{eq:alex}) is estimated as above. 
We derive estimates for  $K'_l$, $K_0$, $K_1$ and the rule to set the learning rate $\eta(k)$, by simply replacing $\widehat L$ in their analytical expressions. 

\begin{table}[]
    \centering
        \begin{tabular}{|c|ccccc|}
        \hline
        $\mu$& $\widehat{L}$  & $\widehat{K_0}$ & $\widehat{K_1}$ & $\eta(0)$& $\widehat{K'_l}$\\
        \hline 
        0.01 & 5.03& 2.5e6& 3.1e8& 2e-4& 7.4e17\\
        1 & 5.03& 254& 31140& 0.02& 7.4e9\\
        \hline
        \end{tabular}
    \caption{Number of iterations after which topology-insensitivity should manifest based on \citep{olshevsky19nonasymptotic} for SUSY dataset.}
    \label{tab:predictions_alex}
\end{table}

\begin{figure}
    \centering
    \includegraphics[scale=0.5]{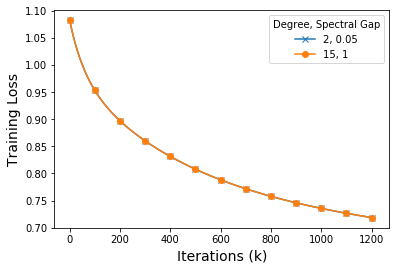}
    \caption{SUSY, M = 16, B = 128.}
    \label{fig:susy}
\end{figure}

In our experiments, we consider $M = 16$, $B = 128$. As the value of $K'_l$ is highly sensitive to $\mu$, we consider both $\mu = 0.01$ and $\mu=1$. 
The results are shown in Table~\ref{tab:predictions_alex}.
$\widehat{K_0}$, $\widehat{K_1}$ and $\eta(0)$ are measured for the case where $\theta = 5$.  

The values predicted for $K'_l$ are larger than 1e9 iterations. But again we observe no effect of topology in our experiments. For example, Fig.~\ref{fig:susy} shows the training loss over the number of iterations for $\mu=1$.
%for $$However, upon our observations in the experiment, when setting the learning rate schedule to $\frac{\theta}{\mu(k+K)}$ where $\mu = 1$, $\theta = 5$ and $K=\lceil \frac{2\theta L^2}{\mu^2} \rceil$, the independence of topology appears already in the early stage of training as shown in Fig.~\ref{fig:susy}. 

%, and the other parameters $$. Substituting the measured $\widehat{L}$ and $\mu$ into the (\ref{eq:alex}) where $\theta$ is set to 5 and $1-|\lambda_2|$ is the spectral gap of the ring topology, we obtain the estimates on the intermediate parameters $\widehat{K}$, $\widehat{K_1}$ from Thm.~\ref{thm:alex}, the initial learning rate $\eta(0)$ and $K'_l$. These results are shown in Table~\ref{tab:predictions_alex}.

\newpage
\section{Convergence results}
\label{a:convergence}

All the following results assume that the matrix $\mA$ is irreducible, primitive, doubly stochastic,  non-negative, and normal.

\begin{lem}
The following inequality holds. 
\begin{align}
&\EXs{\bm\xi}{||  \Delta\mW(k+1) ||_{F}} \le \sqrt{M}  \norm*{\Delta \mW(0)}_F |\lambda_2|^{k+1}  + \sum_{h=0}^k \eta(h) \sqrt{\sum_{l=2}^Q |\lambda_l|^{2(k-h)} \EXs{\bm\xi}{\norm*{\Delta\mG(h)  \mP_l}_F^2}} \label{e:state_ineq_proj}.
\end{align}

\end{lem}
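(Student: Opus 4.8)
The plan is to derive a closed form for $\Delta\mW(k+1)$ from the iterate expansion \eqref{e:from_start} and then bound its Frobenius norm term by term. First I would right-multiply \eqref{e:from_start} by $(\mI-\mP(\vone))$. Since $\mP(\vone)=\mP_1$ is the spectral projector associated with $\lambda_1=1$ and $\mA$ is doubly stochastic, $\mA\mP_1=\mP_1\mA=\mP_1$, so the spectral decomposition gives $\mA^m(\mI-\mP_1)=\sum_{l=2}^Q\lambda_l^m\mP_l$ for every $m\ge 0$. Using in addition $\mW(0)\mP_l=\Delta\mW(0)\mP_l$ and $\mG(h)\mP_l=\Delta\mG(h)\mP_l$ for $l\ge 2$ (because $\mP_l\mP_1=\vzero$), this yields
\[
\Delta\mW(k+1)=\Delta\mW(0)\sum_{l=2}^Q\lambda_l^{k+1}\mP_l-\sum_{h=0}^k\eta(h)\,\Delta\mG(h)\sum_{l=2}^Q\lambda_l^{k-h}\mP_l .
\]

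Next I would apply the triangle inequality for $\norm*{\cdot}_F$ pointwise in $\bm\xi$ and then take $\EXs{\bm\xi}{\cdot}$; the first term is deterministic (it involves only the initial estimates), so the expectation passes through it. For that term I would use submultiplicativity $\norm*{\mX\mY}_F\le\norm*{\mX}_F\norm*{\mY}_F$ together with the fact that the Frobenius-orthogonal projectors satisfy $\sum_{l=2}^Q\norm*{\mP_l}_F^2\le\sum_{l=1}^Q\norm*{\mP_l}_F^2=\sum_l\mathrm{rank}(\mP_l)=M$, which gives $\norm*{\sum_{l=2}^Q\lambda_l^{k+1}\mP_l}_F\le\sqrt{M}\,|\lambda_2|^{k+1}$ and hence the leading factor $\sqrt{M}\norm*{\Delta\mW(0)}_F|\lambda_2|^{k+1}$.

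For each summand in the stochastic part the key is a Pythagorean identity. The matrices $\{\Delta\mG(h)\mP_l\}_{l\ge 2}$ are mutually orthogonal in the Frobenius inner product: $\langle\Delta\mG(h)\mP_l,\Delta\mG(h)\mP_{l'}\rangle_F=\Tr(\mP_{l'}\mP_l\,\Delta\mG(h)^\intercal\Delta\mG(h))=0$ for $l\ne l'$, by the cyclic property of the trace and $\mP_l\mP_{l'}=\vzero$. Consequently $\norm*{\Delta\mG(h)\sum_{l\ge 2}\lambda_l^{k-h}\mP_l}_F^2=\sum_{l=2}^Q|\lambda_l|^{2(k-h)}\norm*{\Delta\mG(h)\mP_l}_F^2$. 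Applying Jensen's inequality $\EXs{\bm\xi}{\sqrt{X}}\le\sqrt{\EXs{\bm\xi}{X}}$ then moves the expectation inside the square root and delivers exactly the second term of \eqref{e:state_ineq_proj}.

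The main subtlety to handle carefully is that $\mA$ is only assumed normal, not symmetric, so its eigenvalues $\lambda_l$ and projectors $\mP_l$ may be complex. I would therefore phrase the orthogonality and the Pythagorean step using the Hermitian property $\mP_l^{*}=\mP_l$ and the conjugate-transpose Frobenius inner product, so that the coefficients enter as moduli $|\lambda_l|^{2(k-h)}$ rather than as $\lambda_l^{2(k-h)}$, while the combination $\sum_l\lambda_l^m\mP_l=\mA^m$ stays real throughout. Everything else is a routine chain of norm inequalities, so this complex-projector bookkeeping is the only place that requires genuine care.
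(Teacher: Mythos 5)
Your proof is correct and takes essentially the same route as the paper's: both expand $\Delta\mW(k+1)$ through the spectral decomposition of $\mA$ with the $\mP(\vone)=\mP_1$ component removed, bound the deterministic initial term by $\sqrt{M}\,\norm*{\Delta \mW(0)}_F\,|\lambda_2|^{k+1}$ via submultiplicativity, obtain $\sum_{l\ge 2}|\lambda_l|^{2(k-h)}\norm*{\Delta\mG(h)\mP_l}_F^2$ through a Pythagorean identity over the eigen-subspaces, and conclude with Jensen's inequality. The only differences are cosmetic --- the paper eliminates the averaged components using row/column stochasticity of $\mA$ and performs the Pythagorean step row by row via \eqref{e:spectral_decomp}, whereas you right-multiply by $\mI-\mP(\vone)$ and argue at the matrix level with Frobenius orthogonality of the projectors, with your explicit Hermitian-projector bookkeeping being a slightly more careful treatment of the complex-eigenvalue case than the paper's.
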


\begin{proof}
	\begin{align}
    \lVert\Delta\mW(k+1) \rVert_{F} & = \left\Vert\mW(0) \left( \mA^{k+1} - \mP(\vone) \right) -   \sum_{h=0}^k  \eta(h) \mG(h) \left( \mA^{k-h} - \mP(\vone)  \right)\right\Vert_{F}\label{e:difference}\\
	&   = \left\Vert \Delta \mW(0) \left( \mA^{k+1} - \mP(\vone) \right) -   \sum_{h=0}^k  \eta(h) \Delta \mG(h) \left( \mA^{k-h} - \mP(\vone)  \right)\right\Vert_{F}\label{e:remove_avg}\\
	& 	\le \left\Vert \Delta\mW(0) \left( \mA^{k+1} - \mP(\vone) \right)\right\Vert_{F} +  \sum_{h=0}^k \eta(h)\left\Vert \Delta\mG(h)  \left( \mA^{k-h} - \mP(\vone)  \right)\right\Vert_{F}\\
	& 	=  \left\Vert \Delta \mW(0) \left( \mA - \mP(\vone) \right)^{k+1}\right\Vert_{F} +  \sum_{h=0}^k \eta(h)\left\Vert \Delta\mG(h)  \left( \mA - \mP(\vone)  \right)^{k-h}\right\Vert_{F}\label{e:state_ineq_common}\\
	& 	=  \left\Vert \Delta\mW(0) (\Delta \mA)^{k+1}\right\Vert_{F} +  \sum_{h=0}^k \eta(h)\left\Vert \Delta\mG(h) (\Delta \mA)^{k-h}\right\Vert_{F}.
	\end{align}
The first equality~\eqref{e:difference} follows from \eqref{e:from_start}, $\overline \mW(k+1) = \mW(k+1) \mP(\vone)$, and  $\mA \mP(\vone) = \mP(\vone)$, because $\mA$ is row stochastic. \Eqref{e:remove_avg} follows from the fact that, for any matrix $\mB$, $\overline \mB = \mB \mP(\vone)$ and then $\overline \mB (\mA^h - \mP(\vone)) = \mB \mP(\vone) \mA^h - \mB \mP(\vone)^2 = \mB \mP(\vone) - \mB \mP(\vone) = \mathbf 0$, because $\mA$
 is column stochastic and $\mP(\vone)$ is a projector.
For \eqref{e:state_ineq_common} expand $(\mA - \mP(\vone))^h$ taking into account again that  $\mA$ is row stochastic.

Let us now bound separately the two terms on the right hand side of inequality~\eqref{e:state_ineq_common}. For the first one it holds
\begin{align}
\left\Vert \Delta \mW(0) (\Delta \mA)^{k+1}\right\Vert_{F} & \le \norm*{\Delta \mW(0)}_F \norm*{(\Delta \mA)^{k+1}}_F \nonumber\\
	& = \norm*{\Delta \mW(0)}_F \sqrt{\sum_{l=2}^M |\lambda_l|^{2 (k+1)}} \nonumber \\
	&\le \sqrt{M}  \norm*{\Delta \mW(0)}_F |\lambda_2|^{k+1} \label{e:bound_first_term},
\end{align}
where the first inequality follows from the sub-multiplicative property of Frobenius norm.
For the second term on the right hand side of~\eqref{e:state_ineq_common}, we carry out a more careful analysis.
\begin{align}
\norm*{\Delta\mG(h) (\Delta \mA)^{k-h}}_{F}^2 & = \sum_{i=1}^n  \norm*{\Delta\mG_{i,:}(h) (\Delta \mA)^{k-h}}_2^2 \nonumber \\
& = \sum_{i=1}^n  \norm*{\Delta\mG_{i,:}(h) \sum_{l=2}^Q |\lambda_l|^{k-h} \mP_l}_2^2\nonumber\\
			& =  \sum_{i=1}^n  \sum_{l=2}^Q |\lambda_l|^{2(k-h)} \norm*{\Delta\mG_{i,:}(h)  \mP_l}_2^2\nonumber\\ 
			& =   \sum_{l=2}^Q |\lambda_l|^{2(k-h)} \sum_{i=1}^n \norm*{\Delta\mG_{i,:}(h)  \mP_l}_2^2 \nonumber \\
			& =   \sum_{l=2}^Q |\lambda_l|^{2(k-h)} \norm*{\Delta\mG(h)  \mP_l}_F^2.\label{e:bound_second_term}
\end{align}
From \eqref{e:state_ineq_common}, \eqref{e:bound_first_term}, \eqref{e:bound_second_term}, and Jensen's inequality, it follows
\begin{align*}
\EXs{\bm\xi}{||  \Delta\mW(k+1) ||_{F}} \le \sqrt{M}  \norm*{\Delta \mW(0)}_F |\lambda_2|^{k+1}  +\sum_{h=0}^k \eta(h)\sqrt{\sum_{l=2}^Q |\lambda_l|^{2(k-h)} \EXs{\bm\xi}{\norm*{\Delta\mG(h)  \mP_l}_F^2}}.
\end{align*}
\end{proof}

Let $R$ be a bound on the energy of the initial parameter vector across the different nodes, i.e.,~$\left \lVert \mW(0) \right \rVert_F^2 \le R$, and $\Rsp$ be the corresponding bound for the spread of the parameter vectors around their averages, i.e.,~$\left \lVert \mW(0) - \overline \mW(0) \right \rVert_F^2 \le \Rsp$.
Similarly, we define $E$ and $\Esp{}$ as bounds for the subgradient matrix $\Delta G$ for any time $h$:
$\sup_{h \ge0} \EXs{\bm\xi}{\left \lVert \mG(h) \right \rVert_F^2} \le E$, $\sup_{h \ge0} \EXs{\bm\xi}{\left \lVert \Delta \mG(h) \right \rVert_F^2} \le \Esp{}$. 
We observe that $\Rsp \le R$ and $ \Esp{} \le E$.

Moreover, for a normal matrix $\mA$, we define for $l=1, \dots M$: 
\[  \Esp{l} \triangleq \sup_{h \ge0}   \EXs{\bm\xi}{\norm*{\Delta \mG_{i,:}(h)  \sum_{l'=2}^l \mP_{l'}}_F^2}, \]
with the usual convention that $\sum_i^j \cdot = 0$ if $j<1$ and then $\Esp{1}=0$.
We observe that $\Esp{l}$ represents the maximum expected energy $\Delta \mG(h)$ 
in the projection subspace defined by the first $l$ projectors. In particular it holds
\begin{align}
	\Esp{M} 	&	= \sup_{h \ge0}  \left \lVert \Delta\mG(h) \sum_{l'=2}^M \mP_{l'} \right \rVert_F^2 
			= \sup_{h \ge0}   \left \lVert \Delta\mG(h) (\mI - \mP_1) \right \rVert_F^2  \\
		& =  \sup_{h \ge0}   \left \lVert \Delta\mG(h) \right \rVert_F^2 \le \Esp{}.
\end{align}
Let us now consider the normalized fraction of energy in each subspace, defined as follows:
\begin{align}
	e_l & \triangleq \frac{\Esp{l} - \Esp{l-1}}{\Esp{M}},
\end{align}
so that $\sum_l e_l = 1$.
Finally, let 
\begin{align}
	\alpha(h) &\triangleq  \sqrt{\sum_{l=2}^M e_l \left| \frac{\lambda_l}{\lambda_2} \right|^{2 h}},
\end{align}
and we denote $\alpha(1)$ simply as $\alpha$.
We observe that $| \lambda_l/ \lambda_2| \le 1$, then $\alpha(h)$ is decreasing in $h$. Moreover, $e_2 \le 1$ as
\[\sqrt{e_2}= \sqrt{ e_2 \left| \frac{\lambda_2}{\lambda_2} \right|^{2 h}} \le  \sqrt{\sum_{l=2}^M e_l \left| \frac{\lambda_l}{\lambda_2} \right|^{2 h}} \le \sqrt{\sum_{l=2}^M e_l }=1. \]
 $\alpha(h)$ can be considered a bound for the effective energy contribution of the vector $\Delta\mG(h) $ in the projection subspace defined by $\mP_2$.

%Moreover, let $\Lpr$ denote a bound for the projection of the spread of the subgradients, i.e.,~$\sup_{h \ge0} \max_l \max_i \left \lVert \left( \mG_{i,:}(h) - \overline \mG_{i,:}(h) \right) \mP_l \right \rVert_2 \le \Lpr$.

\begin{cor} 
\label{c:max_dist}
Considering the definition of $\Rsp$, $\Esp{}$, and $\alpha(l)$, the following inequality holds for a constant learning rate $\eta$: 
\begin{align}
& ||  \Delta\mW(k)||_{F}  \le  \sqrt{M} \sqrt{\Rsp} |\lambda_2|^{k} +  \eta  \sqrt{\Esp{}} \left( (1- \alpha) \mathbbm 1_{k \ge 1} + \alpha \frac{1-| \lambda_2 | ^{k}}{1-| \lambda_2 |  }\right).\label{e:state_ineq_proj2}
\end{align}
\end{cor}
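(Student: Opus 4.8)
The plan is to derive the corollary directly from the lemma \eqref{e:state_ineq_proj} by specializing to a constant learning rate $\eta(h)=\eta$ and reindexing. First I would replace $k+1$ by $k$ (i.e.\ substitute the dummy $k$ in the lemma by $k-1$), obtaining
\[
\EXs{\bm\xi}{\norm*{\Delta\mW(k)}_F} \le \sqrt{M}\,\norm*{\Delta\mW(0)}_F\,|\lambda_2|^{k} + \eta\sum_{h=0}^{k-1}\sqrt{\sum_{l=2}^Q |\lambda_l|^{2(k-1-h)}\,\EXs{\bm\xi}{\norm*{\Delta\mG(h)\mP_l}_F^2}}.
\]
The first term is immediately bounded by $\sqrt{M}\sqrt{\Rsp}\,|\lambda_2|^{k}$ using $\norm*{\Delta\mW(0)}_F^2\le\Rsp$, producing the first summand of \eqref{e:state_ineq_proj2} verbatim. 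All the remaining work is in rewriting the double sum as the bracketed expression.

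For that sum the key step is to replace each per-subspace energy by its normalized bound. Since the spectral projectors $\mP_l$ are orthogonal (so that $\norm*{\Delta\mG(h)\sum_{l'=2}^{l}\mP_{l'}}_F^2=\sum_{l'=2}^{l}\norm*{\Delta\mG(h)\mP_{l'}}_F^2$), the definitions of $\Esp{l}$, $e_l$, and $\Esp{M}\le\Esp{}$ give $\EXs{\bm\xi}{\norm*{\Delta\mG(h)\mP_l}_F^2}\le e_l\,\Esp{}$ for every $h$ and every $l\ge 2$. Substituting this bound and pulling $\sqrt{\Esp{}}$ out of the square root yields $\eta\sqrt{\Esp{}}\sum_{h=0}^{k-1}\sqrt{\sum_{l=2}^Q e_l|\lambda_l|^{2(k-1-h)}}$. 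I would then factor $|\lambda_2|^{2(k-1-h)}$ out of the inner sum so that what remains is exactly $\alpha(k-1-h)^2$ by the definition of $\alpha(\cdot)$; the second term becomes $\eta\sqrt{\Esp{}}\sum_{h=0}^{k-1}|\lambda_2|^{k-1-h}\alpha(k-1-h)$, and the change of variable $j=k-1-h$ rewrites it as $\eta\sqrt{\Esp{}}\sum_{j=0}^{k-1}|\lambda_2|^{j}\alpha(j)$.

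The one point requiring care is bounding this weighted geometric sum by the claimed closed form. I would split off the $j=0$ term, where $\alpha(0)=\sqrt{\sum_{l\ge2}e_l}=1$, and bound the remaining terms using the monotonicity $\alpha(j)\le\alpha(1)=\alpha$ for $j\ge 1$, which holds because $|\lambda_l/\lambda_2|\le 1$ makes $\alpha(\cdot)$ nonincreasing. This gives, for $k\ge 1$,
\[
\sum_{j=0}^{k-1}|\lambda_2|^{j}\alpha(j) \le 1 + \alpha\!\left(\frac{1-|\lambda_2|^{k}}{1-|\lambda_2|}-1\right) = (1-\alpha) + \alpha\,\frac{1-|\lambda_2|^{k}}{1-|\lambda_2|},
\]
which is precisely the bracket in \eqref{e:state_ineq_proj2}; the additive constant $1-\alpha$ arises exactly because the $j=0$ weight is $1$ rather than $\alpha$, and this is what the indicator $\mathbbm 1_{k\ge 1}$ records. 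For $k=0$ the sum is empty and both the indicator and the geometric factor vanish, so the bound is trivial. Multiplying by $\eta\sqrt{\Esp{}}$ and adding the first term completes the proof. Everything but the isolation of the $j=0$ term and the passage from the exact weights $\alpha(j)$ to the single constant $\alpha$ is routine bookkeeping and a geometric-series evaluation; that split is the only step where the structure of the bound is genuinely decided.
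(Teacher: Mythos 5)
Your proposal is correct and follows essentially the same route as the paper's own proof: starting from \eqref{e:state_ineq_proj}, bounding the initial-spread term by $\sqrt{M}\sqrt{\Rsp}\,|\lambda_2|^{k}$, replacing the per-subspace energies by $e_l \Esp{}$, factoring out $|\lambda_2|$ to recognize $\alpha(\cdot)$, and then splitting off the $j=0$ term (where $\alpha(0)=1$) before invoking monotonicity $\alpha(j)\le\alpha(1)=\alpha$ and summing the geometric series. The only cosmetic difference is that you reindex ($k\mapsto k-1$) at the outset while the paper works at index $k+1$ and shifts at the end, handling $k=0$ via the same indicator argument.
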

\begin{proof}
	The first term on the right hand side bounds $\sqrt{M} \norm*{\mW(0)}_F$ by definition of $\Rsp$.
	% and the fact that the learning rate $\eta(h)$ is non-decreasing. 
	%The bound in (\ref{e:state_ineq_spread2}) is tighter because $\Rsp \le R$ and $\Lsp \le L$.

Observe that
\begin{align}
	 \sqrt{ \sum_{l=2}^M  | \lambda_l| ^{2(k-h)} \EXs{\bm\xi}{\left  \lVert \Delta\mG(h) \mP_l \right \rVert_{F}^2 }}
	& \le  |\lambda_2|^{k-h} \sqrt{\Esp{}} \sqrt{ \sum_{l=2}^M \frac{\EXs{\bm\xi}{\left  \lVert \Delta\mG(h) \mP_l \right \rVert_{F}^2}}{\Esp{}}  \left| \frac{\lambda_l}
{\lambda_2} \right|^{2 (k-h)}}\\
	& \le |\lambda_2|^{k-h} \sqrt{\Esp{}} \sqrt{\sum_{l=2}^M e_l   \left| \frac{\lambda_l}
{\lambda_2} \right|^{2 (k-h)}} \\
&\le |\lambda_2|^{k-h} \sqrt{\Esp{}} \alpha(k-h).
\end{align}
Using this bound, we obtain
\begin{align}
	\sum_{h=0}^{k}   \sqrt{ \sum_{l=2}^M  | \lambda_l| ^{2(k-h)} \EXs{\bm\xi}{\left  \lVert \Delta\mG(h) \mP_l \right \rVert_{2}^2 }}  
	&\le \sqrt{\Esp{}} \sum_{h=0}^{k} \alpha(h)    |\lambda_2|^{h}\\
	& = \sqrt{\Esp{}} \left(1+ \sum_{h=1}^{k} \alpha(h)    |\lambda_2|^{h}\right)\\
   & \le \sqrt{\Esp{}} \left(1+ \alpha(1) \sum_{h=1}^{k}     |\lambda_2|^{h}\right) \\
   & = \sqrt{\Esp{}} \left(1+ \alpha \sum_{h=1}^{k}     |\lambda_2|^{h}\right) \\
	& = \sqrt{\Esp{}} \left( (1-\alpha) + \alpha \sum_{h=0}^{k}     |\lambda_2|^{h}\right)\\
	& = \sqrt{\Esp{}} \left( (1-\alpha) + \alpha \frac{1 - |\lambda_2|^{k+1}}{1- |\lambda_2|}\right).
\end{align}
From this last bound and \eqref{e:state_ineq_proj}, an inequality similar to \eqref{e:state_ineq_proj2}, but without the indicator function, follows immediately. The indicator function can be introduced because, for $k=0$, it is simply $||  \Delta\mW(k)||_{F}  \le  \sqrt{\Rsp} $.

%	\begin{align*}
%	\sqrt{ \sum_{l=2}^s  | \lambda_l| ^{2(k-h)} 

\end{proof}

\begin{lem}
\label{l:distance}
Let $\sW^*$  the (non-empty)  optimal solution set. It holds:
\begin{align}
&\EXs{\bm\xi}{{\dist {\overline \vw(k+1)} {\sW^*}}^2} \le  \EXs{\bm\xi}{{\dist {\overline \vw(k)} {\sW^*}}^2}  + \frac{\eta^2 E}{M}   + \frac{4 \eta H}{M}  \norm*{\Delta\mW(k)}_F  - \frac{2 \eta}{M} \left( \EXs{\bm\xi}{F(\overline \vw(k))} - F^*\right).
\end{align}
where  $\dist \vx \sX$ denotes the distance between a vector $\vx$ and the set $\sX$.
\end{lem}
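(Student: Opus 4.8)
The plan is to track the node-averaged iterate $\overline\vw(k)=\tfrac1M\mW(k)\vone$ and reduce the statement to a one-step inequality for a single (stochastic) subgradient step, as in the centralized analysis, paying a correction for the inter-node disagreement $\Delta\mW(k)$. First I would derive the dynamics of the average: right-multiplying $\mW(k+1)=\mW(k)\mA-\eta\mG(k)$ by $\vone/M$ and using that $\mA$ is doubly stochastic (so $\mA\vone=\vone$) gives
\[
\overline\vw(k+1)=\overline\vw(k)-\eta\,\overline\vg(k),\qquad \overline\vg(k)\triangleq\tfrac1M\mG(k)\vone=\tfrac1M\sum_j\vg_j(\vw_j(k)).
\]
Fixing the closest minimizer $\vw^*\in\sW^*$ to $\overline\vw(k)$, so that $\norm*{\overline\vw(k)-\vw^*}_2^2=\dist{\overline\vw(k)}{\sW^*}^2$, and using $\dist{\overline\vw(k+1)}{\sW^*}^2\le\norm*{\overline\vw(k+1)-\vw^*}_2^2$, I expand the square to get
\[
\dist{\overline\vw(k+1)}{\sW^*}^2\le\dist{\overline\vw(k)}{\sW^*}^2-2\eta\langle\overline\vg(k),\overline\vw(k)-\vw^*\rangle+\eta^2\norm*{\overline\vg(k)}_2^2.
\]

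The quadratic term is immediate: by Jensen's inequality (convexity of $\norm*{\cdot}_2^2$), $\norm*{\overline\vg(k)}_2^2\le\tfrac1M\sum_j\norm*{\vg_j(\vw_j(k))}_2^2=\tfrac1M\norm*{\mG(k)}_F^2$, so after taking expectations assumption A5 yields $\eta^2\EXs{\bm\xi}{\norm*{\overline\vg(k)}_2^2}\le\eta^2E/M$, the second term of the bound.

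The crux is the cross term. I would condition on the history $\gF_k$ (so $\mW(k)$, hence $\overline\vw(k)$ and every $\vw_j(k)$, is determined), replacing $\overline\vg(k)$ by its conditional mean $\tfrac1M\sum_j\tilde\vg_j$ with $\tilde\vg_j\triangleq\EXs{}{\vg_j(\vw_j(k))\mid\gF_k}\in\partial F_j(\vw_j(k))$. The target is the ``distorted subgradient inequality''
\[
\sum_j\langle\tilde\vg_j,\overline\vw(k)-\vw^*\rangle\ \ge\ F(\overline\vw(k))-F^*-2H\norm*{\Delta\mW(k)}_F.
\]
To get it I split $\overline\vw(k)-\vw^*=(\vw_j(k)-\vw^*)+(\overline\vw(k)-\vw_j(k))$ and apply convexity of $F_j$ at $\vw_j(k)$ to the first piece, giving $\langle\tilde\vg_j,\vw_j(k)-\vw^*\rangle\ge F_j(\vw_j(k))-F_j(\vw^*)$; the second piece is controlled by Cauchy--Schwarz. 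To convert $\sum_jF_j(\vw_j(k))$ into $F(\overline\vw(k))=\sum_jF_j(\overline\vw(k))$ I invoke convexity a second time with a subgradient $\vh_j\in\partial F_j(\overline\vw(k))$, namely $F_j(\vw_j(k))\ge F_j(\overline\vw(k))+\langle\vh_j,\vw_j(k)-\overline\vw(k)\rangle$. Each of the two resulting correction sums is bounded across nodes by $\sqrt{\sum_j\norm*{\cdot}_2^2}\,\norm*{\Delta\mW(k)}_F$, since $\sqrt{\sum_j\norm*{\overline\vw(k)-\vw_j(k)}_2^2}=\norm*{\Delta\mW(k)}_F$; bounding both subgradient-energy factors by $H$ yields the factor $2H$, which becomes the $\tfrac{4\eta H}{M}\norm*{\Delta\mW(k)}_F$ term after multiplication by $2\eta/M$.

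The main obstacle is exactly this last estimate: the two-point convexity argument forces me to bound subgradients of $F_j$ both at the local iterate $\vw_j(k)$, where A5 applies directly through $\norm*{\EXs{}{\mG(k)\mid\gF_k}}_F\le H$, and at the average $\overline\vw(k)$, which is \emph{not} one of the points where the stochastic subgradients are sampled; I must argue that the same energy bound $H$ governs the subgradients $\vh_j$ there as well. Once the distorted inequality is secured, I assemble the three contributions, divide the cross term by $M$, and take the total expectation over $\bm\xi$ via the tower property, using that $\dist{\overline\vw(k)}{\sW^*}^2$, $F(\overline\vw(k))$, and $\norm*{\Delta\mW(k)}_F$ are $\gF_k$-measurable, to recover the stated inequality.
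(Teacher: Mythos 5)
Your proposal is correct and follows essentially the same route as the paper's proof (itself an adaptation of Nedi\'c--Ozdaglar's Lemma~5): expand the squared distance of the averaged iterate, bound the quadratic term by Jensen and A5 to get $\eta^2E/M$, and control the cross term by two applications of the subgradient inequality---at $\vw_j(k)$ and at $\overline\vw(k)$---plus Cauchy--Schwarz, producing the $2H\norm*{\Delta\mW(k)}_F$ correction. The ``obstacle'' you flag is present in the paper too and is handled only implicitly: it takes the subgradients at $\overline\vw(k)$ to be the expected stochastic subgradients $\EXs{\bm\xi}{\tilde\mG(k)}$ and applies $\norm*{\EXs{\bm\xi}{\tilde\mG(k)}}_F\le H$ without comment (reading A5/$H$ as a bound valid at any evaluation point), while your explicit conditioning on $\gF_k$ and your choice of the projection of $\overline\vw(k)$ onto $\sW^*$ (rather than the paper's ``generic point'' closing step) make the passage to the stated inequality slightly cleaner.
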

\begin{proof}
The proof follows closely the proof in~{\cite[Lemma~5]{nedic09}}, replacing the Euclidean norm with the Frobenius norm used in this paper.

We denote the subgradients of $F_j$ in $\vw_j(k)$ and $\overline \vw(k)$ simply as   $\vg_j(k)$ and $\tilde \vg_j(k)$, respectively, i.e.,~$\vg_j(k)=\vg_j(\vw_j(k))$ and $\tilde \vg_j(k) = \vg_j(\overline \vw(k))$. Let $\mG(k)$ and $\tilde \mG(k)$ be respectively the matrices whose columns are $\vg_j(k)$ and $\tilde \vg_j(k)$.
Let $\vx$ be a generic vector in $\mathbb R^n$.
\begin{align}
   \norm*{\overline \vw(k+1) - \vx}_2^2  & = \norm*{\overline \vw(k) - \vx - \frac{\eta}{M} \sum_{j=1}^M \vg_j(\vw_j(k))}_2^2\nonumber\\
            & = \norm*{\overline \vw(k) - \vx}_2^2 + \frac{\eta^2}{M^2} \norm*{ \sum_{j=1}^M \vg_j(k)}_2^2 - \frac{2 \eta}{M} \sum_{j=1}^M \vg_j(k)^\intercal (\overline \vw(k) - \vx)\nonumber\\
            & \leq \norm*{\overline \vw(k) - \vx}_2^2 + \frac{\eta^2}{M} \norm*{ \mG(k)}_F^2 - \frac{2 \eta}{M} \sum_{j=1}^M \vg_j(k)^\intercal (\overline \vw(k) - \vx).\label{e:quadratic}
\end{align}
Let us bound the scalar product $\vg_j(k)^\intercal (\overline \vw(k)-\vx)$:
\begin{align}
    \vg_j(k)^\intercal  (\overline \vw(k) - \vx) & = \vg_j(k)^\intercal (\overline \vw(k) - \vw_j(k)) + \vg_j(k)^\intercal (\vw_j(k)- \vx) \nonumber\\
    & \ge \vg_j(k)^\intercal (\overline \vw(k) - \vw_j(k)) + F_j(\vw_j(k)) - F_j(\vx) \label{e:subgrad1}\\
    & = \vg_j(k)^\intercal (\overline \vw(k) - \vw_j(k)) + F_j(\vw_j(k)) - F_j(\overline \vw(k))     + F_j(\overline \vw(k)) - F_j(\vx)\nonumber\\
    & \ge \vg_j(k)^\intercal (\overline \vw(k) - \vw_j(k)) + \tilde \vg_j(k)^\intercal (\vw_j(k)-\overline \vw(k))  + F_j(\overline \vw(k)) - F_j(\vx)\label{e:subgrad2},
\end{align}
where \eqref{e:subgrad1} follows from $\vg_j(k)$ being a subgradient of $F_j$ in $\vw_j(k)$ and \eqref{e:subgrad2}  from $\tilde \vg_j(k)$ being a subgradient of $F_j$ in $\overline \vw(k)$.
Summing over $j$ the LHS and RHS of the above inequality, we obtain:
\begin{alignat}{2}
    \sum_{j=1}^M  \vg_j(k)^\intercal  (\overline \vw(k) - \vx) & \ge  \sum_{j=1}^M  \vg_j(k)^\intercal (\overline \vw(k) - \vw_j(k))  + \sum_{j=1}^M \tilde \vg_j(k)^\intercal (\vw_j(k)-\overline \vw(k)) + \sum_{j=1}^M F_j(\overline \vw(k)) -  \sum_{j=1}^M F_j(\vx) \nonumber \\
    & = \mathrlap{ - \langle \mG(k), \Delta\mW(k) \rangle_F + \langle \tilde \mG(k), \Delta\mW(k) \rangle_F + F(\overline \vw(k)) - F(\vx),} \nonumber 
\end{alignat}
where we have used the definition of the Frobenius inner product~\eqref{e:frobenius_prod}.
By computing the expected value we obtain
\begin{alignat}{2}
    \EXs{\bm\xi}{\sum_{j=1}^M \vg_j(k)^\intercal  (\overline \vw(k) - \vx)} &\ge \left( \langle \EXs{\bm\xi}{\mG(k)}, \Delta\mW(k) \rangle_F - \langle \EXs{\bm\xi}{\tilde \mG(k)}, \Delta\mW(k) \rangle_F\right) + \EXs{\bm\xi}{F(\overline \vw(k)) - F(\vx)}\nonumber\\
    & \mathrlap{\ge - \left( \norm*{\EXs{\bm\xi}{\mG(k)}}_F + \norm*{\EXs{\bm\xi}{\tilde\mG(k)}}_F \right) \norm*{\Delta\mW(k)}_F + \EXs{\bm\xi}{F(\overline \vw(k)) - F(\vx)}}\nonumber\\
    %& \mathrlap{\ge - \left( \EXs{\bm\xi}{\norm*{\mG(k)}_F} + \EXs{\bm\xi}{\norm*{\tilde\mG(k)}_F} \right) \norm*{\Delta\mW(k)}_F + \EXs{\bm\xi}{F(\overline \vw(k)) - F(\vx)}}\nonumber\\
    %& \mathrlap{\ge - \left( \sqrt{\EXs{\bm\xi}{\norm*{\mG(k)}_F^2}} + \sqrt{\EXs{\bm\xi}{\norm*{\tilde\mG(k)}_F^2}} \right) \norm*{\Delta\mW(k)}_F + \EXs{\bm\xi}{F(\overline \vw(k)) - F(\vx)}}\nonumber\\
    & \mathrlap{\ge - 2 H\norm*{\Delta\mW(k)}_F + \EXs{\bm\xi}{F(\overline \vw(k)) - F(\vx)}}.\label{e:exp_scalar}
\end{alignat}

From~\eqref{e:quadratic} and~\eqref{e:exp_scalar}, we obtain:
\begin{align}
    \EXs{\bm\xi}{\norm*{\overline \vw(k+1) - \vx}_2^2} \le &  \EXs{\bm\xi}{\norm*{\overline \vw(k) - \vx}_2^2} + \frac{\eta^2 E}{M}
         + \frac{4 \eta H}{M}  \norm*{\Delta\mW(k)}_F  - \frac{2 \eta}{M} \EXs{\bm\xi}{F(\overline \vw(k)) - F(\vx)}.
\end{align}
Then the thesis follows from considering $\vx$ a generic point in the optimal set $\sW^*$.
\end{proof}

\subsection{Proof of Proposition~\ref{p:new_bound}}
\label{a:proof_new_bound}
\begin{proof}
We start computing a bound for the time average of $\norm*{\Delta \mW(k)}_F$ using Corollary~\ref{c:max_dist}:
\begin{align}
\frac{1}{K}\sum_{k=0}^{K-1} \norm*{\Delta \mW(k)}_F \le &
    \frac{\sqrt{M}\sqrt{\Rsp}}{K} \frac{1 - |\lambda_2|^{K}}{1- |\lambda_2|}
    +  \eta \sqrt{\Esp{}} (1-\alpha) \frac{K-1}{K}\nonumber\\
    &+ \frac{ \eta \sqrt{\Esp{}}\alpha}{1- |\lambda_2|} \left( 1- \frac{1}{K} \frac{1 - |\lambda_2|^{K}}{1- |\lambda_2|} \right)\label{e:avg_frob_bound}
\end{align}

%From Lemma~\ref{l:distance}, the norm inequality $\lVert \vx \rVert_2 \le n^{1/2} \lVert \vx  \rVert_{\max} $ for any $\vx \in \mathbb R^n$, and \eqref{e:state_ineq_classic2} from Corollary~\ref{c:max_dist}, it follows:
%\begin{align}
%	F(\overline \vw(k)) & - F^*	 \le \frac{M}{2 \eta} \left({\dist{\overline \vw(k)} {\sW^*}}^2 - {\dist{\overline \vw(k+1)} {\sW^*}}^2 \right)   + \frac{\eta L^2}{2 }
%	+ 2 L \sum_{i=1}^M \lVert \overline \vw(k) - \vw_i(k) \rVert_2 \\
%					& \le \frac{M}{2 \eta} \left({\dist{\overline \vw(k)} {\sW^*}}^2 - {\dist{\overline \vw(k+1)} {\sW^*}}^2 \right) + \frac{\eta L^2}{2 }
%					+ 2 L M n^{1/2} \lVert \mW(k) - \overline \mW(k)  \rVert_{\max} \\
%					& \le 	\frac{M}{2 \eta}  \left( {\dist{\overline \vw(k)} {\sW^*}}^2 - {\dist{\overline \vw(k+1)} {\sW^*}}^2 \right) + \frac{\eta L^2}{2 }\\
%					& \phantom{\le} + 2 L M n^{1/2} \left(\Rsp |\lambda_2|^{k}  +  \eta \sqrt{\Esp{}} (1-\alpha(1))\mathbbm 1_{k \ge 1}+  \eta \sqrt{\Esp{}} \alpha(1) \frac{1-|\lambda_2|^{k}}{1-|\lambda_2|}\right).
%\end{align}
If we take into account convexity of $F$, Lemma~\ref{l:distance}, and \eqref{e:avg_frob_bound}, we obtain:
\begin{align}
\EXs{\bm\xi}{F\!\left(\hat{ \overline \vw}(k) \right)}- F^* 
        & = \EXs{\bm\xi}{F\!\left(\frac{1}{K} \sum_{k=0}^{K-1} \overline \vw(k) \right)}- F^* \nonumber\\
        & \underset{\textrm{convexity}}{\le} \frac{1}{K} \sum_{k=0}^{K-1} \left(\EXs{\bm\xi}{ F(\overline \vw(k))}  - F^* \right)\nonumber\\
    & \underset{\textrm{Lem~\ref{l:distance}}}{\le} \frac{1}{K} \sum_{k=0}^{K-1}\left( \frac{M}{2 \eta}\left({\dist{\overline \vw(k)} {\sW^*}}^2 -{\dist{\overline \vw(k+1)} {\sW^*}}^2 \right)  + \frac{\eta E}{2} + 2H \norm*{\Delta \mW(k)}_F\right) \nonumber\\
     & =\frac{M}{2 \eta K}  \left( {\dist{\overline \vw(0)} {\sW^*}}^2 - {\dist{\overline \vw(K)} {\sW^*}}^2 \right)  + \frac{\eta E}{2 } + 2H \frac{1}{K}\sum_{k=0}^{K-1} \norm*{\Delta \mW(k)}_F\nonumber\\
     & \le\frac{M}{2 \eta K}   {\dist{\overline \vw(0)} {\sW^*}}^2   + \frac{\eta E}{2 } + 2H \frac{1}{K}\sum_{k=0}^{K-1} \norm*{\Delta \mW(k)}_F\nonumber\\
     & \underset{\textrm{\eqref{e:avg_frob_bound}}}{\le} \frac{M}{2 \eta K} {\dist{\overline \vw(0)} {\sW^*}}^2 + \frac{\eta E}{2 } +2H \frac{\sqrt{M}\sqrt{\Rsp}}{K} \frac{1 - |\lambda_2|^{K}}{1- |\lambda_2|}  \nonumber\\
     & \phantom{\underset{\textrm{\eqref{e:avg_frob_bound}}}{\le}}+ 2H \eta \sqrt{\Esp{}}\left(     
       (1-\alpha) \frac{K-1}{K}
    + \frac{ \alpha}{1- |\lambda_2|} \left( 1- \frac{1}{K} \frac{1 - |\lambda_2|^{K}}{1- |\lambda_2|} \right)
     \right)\label{e:bound_double_avg}
\end{align}
%If we select the learning rate $\eta=\eta_0/\sqrt{K}$, we obtain:
%\begin{align}
%F&\!\left(\frac{1}{K} \sum_{k=0}^{K-1} \overline \vw(k) \right)- F^* 
%  \le \frac{M}{2 \eta_0 \sqrt K} {\dist{\overline \vw(0)} {\sW^*}}^2 
%		 + \frac{\eta_0 L^2}{2 \sqrt K} \\
%		&\phantom{\le} + 2 L M n^{1/2} \left(  \frac{\Rsp}{K} \frac{1 - |\lambda_2|^{K}}{1- |\lambda_2|}  +  \frac{\eta_0}{\sqrt K} \Lsp (1-\alpha(1)) \mathbf 1_{K>1} + \frac{\eta_0}{\sqrt K} \frac{  \Lsp \alpha(1)}{1- |\lambda_2|} \left( 1- \frac{1}{K} \frac{1 - |\lambda_2|^{K}}{1- |\lambda_2|} \right) \right).
%\end{align}
\end{proof}

\subsection{Proof of Corollary~\ref{p:classic_bound}}
\label{a:proof_classic_bound}

\begin{proof}
Because of the relations $\Rsp \le R$ and $\Esp{} \le E$, the only step to prove is that 
\begin{align}
\label{e:abc}
	(1-\alpha) \frac{K-1}{K} + \alpha \frac{1}{1- |\lambda_2|} \left( 1- \frac{1}{K} \frac{1 - |\lambda_2|^{K}}{1- |\lambda_2|} \right) \le \frac{1}{1- |\lambda_2|} \left( 1- \frac{1}{K} \frac{1 - |\lambda_2|^{K}}{1- |\lambda_2|} \right).
\end{align}
The two sides can be rewritten as the sums indicated below:
\begin{align}
	\frac{1}{K} \sum_{k=0}^{K-1} \left( (1-\alpha) \mathbbm{1}_{k \ge 1}  + \alpha  \frac{1 - |\lambda_2|^{k}}{1- |\lambda_2|} \right) \le   \frac{1}{K} \sum_{k=0}^{K-1}   \frac{1 - |\lambda_2|^{k}}{1- |\lambda_2|}.
\end{align}
%because the sums on the two sides are exactly equal to the corresponding sums in \eqref{e:abc}.

It is then sufficient to prove that for each $k\ge 0$
\begin{align}
	 (1-\alpha) \mathbbm{1}_{k \ge 1}  + \alpha  \frac{1 - |\lambda_2|^{k}}{1- |\lambda_2|}  \le     \frac{1 - |\lambda_2|^{k}}{1- |\lambda_2|}.
\end{align}
This relation is obviously satisfied for $k=0$ ($\alpha<1$). For any $k>0$, it follows from 
\begin{align}
\frac{1 - |\lambda_2|^{k}}{1- |\lambda_2|} = \sum_{h=0}^{k-1} |\lambda_2|^{h} \ge 1.
\end{align}
\end{proof}

\subsection{Convergence of local estimates}
\label{a:conv_local_estimate}
\begin{prop}
\label{p:new_bound_local}
Under assumptions A1-A5 and that~a constant learning rate $\eta(k)=\eta$ is used, an upper bound for the objective value at the end of the \mbox{($K-1$)th} iteration is given, for each $i$, by:
\begin{align}
\EX{F\!\left(\hat    \vw_i(K-1) \right)}- F^*  \le &   \frac{M}{2 \eta {K}} {\dist{\overline \vw(0)} {\sW^*}}^2 + \frac{\eta E}{2}  + H \frac{3 M \sqrt{\Rsp}}{K} \frac{1 - |\lambda_2|^{K}}{1- |\lambda_2|}  \nonumber\\
& + 3 \eta \sqrt{M}H \sqrt{\Esp{}} \left( (1-\alpha) \frac{K-1}{K} + \frac{ \alpha}{1- |\lambda_2|} \left( 1- \frac{1}{K} \frac{1 - |\lambda_2|^{K}}{1- |\lambda_2|} \right)\right).
\label{e:new_bound_local}	 
\end{align}
\end{prop}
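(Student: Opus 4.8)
\noindent The plan is to follow the proof of Proposition~\ref{p:new_bound} almost verbatim, replacing the doubly-averaged model $\hat{\overline\vw}$ by the local time average $\hat\vw_i$ and absorbing the extra per-node consensus error. First I would use convexity of $F$ (Assumption~A1) to pull the time average outside,
\begin{align}
\EXs{\bm\xi}{F\!\left(\hat\vw_i(K-1)\right)} - F^* \le \frac1K\sum_{k=0}^{K-1}\EXs{\bm\xi}{F(\vw_i(k)) - F^*}, \nonumber
\end{align}
and then split each summand as $F(\vw_i(k)) - F^* = \bigl(F(\vw_i(k)) - F(\overline\vw(k))\bigr) + \bigl(F(\overline\vw(k)) - F^*\bigr)$.

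The term $F(\overline\vw(k)) - F^*$ is exactly the one controlled in Proposition~\ref{p:new_bound}: applying Lemma~\ref{l:distance}, summing over $k$ and telescoping the distance terms gives
\begin{align}
\frac1K\sum_{k=0}^{K-1}\EXs{\bm\xi}{F(\overline\vw(k)) - F^*} \le \frac{M}{2\eta K}{\dist{\overline\vw(0)}{\sW^*}}^2 + \frac{\eta E}{2} + 2H\,\frac1K\sum_{k=0}^{K-1}\norm*{\Delta\mW(k)}_F, \nonumber
\end{align}
which already yields the first two terms of \eqref{e:new_bound_local} together with a contribution carrying coefficient $2H$ on the averaged consensus error.

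For the new term I would note that $\vw_i(k) - \overline\vw(k)$ is the $i$-th column of $\Delta\mW(k)$, so $\norm*{\vw_i(k) - \overline\vw(k)}_2 \le \norm*{\Delta\mW(k)}_F$, and that convexity of $F$ gives $F(\vw_i(k)) - F(\overline\vw(k)) \le \partial F(\vw_i(k))^\intercal(\vw_i(k) - \overline\vw(k)) \le \norm*{\partial F(\vw_i(k))}_2\,\norm*{\Delta\mW(k)}_F$. The key estimate is $\norm*{\partial F(\vw_i(k))}_2 \le \sqrt M\,H$: letting $\hat\mG_i(k)$ be the matrix whose $j$-th column is $\partial F_j(\vw_i(k))$, one has $\partial F(\vw_i(k)) = \sum_{j=1}^M \partial F_j(\vw_i(k)) = \hat\mG_i(k)\vone$, whence $\norm*{\partial F(\vw_i(k))}_2 \le \norm*{\hat\mG_i(k)}_F\,\norm*{\vone}_2 \le \sqrt M\,H$, the last inequality being the uniform bound $H$ on the expected subgradient matrix — precisely the reading of Assumption~A5 already used in Lemma~\ref{l:distance} for subgradients evaluated at $\overline\vw(k)$. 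Since this holds pathwise, summing and taking expectations contributes at most $\sqrt M\,H\,\frac1K\sum_k\norm*{\Delta\mW(k)}_F$.

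Adding the two pieces leaves a coefficient $(2 + \sqrt M)H \le 3\sqrt M\,H$ (using $2 \le 2\sqrt M$ for $M\ge 1$) multiplying $\frac1K\sum_k\norm*{\Delta\mW(k)}_F$. Substituting the bound \eqref{e:avg_frob_bound} on this time-averaged consensus error then splits the single term into the third and fourth terms of \eqref{e:new_bound_local}: the $\sqrt M\sqrt{\Rsp}$ part combines with $3\sqrt M H$ into $3MH\sqrt{\Rsp}$, while the $\eta\sqrt{\Esp{}}$ part becomes $3\eta\sqrt M H\sqrt{\Esp{}}$. I expect the only delicate step to be the middle one — justifying $\norm*{\partial F(\vw_i(k))}_2 \le \sqrt M H$ at the random local iterate $\vw_i(k)$ — which relies on interpreting $H$ as a uniform bound valid at every evaluation point and on keeping the estimate pathwise, so that no second Cauchy--Schwarz over the minibatch randomness is required.
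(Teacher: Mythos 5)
Your proof is correct and is essentially the paper's own argument: both reduce the analysis to the averaged-model bound obtained from Lemma~\ref{l:distance} plus telescoping, control the local-versus-average gap via convexity, Cauchy--Schwarz and the bound $\sqrt{M}\,H$ on the full gradient (which, as you rightly flag, requires reading A5 as a bound $\norm*{\EXs{\bm\xi}{\mG}}_F\le H$ that is uniform in the evaluation point --- exactly how the paper itself uses $H$ for $\vg_j(\hat\vw_i(k))$ here and for $\vg_j(\overline\vw(k))$ inside Lemma~\ref{l:distance}), and then insert the consensus-error bound \eqref{e:avg_frob_bound}, ending with the identical coefficient arithmetic $(2+\sqrt{M})H\le 3\sqrt{M}\,H$ and $(2+\sqrt{M})\sqrt{M}\le 3M$. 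The only difference is cosmetic: you apply convexity to split $F(\vw_i(k))-F(\overline\vw(k))$ iterate by iterate and then time-average, whereas the paper performs the comparison once at the time-averaged points $\hat\vw_i(K-1)$ versus $\hat{\overline\vw}(K-1)$; the two orderings are interchangeable and yield the same constants.
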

\begin{proof}
We consider the local model at node $i$ ($\vw_i(k)$). Using convexity of the local functions and the definition of subgradients, we obtain:
\begin{align}
\EXs{\bm\xi}{F\!\left(\hat{  \vw}_i(k) \right)} &= \sum_{j=1}^M  \EXs{\bm\xi}{F_j\!\left(\hat{  \vw}_i(k)\right)} 
    \underset{\textrm{convexity}}{\le} \sum_{j=1}^M \left( \EXs{\bm\xi}{F_j\!\left(\hat{\overline  \vw}(k)\right)} + \EXs{\bm\xi}{\vg_j\!(\hat{  \vw}_i(k))}^\intercal (\hat{  \vw}_i(k) - \hat{\overline \vw}(k))\right)\nonumber\\
    &= \EXs{\bm\xi}{F\!\left(\hat{\overline \vw}(k) \right)} + \sum_{j=1}^M  \EXs{\bm\xi}{\vg_j\!(\hat{  \vw}_i(k))}^\intercal (\hat{  \vw}_i(k) - \hat{\overline \vw}(k)) \nonumber\\
	&\le  \EXs{\bm\xi}{F\!\left(\hat{\overline \vw}(k) \right)} + \sum_{j=1}^M  \norm*{\EXs{\bm\xi}{\vg_j\!(\hat{  \vw}_i(k))}}_2 \lVert \hat{  \vw}_i(k) - \hat{\overline \vw}(k)\rVert_2\nonumber\\
	&\le \EXs{\bm\xi}{F\!\left(\hat{\overline \vw}(k) \right)} + \sqrt{M} \norm*{\EXs{\bm\xi}{G(\hat{\vw}_i(k))}}_F \norm*{\hat{\vw}_i(k) - \hat{\overline \vw}(k)}_2 \nonumber\\
	&\le \EXs{\bm\xi}{F\!\left(\hat{\overline \vw}(k) \right)} + \sqrt{M} H \norm*{\hat{\vw}_i(k) - \hat{\overline \vw}(k)}_2\nonumber\\
	&\le  \EXs{\bm\xi}{F\!\left(\hat{\overline \vw}(k) \right)} + \sqrt{M} H \frac{1}{K} \sum_{k=0}^{K-1 }\lVert   \vw_i(k) - \overline \vw(k)\rVert_2 \nonumber\\
	&\le  \EXs{\bm\xi}{F\!\left(\hat{\overline \vw}(k) \right)} + \sqrt{M} H \frac{1}{K} \sum_{k=0}^{K-1 }\norm*{\mW(k) - \overline \mW(k)}_F \nonumber\\
	&\le  \EXs{\bm\xi}{F\!\left(\hat{\overline \vw}(k) \right)} + \sqrt{M} H \frac{\sqrt{M}\sqrt{\Rsp}}{K} \frac{1 - |\lambda_2|^{K}}{1- |\lambda_2|}  \nonumber\\
     & \phantom{\le}+ \sqrt{M} H \eta \sqrt{\Esp{}}\left(     
       (1-\alpha) \frac{K-1}{K}
    + \frac{ \alpha}{1- |\lambda_2|} \left( 1- \frac{1}{K} \frac{1 - |\lambda_2|^{K}}{1- |\lambda_2|} \right)
     \right).
	\label{e:bound_time_avg}
\end{align}

Putting together \eqref{e:bound_double_avg} and \eqref{e:bound_time_avg}, we obtain:
\begin{align}
\EXs{\bm\xi}{F\!\left(\hat{  \vw}_i(k) \right)} - F^*  
    & \le   \frac{M}{2 \eta K} {\dist{\overline \vw(0)} {\sW^*}}^2 + \frac{\eta E}{2 } +(2+\sqrt{M})H \frac{\sqrt{M}\sqrt{\Rsp}}{K} \frac{1 - |\lambda_2|^{K}}{1- |\lambda_2|}  \nonumber\\
     &\phantom{\le} + (2+\sqrt{M})H \eta \sqrt{\Esp{}}\left(     
       (1-\alpha) \frac{K-1}{K}
    + \frac{ \alpha}{1- |\lambda_2|} \left( 1- \frac{1}{K} \frac{1 - |\lambda_2|^{K}}{1- |\lambda_2|} \right)\right)\nonumber\\
    & \le   \frac{M}{2 \eta K} {\dist{\overline \vw(0)} {\sW^*}}^2 + \frac{\eta E}{2} + H \frac{3 M \sqrt{\Rsp}}{K} \frac{1 - |\lambda_2|^{K}}{1- |\lambda_2|} \nonumber\\
    & \phantom{\le}+ 3\sqrt{M}H \eta \sqrt{\Esp{}}\left(     
       (1-\alpha) \frac{K-1}{K}
    \phantom{\le}+ \frac{ \alpha}{1- |\lambda_2|} \left( 1- \frac{1}{K} \frac{1 - |\lambda_2|^{K}}{1- |\lambda_2|} \right)\right),
\end{align}
where the last inequality is simply to slightly compact the formula.
%Finally, replacing $\eta = \eta_0/\sqrt{K}$, we conclude the proof.
\end{proof}

\begin{cor}
\label{p:classic_bound_local}
Under assumptions A1-A5 and that constant learning rate $\eta(k)=\eta$ is used, an upper bound for the objective value at the end of the \mbox{($K-1$)th} iteration is given, for each $i$, by:
\begin{align}
\EX{F\!\left(\hat    \vw_i(K-1) \right)}  - F^*   \le  &  \frac{M}{2 \eta {K}} {\dist{\overline \vw(0)} {\sW^*}}^2 + \frac{\eta E}{2 } + \sqrt{E} \frac{3 M \sqrt{R}}{K} \frac{1 - |\lambda_2|^{K}}{1- |\lambda_2|}  + \frac{  3\eta \sqrt{M} E  }{1- |\lambda_2|} \left( 1-  \frac{1}{K}\frac{1 - |\lambda_2|^{K}}{1- |\lambda_2|} \right).
\label{e:classic_bound_local}	 
\end{align}
%\begin{align}
%\EX{F\!\left(\hat    \vw_i(K-1) \right)}- F^*  &\le   \frac{M}{2 \eta_0 \sqrt{K}} {\dist{\overline \vw(0)} {\sW^*}}^2 + \frac{\eta_0 E}{2 M \sqrt{K}} + \sqrt{E} \frac{3 M \sqrt{R}}{K} \frac{1 - |\lambda_2|^{K}}{1- |\lambda_2|} \nonumber\\
%    & \phantom{\le}+ 3\sqrt{M} E\frac{\eta_0}{\sqrt{K}}\frac{  1 }{1- |\lambda_2|} \left( 1-  \frac{1}{K}\frac{1 - |\lambda_2|^{K}}{1- |\lambda_2|} \right).
%\label{e:classic_bound}	 
%\end{align}
In particular, if workers compute full-batch subgradients and the 2-norm of subgradients of functions $F_i$ is bounded by a constant $L$, we obtain: 
\begin{align}
F\!\left(\hat    \vw_i(K-1) \right)- F^*  \le &  \frac{M}{2 \eta {K}} {\dist{\overline \vw(0)} {\sW^*}}^2 + \frac{\eta M L^ 2}{2 }  + L \frac{3 M^{3/2} \sqrt{R}}{K} \frac{1 - |\lambda_2|^{K}}{1- |\lambda_2|}  +\frac{  3 \eta M^{3/2} L^2}{1- |\lambda_2|} \left( 1-  \frac{1}{K}\frac{1 - |\lambda_2|^{K}}{1- |\lambda_2|} \right).
\label{e:classic_bound2_local}	 
\end{align}
\end{cor}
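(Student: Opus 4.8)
The plan is to derive \eqref{e:classic_bound_local} from the sharper local bound of Proposition~\ref{p:new_bound_local} by exactly the monotone relaxation used to pass from Proposition~\ref{p:new_bound} to Corollary~\ref{p:classic_bound}. Since $\Rsp \le R$, $\Esp{} \le E$, and $H \le \sqrt{E}$, every occurrence of $H\sqrt{\Rsp}$ and $H\sqrt{\Esp{}}$ in \eqref{e:new_bound_local} may be replaced by the larger quantities $\sqrt{E}\sqrt{R}$ and $E$, respectively, which can only enlarge the right-hand side. This immediately converts the third term $H\cdot 3M\sqrt{\Rsp}/K\cdot(\ldots)$ into $\sqrt{E}\cdot 3M\sqrt{R}/K\cdot(\ldots)$, matching the corresponding term of \eqref{e:classic_bound_local}.

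The one step that is not pure substitution is the $\alpha$-dependent factor multiplying the fourth term. After relaxing $H\sqrt{\Esp{}}$ to $E$, I would invoke inequality \eqref{e:abc}, already established in the proof of Corollary~\ref{p:classic_bound}, namely
\[ (1-\alpha)\frac{K-1}{K} + \frac{\alpha}{1-|\lambda_2|}\left(1 - \frac{1}{K}\frac{1-|\lambda_2|^{K}}{1-|\lambda_2|}\right) \le \frac{1}{1-|\lambda_2|}\left(1 - \frac{1}{K}\frac{1-|\lambda_2|^{K}}{1-|\lambda_2|}\right), \]
which holds because $0 \le \alpha \le 1$ and $\sum_{h=0}^{k-1}|\lambda_2|^{h}\ge 1$. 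Applying it collapses the convex combination into the single $1/(1-|\lambda_2|)$ term and yields \eqref{e:classic_bound_local}. I would note that the prefactor $3$ (rather than the $2$ of the averaged bound) propagates unchanged, since the relaxation does not affect it.

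For the specialized bound \eqref{e:classic_bound2_local}, I would instantiate \eqref{e:classic_bound_local} in the deterministic full-batch regime. When each node uses its entire local dataset, the stochastic subgradient $\vg_j$ is a deterministic subgradient of $F_j$, so the expectation over $\bm{\xi}$ is vacuous and can be dropped. Using $\norm*{\vg_j}_2 \le L$ for every $j$ gives $\norm*{\mG(k)}_F^2 = \sum_{j=1}^M \norm*{\vg_j}_2^2 \le M L^2$, so one may take $E = M L^2$ and hence $\sqrt{E}=\sqrt{M}\,L$. Substituting $E=ML^2$ rescales the second, third, and fourth terms of \eqref{e:classic_bound_local} to $\eta M L^2/2$, to $L\cdot 3M^{3/2}\sqrt{R}/K\cdot(\ldots)$, and to $3\eta M^{3/2}L^2/(1-|\lambda_2|)\cdot(\ldots)$, which is precisely \eqref{e:classic_bound2_local}.

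I do not anticipate a genuine obstacle: the argument is a routine monotone relaxation whose single nontrivial ingredient, inequality \eqref{e:abc}, has already been proven. The only points demanding attention are bookkeeping ones, namely carrying the constant $3$ correctly through the substitution and confirming that the correct Frobenius bound is $E = M L^2$ (summing $L^2$ over the $M$ nodes) rather than $L^2$.
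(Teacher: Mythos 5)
Your proposal is correct and follows essentially the same route as the paper: the paper's own proof of Corollary~\ref{p:classic_bound_local} consists precisely of applying Proposition~\ref{p:new_bound_local} together with the relaxations $\Rsp \le R$, $\Esp{} \le E$, $H \le \sqrt{E}$ and inequality~\eqref{e:abc}, and your full-batch specialization via $E = ML^2$ matches the paper's as well. Nothing is missing.
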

\begin{proof}
The result follows immediately from Proposition~\ref{p:new_bound_local} and \eqref{e:abc}.
\end{proof}

\section{Proof of Proposition~\ref{p:estimates}}
\label{a:estimates}
\begin{proof}
Our first remark is that selecting a uniform permutation and then a uniform random batch of size $B$ without resampling is equivalent to selecting uniformly $B$ elements from the dataset $\sS$ without resampling. We denote this sampling with the random variable $\xi_\sS$. This is true independently from the presence of data replication, as far as data point replicas are located at different nodes. 
%We denote this sample from the $\xi_B$.
From this observation and the formula for the variance of the average of a sample drawn  without resampling, it follows:
\begin{align}
	\EXs{\pi}{\EXs{\bm\xi}{\norm*{\mG}_F^2}} 
		& = \sum_{i,j}\EXs{\pi}{\EXs{\bm\xi}{\emG_{i,j}^2}} 
		   = \sum_{i,j} \EXs{\xi_{\sS}}{\emG_{i,j}^2} 
		   =  \sum_{i,j}\left(\left(\EXs{\xi_{\sS}}{\emG_{i,j}}\right)^2 + \Var_{\xi_{\sS}}\!\left[\emG_{i,j}\right]\right)\nonumber\\
		& =  \sum_{i,j} \left( (\partial F)_i^2  + \frac{S}{S-1}\frac{\sigma_i^2}{B} \left(1-\frac{B}{S}\right)\right)
		   =  \sum_{i,j} \left( (\partial F)_i^2  + \frac{S-B}{S-1}\frac{\sigma_i^2}{B} \right)\nonumber\\
		& = M \left(\norm*{\partial F}_2^2 + \frac{S-B}{S-1}\frac{\sigma^2}{B} \right). \label{e:expE}
\end{align}

From $\sum_{j=1}^M x_j^2 = \sum_{j=1}^M (x_j-\bar x)^2 + M \bar x^2$, where $\bar x$ denote the mean of the $M$ values $x_j$, it follows
\begin{align}
\norm*{\Delta\mG}_F^2 & = \norm*{\mG}_F^2  - \norm*{\mG \frac{\vone \vone^\intercal}{M}}_F^2.
\label{e:pythagora}
\end{align}
In order to compute $\EXs{\pi}{\EXs{\bm\xi}{\norm*{\Delta\mG}_F^2}} $, 
we will then compute $\EXs{\pi}{\EXs{\bm\xi}{\norm*{\mG \vone \vone^\intercal/M}_F^2}}$ 
and then use~\eqref{e:expE} and~\eqref{e:pythagora}. 
We denote the double expectation over $\bf\xi$ and $\pi$ simply as $\EX{}$.

\begin{align}
    \EX{ \left( \frac{1}{M} \sum_{j=1}^M G_{i,j} \right)^2 } 
            & = \frac{1}{M^2} \left(\sum_{j} \EX{G_{i,j}^2} +\sum_{j\neq j'}^M \EX{G_{i,j} G_{i,j'}} \right)\nonumber\\
            & = \frac{1}{M} \left( \EX{G_{i,j}^2} + (M-1) \EX{G_{i,j} G_{i,j'}} \right), \label{e:avg_grad}
\end{align}
where $j$ and $j'$ are arbitrary values in $\{1,\dots M\}$ with $j' \neq j$.
\begin{align}
    \EX{G_{i,j}^2}  & = \EX{G_{i,j}}^2 + \Var\left[G_{i,j}\right]
                        = (\partial F)_i^2 + \frac{S-B}{S-1} \frac{\sigma_i^2}{B}.
                        \label{e:single_square}
\end{align}
Let $\mathbbm 1_{s,s'}$ be the indicator function denoting if the datapoint $(x^{(s)},y^{(s)})$ has been selected in the minibatch at node $j$ and the datapoint $(x^{(s')},y^{(s')})$ has been selected in the minibatch at node $j'$. In order to keep the notation compact we also denote by $\partial f_{i,s}$ and $\partial f_{i,s'}$ respectively $(\partial f(\vw,(x^{(s)},y^{(s)})))_i$ and $(\partial f(\vw,(x^{(s')},y^{(s')})))_i$.
\begin{align}
    \label{e:exp_prod}
    \EX{G_{i,j} G_{i,j'}} & = \frac{1}{B^2} \sum_{s=1}^S \sum_{s'=1}^S \partial f_{i,s} \partial f_{i,s'} \EX{\mathbbm 1_{s,s'}}.
\end{align}
The probability that the point $s$ is in the local dataset at node $i$ and the point is selected in the minibatch is $\frac{CS/M}{S} \times \frac{B}{CS/M}=\frac{B}{S}$. Let us consider first the case $s'=s$. Given that $s$ is in $j$, the probability that one of the other $C-1$ copies is stored in $j'$ is $\frac{C-1}{M-1}$ and this copy has a probability $\frac{B}{CS/M}$ to be selected. Then the total probability of the event that a copy of $s$ is selected in the minibatch at $j$ and another copy is selected in the minibatch at $j'$ is 
\[\frac{B}{S} \times \frac{C-1}{M-1} \times \frac{B}{CS/M} = \frac{B^2}{S^2} \frac{C-1}{C} \frac{M}{M-1}.   \]
If $s'\neq s$, then $s'$ may be present in $j$ or not. The first event occurs with probability $\frac{\frac{CS}{M}-1}{S-1}$ (because $s$ has already been located in $j$) and in this case the probability that $s'$ is also located in $j'$ is $\frac{C-1}{M-1}$. $s'$ is not present in $j$ with probability $1-\frac{\frac{CS}{M}-1}{S-1}$ and in this case it is located in $j'$ with probability $\frac{C}{M-1}$.
Then the total probability of the event that a copy of $s'$ is located at  $j'$ given that a copy of $s$ is located at $j$ is 
\[\frac{\frac{CS}{M}-1}{S-1} \frac{C-1}{M-1}+ \left(1-\frac{\frac{CS}{M}-1}{S-1}\right)\frac{C}{M-1} = \frac{C}{M}+\frac{M-C}{M(M-1)(S-1)},\]
and the probability that a copy of $s$ is selected in the minibatch at  $j$ and a copy of $s'\neq s$  is located at $j'$ is 
\[\frac{B}{S} \times \frac{C}{M}+\frac{M-C}{M(M-1)(S-1)} \times \frac{B}{\frac{CS}{M}} = \frac{B^2}{S^2} \left( 1+ \frac{M-C}{C(M-1)(S-1)} \right). \]
In conclusion, we have just proved that:
\begin{align}
    \EX{\mathbbm 1_{s,s'}} = 
            \begin{cases}
                \frac{B^2}{S^2} \frac{C-1}{C} \frac{M}{M-1}, &\textrm{ for }s'= s,\\
                \frac{B^2}{S^2}\left(1+\frac{M-C}{C(M-1)(S-1)}\right), &\textrm{ for }s'\neq s.
            \end{cases}
    \label{e:indicator}
\end{align}
We observe that 
\begin{align}
    & \frac{1}{S} \sum_s (\partial f_{i,s})^2 = (\partial F)_i^2 +\sigma_i^2,\\
    & (\partial F)_i^2 = \left(\frac{1}{S} \sum_s \partial f_{i,s} \right)^2
        =\frac{1}{S^2} \left( \sum_s (\partial f_{i,s})^2 + \sum_{s \neq s'} \partial f_{i,s} \partial f_{i,s'} \right).
\end{align}
Using these two relations and~\eqref{e:indicator} in~\eqref{e:exp_prod} we obtain
\begin{align}
    \EX{G_{i,j} G_{i,j'}} & = (\partial F)_i^2 + \sigma_i^2 \frac{C-M}{C(M-1)(S-1)}.
\end{align}
This equality together with~\eqref{e:avg_grad} and~\eqref{e:single_square} leads to: 
\begin{align}
    \EX{\norm*{\mG \vone \vone^\intercal/M}_F^2} 
        & = \sum_i \EX{G_{i,j}^2} + (M-1) \EX{G_{i,j} G_{i,j'}}\nonumber\\
        & = \norm*{\partial F}_2^2 + \frac{S-B}{S-1} \frac{\sigma^2}{B} + (M-1) \left(\norm*{\partial F}_2^2 + \sigma^2 \frac{C-M}{C(M-1)(S-1)} \right)\nonumber\\
        & = M \norm*{\partial F}_2^2 + \sigma_2 \frac{CS-MB}{CB(S-1)}.
\end{align}
Finally,
\begin{align} 
\EX{\norm*{\Delta \mG}_F^2} & 
    = \EX{\norm*{\mG}_F^2} - \EX{\norm*{\mG \vone \vone^\intercal/M}_F^2}\\
    & = M \sigma^2 \left( \frac{S-B}{B(S-1)} - \frac{CS-MB}{MCB(S-1)}\right).
\end{align}

We now move to bound $\EXs{\pi}{\norm*{\EXs{\bm\xi}{\mG}}_F}$. The lower bound follows immediately from the fact that any norm is a convex function, so that
\[\EXs{\pi}{\norm*{\EXs{\bm\xi}{\mG}}_F} \ge \norm*{\EXs{\pi}{\EXs{\bm\xi}{\mG}}}_F = \sqrt{M} \norm*{\partial F}_2.\]
For the upper bound:
\begin{align}
    \EXs{\pi}{\norm*{\EXs{\bm\xi}{\mG}}_F}
        & = \EXs{\pi}{ \sqrt{ \sum_{i,j} \left(\EXs{\bm\xi}{G_{i,j}}\right)^2} }\nonumber\\
        & \le \sqrt{ \sum_{i,j} \EXs{\pi}{  \left(\EXs{\bm\xi}{G_{i,j}}\right)^2} }\\
        & = \sqrt{\sum_{i,j} \left(\EXs{\pi}{\EXs{\bm\xi}{G_{i,j}}}\right)^2 + \Var_{\pi}\left[\EXs{\bm\xi}{G_{i,j}}\right]}\\
        & = \sqrt{\sum_{i,j} (\partial F)_i^2 + \sigma_i^2 \frac{M-C}{C(S-1)}}\\
        & = \sqrt{M \left( \norm*{\partial F}_2^2 + \frac{M-C}{C(S-1)} \sigma^2\right)},
\end{align}
where the first inequality follows from the concavity of square root function. Observe that  $\EXs{\bm\xi}{G_{:,j}}$ is equal to the full-batch gradient computed on the local dataset at node $j$. As such, its expected value over the permutations $\pi$ is equal to $\partial F$, and its variance corresponds to the variance of the sample average when we draw $CS/M$ samples from a dataset of $S$ elements without resampling.
\end{proof}

%\section{Effect of the replication factor $C$ on $\beta$}

\section{Toy example}
\label{a:toy_example}
In this section we consider a toy example to illustrate the implications of the findings in the previous section. 
We want to solve the following problem:
\begin{alignat}{3}
& \minim_w &&  F(w)=\frac{1}{M}\sum_{l=1}^M 1 - y^{(l)}  x^{(l)} w \label{e:toy_problem}\\
& \textrm{subject to }\quad &&  w \in \sW\nonumber
\end{alignat}
%This function is often used for Support Vector Machine (SVM) classification 
where $x^{(l)}$ is a data feature and $y^{(l)} \in \{-1,1\}$ identifies the class label. Note that there is only one parameter $w \in \mathbb R$ ($n=1$) and, for the moment, the number of nodes equals the size of the dataset ($M=m$). 
%We compare these results with our theoretical findings in the previous sections. 
$\Delta \mG(k) = \mG(k) - \mG(k) \frac{\vone \vone^\intercal}{M}$ is in this case a constant row vector.

\begin{figure*}[h]
%\vspace{.3in}
        \centering
        \begin{subfigure}{0.3275\textwidth}
            \centering
            \includegraphics[width=\textwidth]{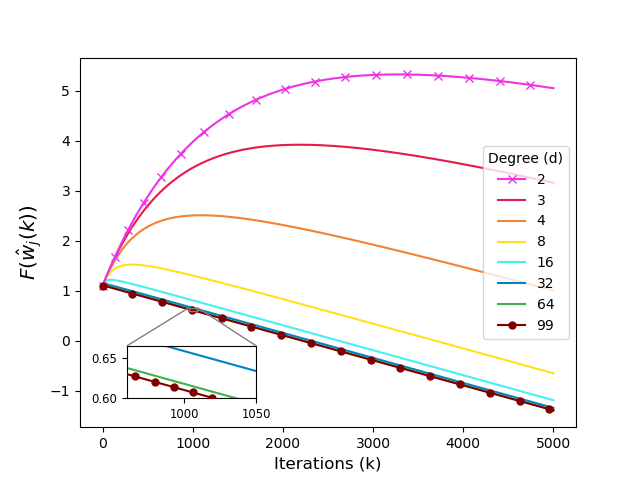}
            \caption%
            {{\small Rings with aligned gradient}}    
            \label{f:rings}
        \end{subfigure}
        \hfill
        \begin{subfigure}{0.3275\textwidth}  
            \centering 
            \includegraphics[width=\textwidth]{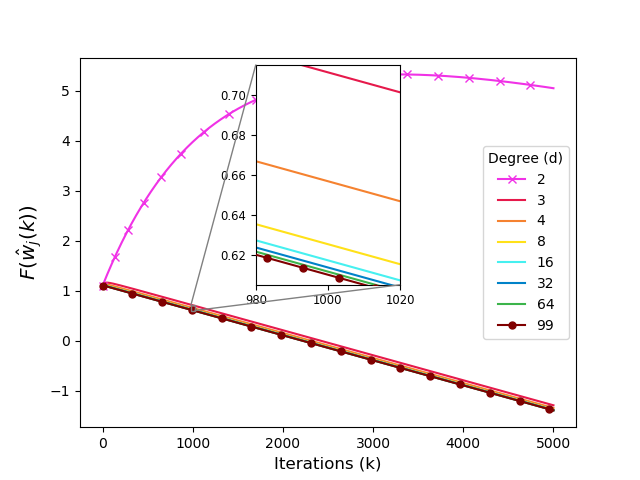}
            \caption%
            {{\small Expanders with aligned gradient}}    
            \label{f:expanders}
        \end{subfigure}
        %\vskip\baselineskip
        \hfill
        \begin{subfigure}{0.3275\textwidth}   
            \centering 
            \includegraphics[width=\textwidth]{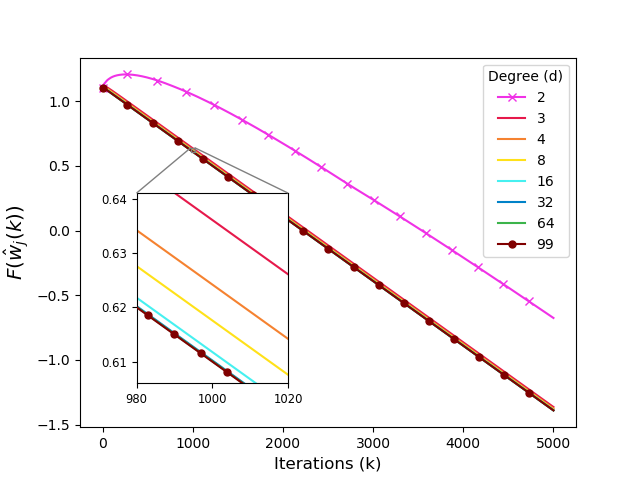}
            \caption%
            {{\small Expanders with generic gradient}}    
            \label{f:no_alignment}
        \end{subfigure}
%        \quad
%        \begin{subfigure}[b]{0.475\textwidth}   
%            \centering 
%            \includegraphics[width=\textwidth]{figures/simulations_wide_layout/test4_eigvecsvm_100n_err_vs_iter.png}
%            \caption[]%
%            {{\small Network 4}}    
%            \label{fig:mean and std of net44}
%        \end{subfigure}
	%\vspace{.3in}
        \caption[ ]
        {\small Objective function~(\ref{e:toy_problem}) evaluated for the worst model $\vw_j(k)$ versus number of iterations: effect of the topology and its relation with the gradients.} 
        \label{f:gradient_and_topology}
    \end{figure*}

    \begin{figure*}[h]
    	%\vspace{.3in}
        \centering
        \begin{subfigure}[b]{0.3275\textwidth}
            \centering
            \includegraphics[width=\textwidth]{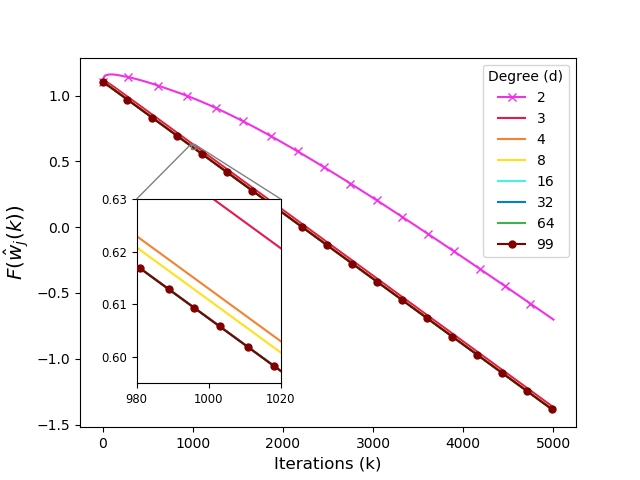}
            \caption[]%
            {{\small 2X dataset}}    
            \label{f:similar_gradients2}
        \end{subfigure}
        \hfill
        \begin{subfigure}[b]{0.3275\textwidth}  
            \centering 
            \includegraphics[width=\textwidth]{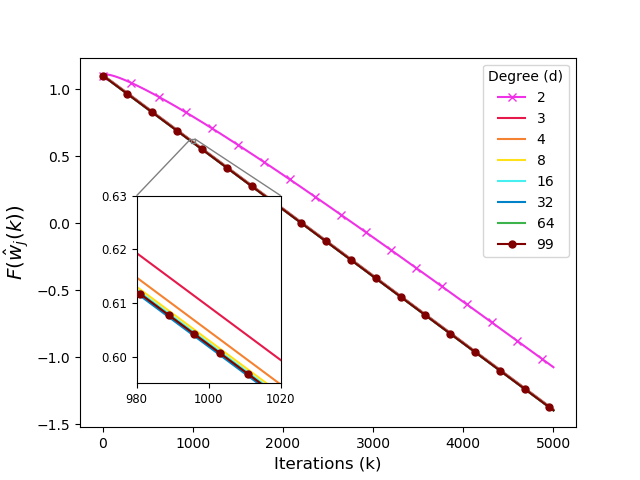}
            \caption[]%
            {{\small 20X dataset}}    
            \label{f:similar_gradients20}
        \end{subfigure}
        %\vskip\baselineskip
        \hfill
        \begin{subfigure}[b]{0.3275\textwidth}   
            \centering 
            \includegraphics[width=\textwidth]{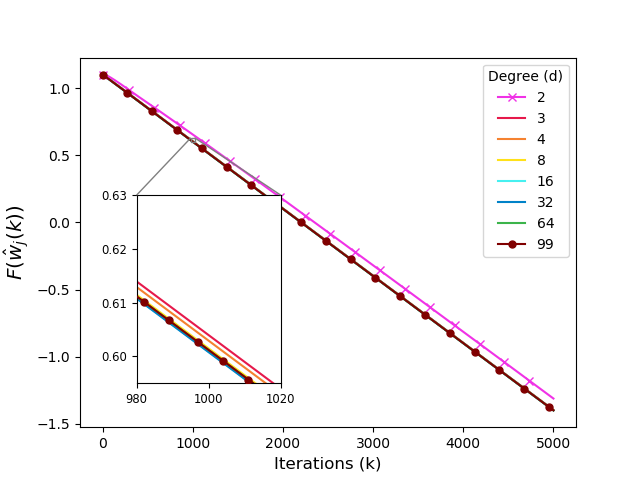}
            \caption[]%
            {{\small 100X dataset}}    
            \label{f:similar_gradients100}
        \end{subfigure}
%        \quad
%        \begin{subfigure}[b]{0.475\textwidth}   
%            \centering 
%            \includegraphics[width=\textwidth]{figures/simulations_wide_layout/test4_eigvecsvm_100n_err_vs_iter.png}
%            \caption[]%
%            {{\small Network 4}}    
%            \label{fig:mean and std of net44}
%        \end{subfigure}
	%\vspace{.3in}
        \caption[]
        {\small Objective function~(\ref{e:toy_problem}) evaluated for the worst model $\vw_j(k)$ versus number of iterations: effect of dataset size. Expanders with generic gradient.} 
        \label{f:similar_gradients}
    \end{figure*}

%The datasets are generated as follows.
%We start from a vector $\vu$ of values in $[-1,1]$ which sum up to zero. % ($\vu^\intercal \vone=0$). 
%We describe later how the vector $\vu$ is selected in the different experiments. Features $x^{(l)}$ are defined as $x^{(l)} = |u_l+\zeta|$, where $\zeta$ is a small positive constant. The labels are defined as $y^{(l)}=-\sign(u_l+\zeta)$. We select $\sW=[-30,1]$. Observe that $F(w) = 1 - \sum_l y^{(l)} x^{(l)}/M = 1 + \zeta w$,
%%$F'(w)= -M^{-1} \sum_l y^{(l)} x^{(l)} =  \zeta+ M^{-1}\sum_l u_l  = \zeta>0$, 
%and consequently the minimizer of problem~(\ref{e:toy_problem}) is $w^*=-30$. We select $\zeta=1/10$, $\eta(k)= 1/10$ and all nodes start with the same initial estimate $w_i(0)=1$.

We consider an undirected $d$-regular graph with $M=100$ nodes and homogeneous  non-negative weights, i.e.,~$A_{i,j}=1/(d+1)$ for $(i,j) \in \Ed$ and $A_{i,j}=0$ otherwise. Matrix $\mA$ is then symmetric. 

%Our numerical results have been obtained through a discrete event simulator developed in Python. 

%Fig.~XXX~a) shows a  typical dataset used for the next experiment. 
%is positive if and only if positive labelled samples have a larger mean than negative labeled ones. In the datasets $x^*<0$, because $\sum_l$.
%Let $J^+$ (/$J^-$) be the samples belonging to class $+1$ (/$-1$). The two classes have the same number of elements. The minimizer $x^*$ of \eqref{e:simple_svm}  is positive if and only if positive labelled samples have a larger mean than $ \sum_{l\in J^+}  \vx^{(l)} >  \sum_{l\in J^-} \vx^{(l)}$. the average feature value for samples of nodes with a positive label have  of  simple case the optimal value $x^*$ will be positive 
%where $\chi$ is the feature value and $y$

%\paragraph{Bound XXX is tight.}
Figure~\ref{f:rings} presents results for undirected $d$-regular rings, where each node $i$ %$ \in \{0, \dots M-1\}$ 
is connected to nodes $i-1$ and $i+1$ (hence forming a cycle) as well as the $d-2$ closest nodes  on the cycle.
%: for $d$ even, to nodes  $\{i-d/2, i-d/2+1, \dots, i-2, i+2, \dots, i+d/2-1, i+d/2\}$, for $d=3$,  to $i+2$ or $i-2$.
%For $d$ even, $i$ is connected to nodes  $\{i-d/2, i-d/2+1, \dots, i-2, i+2, \dots, i+d/2-1, i+d/2\}$. For $d=3$, node $i$ is connected to $i+2$ or to $i-2$.
%i.e., in the set $\{i-d, i-d+1, \dots, i-2, i+2, \dots, i+d-1, i+d\}$ (sum and differences are module $M$).
Among graphs with degree~$d$, these rings are poorly connected, with a diameter that is order of $M/d$ and low spectral gap $1-|\lambda_2|$. 
The curves show the evolution of the objective function evaluated at the worst estimate, i.e.,~$\max_i F(\hat w_i(k))$ %, across different iterations 
for different values of the degree $d$.
For each degree, the dataset is chosen selecting a vector $\vu$ orthogonal to $\vone$ and perturbing each of its components by a small amount $\zeta$, so that $\mG = \vu^\intercal + \zeta \vone^\intercal$ and $\Delta \mG(k) = \vu^\intercal$. Because $\vone$ and $\vu$ are orthogonal, it follows that 
\[E = \norm*{\vu^\intercal + \zeta \vone^\intercal}_F^2= \norm*{\vu}_2^2 + \zeta^2 \norm*{\vone}_2^2 = \norm*{\vu}_2^2 + M \zeta^2 = \Esp{} + M \zeta^2.\] 
Then, by selecting $\zeta$ small enough we can make $\Esp{}$ and $E$ arbitrarily close. This also a full-batch setting, so that $H=\sqrt{E}$.
We can select $\vu$ equal to a left eigenvector of $\mA$ relative to $\lambda_2$, because $\mA$ has orthogonal eigenvectors among which there is $\vone$. In this case, we say that gradients are \emph{aligned with the topology},\footnote{
	It corresponds to the slowest convergence scenario for distributed dual averaging, see~{\cite[Prop.~1]{duchi12}}.
} If $\vu$ is a left eigenvector of $\mA$, all the energy of $\Delta\mG(k)$ is in the subspace defined by $\mP_2$, then $\alpha=\sqrt{e_2}=1$. Details about how the dataset is built are in the sections below,  together with calculations showing that  the value of the objective function~is 
\begin{align}
 \max_i & (F(\hat w_i(k-1)))  =  1 +  \zeta + \frac{  \eta \zeta}{1-\lambda_2} \left( 1- \frac{1- \lambda_2^{k}}{k(1-\lambda_2)}\right) -  \eta  \zeta^2 \frac{k}{2}. 
\label{e:toy_worst}
\end{align}
Equation (\ref{e:toy_worst}) exactly matches the plots in Fig.~\ref{f:rings}. 
Comparing~(\ref{e:toy_worst}) with~(\ref{e:classic_bound}) and~(\ref{e:new_bound}), we recognize the same dependence on the second largest eigenvalue.\footnote{
	Equation~(\ref{e:toy_worst}) indicates a much faster convergence than those bounds, because, for simplicity, we considered a differentiable linear objective function~\eqref{e:toy_problem}.
} 
Because $\Esp{} \approx E$, $E=\sqrt{E}$, and $\alpha=1$,  the two bounds~(\ref{e:classic_bound}) and (\ref{e:new_bound}) are almost equivalent, but for the fact that~(\ref{e:new_bound}) correctly takes into account that there is no dependence on the initial estimates ($\Rsp=0$). % because all the nodes start from the same local model $\Rsp$.

Figure~\ref{f:rings} shows clearly a high variability of the performance of the optimization algorithm across different topologies. The number of iterations required to achieve a given approximation of the optimum is orders of magnitude larger for the cycle ($d=2$) than for the clique ($d=99$).

In Fig.~\ref{f:expanders} the same experiments are executed on $d$-regular connected random graphs. These graphs are known to be Ramanujan graphs with high-probability, i.e.,~they have the % largest possible spectral gap  
smallest $|\lambda_2|$ among all graphs with the same degree~\citep{mckay81}. The curves still follow~(\ref{e:toy_worst}), but because $\lambda_2$ is  smaller, objective function $F(\cdot)$ takes on smaller values. Note that the curves for $d=2$ and $d=M-1$ are unchanged, because the corresponding graphs are always the same (the cycle and the clique, respectively). We also observe that the relative performance differences  for graphs with $d\ge 3$ are much smaller:  the marginal benefit to increase connectivity is much less for random graphs.

Until now, we have assumed  that row matrix $\Delta\mG$ is aligned with a left eigenvector of $\mA$, but there is no particular reason to think this should be the case. Figure~\ref{f:no_alignment} shows numerical results for the case when the same datasets as in Fig.~\ref{f:expanders} are used, but the graph is a new  independently generated $d$-regular random graph.\footnote{
	For the case $d=2$, the original dataset is randomly distributed across the nodes.
} %are generated, independently from those used to generate the datasets.
In the figure, we see that connectivity is even less important and that the curve for $d=2$ starts approaching the others. This experiment shows the effect of $\alpha$. $\vu$ is the most difficult configuration to average for the consensus matrix $\mA$. Now vector $\Delta\mG$ and the vector $\vu$ %of $\mA$ %corresponding to $\lambda_2$ 
are arbitrarily oriented, so that on average only $(1/M)$th of the energy of $\Delta\mG$ falls in the direction of~$\vu$.

%Another way to quantify this effect is to look at the norm of the perturbation vector $ \vu^{\intercal} \frac{1- \lambda_2^k}{1-\lambda_2}$

%We look now at the effect of $\Lsp$. 
We now look at how gradient variability affects convergence.
%In the experiments  the local gradient  $\vg_i$ at node~$i$ is equal to $u_i+\zeta$, where $u_i \in [-1,1]$ so nodes have very different local gradients. 
%and the differences $\vg_i - \bar \vg$ are equal to $u_i$ so they are spread over $[-1,1]$. 
We increase the dataset size and have each node compute its local function on the basis of more data samples. The additional data is built as above from the second eigenvectors of new  independently generated $d$-regular random graphs. The new data have then the same statistical properties. Figure~\ref{f:similar_gradients} shows what happens %for dataset sizes 2, 20, and 100 times larger, and 
when each node stores respectively 2, 20, and 100 data points.  Because the local datasets become more and more similar as dataset size increases, $\Esp{}$ reduces and the curves get closer and closer. For a $100$x dataset,  the convergence rate of a cycle ($d=2$) differs little from that of a fully connected graph.

\subsection{Datasets' generation}
The datasets are generated as follows.
We start from a vector $\vu$ of values in ${[-1,1]}$ which sum up to zero. % ($\vu^\intercal \vone=0$). 
We describe later how the vector $\vu$ is selected in the different experiments. Features $x^{(l)}$ are defined as $x^{(l)} = |u_l+\zeta|$, where $\zeta$ is a small positive constant. The labels are defined as $y^{(l)}=-\sign(u_l+\zeta)$. We select $\sW={[-30,1]}$. Observe that $F(w) = 1 - \sum_l y^{(l)} x^{(l)}/M = 1 + \zeta w$,
%$F'(w)= -M^{-1} \sum_l y^{(l)} x^{(l)} =  \zeta+ M^{-1}\sum_l u_l  = \zeta>0$, 
and consequently the minimizer of problem~(\ref{e:toy_problem}) is $w^*=-30$. We select $\zeta=1/10$, $\eta(k)= 1/10$ and all nodes start with the same initial estimate $w_i(0)=1$.

We say that gradients are \emph{aligned with the topology}, when vector $\vu$ is a left eigenvector of $\mA$ relative to 
%the second largest eigenvalue 
$\lambda_2$,  normalized so that $\lVert \vu \rVert_\infty=1$ and $\min_i{u_i}=-1$. In this case, $\Delta \mG(k) = \mG(k) - \mG(k) \frac{\vone \vone^\intercal}{M} = \vu^\intercal$.

\subsection{Proof of~\eqref{e:toy_worst}}
In the toy example, the estimates matrix (a $1\times M$ vector in this case) evolves as
\begin{align*}
\mW(k) 	& = \mW(0) \mA^k - \sum_{h=0}^{k-1}  \eta \mG \mA^{k-1-h} \\
		&= \vone^{\intercal} - \eta \sum_{h=0}^{k-1} \left( \vu^{\intercal} \mA^{k-1-h} + \zeta \vone^{\intercal} \mA^{k-1-h} \right)
		 = \vone^{\intercal} - \eta  \sum_{h=0}^{k-1}\left(\lambda_2^{k-1-h} \vu^\intercal  + \zeta \vone^{\intercal} \right) \\
		&= \vone^{\intercal} - \eta \vu^{\intercal} \frac{1- \lambda_2^k}{1-\lambda_2} - \eta \zeta k \vone^{\intercal}.
\end{align*}

If $\lambda_2>0$, and $\zeta $ is small enough, the worst local model is the one of the node, call it $j$, that stores the data pair $x^{(j)}= |-1+ \zeta|$  for which $u_j=-1$. 
Its local model evolves as:
\begin{align*}
w_j(k) & = 1 + \eta \frac{1- \lambda_2^k}{1-\lambda_2} - \eta \zeta k,
\end{align*}

For node $j$ the time-average model is
\begin{align*}
\hat w_j(k-1) & = 1 +  \frac{\eta}{1-\lambda_2} \left( 1- \frac{ 1- \lambda_2^{k}}{k (1-\lambda_2)}\right) - \eta \zeta \frac{k}{2}. \\
\end{align*}
Finally, the objective function is 
\begin{align}
 \max_i (F(\hat w_i(k-1))) & = F(\hat w_j(k-1)) \nonumber\\
& = 1 +  \zeta + \frac{  \eta \zeta}{1-\lambda_2} \left( 1- \frac{1- \lambda_2^{k}}{k(1-\lambda_2)}\right) -  \eta  \zeta^2 \frac{k}{2}.\tag{\ref{e:toy_worst}}
\end{align}
Equation (\ref{e:toy_worst}) exactly matches the plots in Fig.~\ref{f:rings}. This implicitly confirms that $\lambda_2>0$.

Because $\Delta\mG(k)=\vu^\intercal $  is a left eigenvector of $\mA$ corresponding to $\lambda_2$, all the energy of $\Delta\mG(k)$ is in the subspace defined by $\mP_2$, then $\alpha=\sqrt{e_2}=1$.
%Moreover, the tightest bound for the 2-norm of the local gradients is $L=\max_l | u_l + \zeta |$. Similarly,  $\Lsp = \sqrt{\sum_i u_i^2} \ge \max_i |u_i | = |-1|=1$. We note that by tuning the value of $\zeta$ we can make $L$ arbitrarily close to $1$. It follows that, in this situation, the two bounds~(\ref{e:classic_bound}) and (\ref{e:new_bound}) are almost equivalent, but for the fact that~(\ref{e:new_bound}) correctly takes into account that there is no dependence on the initial estimates ($\Rsp=0$). % because all the nodes start from the same local model $\Rsp$.

%\twocolumn[

\clearpage
\newpage

\section{Experiments}
\label{a:experiments}
We consider two families of regular graphs that we call directed ring lattices, and undirected expanders. Let $\sV=\{0,1,2, \dots M\}$ be the set of nodes.  In a directed regular ring lattice with degree $d$, each node $i$ is connected to nodes $(i+1)\!\!\mod M$, $(i+2)\!\!\mod M$, \dots, $(i+d)\!\!\mod M$. An undirected regular expander is obtained by generating a large number (200) of regular random graphs (using the \textsc{NetworkX} implementation of the algorithm in~\citep{mckay90}), and selecting the one with the largest spectral gap. All the experiments presented in the main paper are performed on undirected regular expanders.

For each experiment, the learning rate has been set using the configuration rule in \citep{smith17}. We increase geometrically the learning rate and evaluate the training loss after one iteration. We then determine two ``knees'' in the loss versus learning rates plots: the  learning rate value for which the loss starts decreasing significantly and the value for which it starts increasing again.  We set the learning rate to the geometric average of these two values. As Fig.~\ref{f:lr_config} shows, for a given experiment, this procedure leads to select the same learning rate independently of the degree of the topology, respectively equal to $\eta = 3 \times 10^{-4}$ for the linear regression (Fig.~\ref{f:lr_config_regr}), $\eta = 6 \times 10^{-4}$ for ResNet18 on MNIST dataset (Fig.~\ref{f:lr_config_res_cifar}), $\eta = 0.1$ for 2-conv layers on MNIST dataset (Fig.~\ref{f:lr_config_nn_mnist}) and $\eta = 0.05$ for ResNet18 on CIFAR-10 dataset (Fig.~\ref{f:lr_config_nn_cifar}). 

\begin{figure*}
%\vspace{.3in}
        \centering
        \begin{subfigure}[b]{0.45\textwidth}
            \centering
            \includegraphics[width=\textwidth]{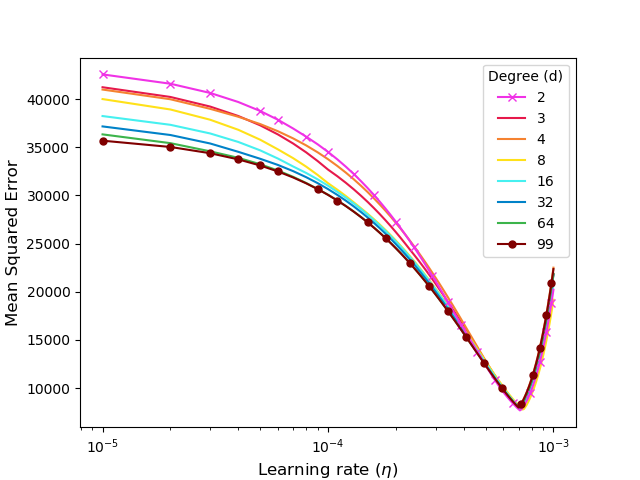}
            \caption[]%
            {{\small Linear regression on CT dataset, M=100}}    
            \label{f:lr_config_regr}
        \end{subfigure}
        \hfill
        \begin{subfigure}[b]{0.45\textwidth}
            \centering
            \includegraphics[width=\textwidth]{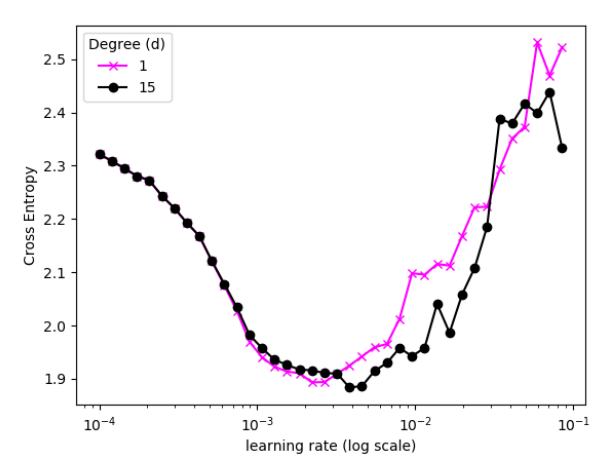}
            \caption[]%
            {{\small ResNet18 on MNIST dataset, M=16}}    
            \label{f:lr_config_res_cifar}
        \end{subfigure}

        \begin{subfigure}[b]{0.45\textwidth}  
            \centering 
            \includegraphics[width=\textwidth]{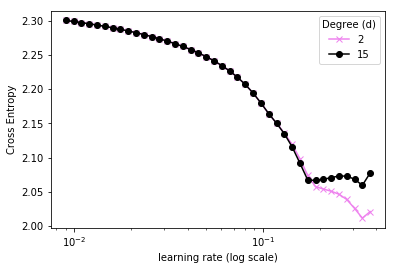}
            \caption[]%
            {{\small 2-conv layers on MNIST, M=16}}    
            \label{f:lr_config_nn_mnist}
        \end{subfigure}
        %\vskip\baselineskip
        \hfill
        \begin{subfigure}[b]{0.45\textwidth}  
            \centering 
            \includegraphics[width=\textwidth]{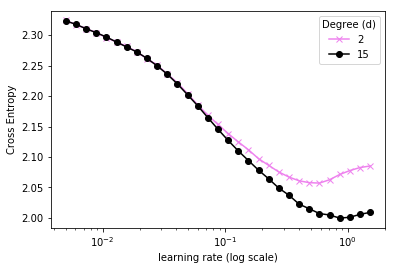}
            \caption[]%
            {{\small ResNet18 on CIFAR-10, M=16}}    
            \label{f:lr_config_nn_cifar}
        \end{subfigure}
         \caption[]
        {\small Error after one iteration vs learning rate. } 
        \label{f:lr_config}
    \end{figure*}

In comparison to Table~\ref{t:parameters}, Table~\ref{t:parameters2} also shows the predictions for an ``intermediate'' bound, obtained by replacing in \eqref{e:classic_bound} $R$ with $\Rsp{}$. These predictions are denoted by $k''_o$. Comparing $k''_o$ with $k'_o$ and $k'_n$ reveals that the difference between $k'_o$ and $k'_n$ is mainly due to the effect of $\Esp{}$, $H$, and $\alpha$, while $\Rsp{}$ plays a less important role.

\begin{table*}
\caption{Empirical estimation of $E$, $E_{\textrm{sp}}$, $H$, $\alpha$ on different ML problems and comparison of their joint effect ($\beta$) with the value $\widehat \beta$ predicted through~\eqref{e:perm_approx}. Number of iterations by which training losses for the ring and the clique differ by $4\%$, $10\%$, as predicted by the old bound~\eqref{e:classic_bound}, $k'_o$, by the old bound~\eqref{e:classic_bound} where $R$ is replaced by $R_{sp}$, $k^{''}_o$, by the new one~\eqref{e:new_bound}, $k'_n$, and as measured in the experiment, $k'$. When their values exceeds the total number of iterations we ran (respectively 1200 for CT, 1190 for MNIST, and 1040 for CIFAR-10), we simply indicate it as $\infty$.}
\label{t:parameters2}
\setlength\tabcolsep{2.5pt} 
\centering
\begin{tabular}{ccccccccccccccccccc}
\hline
\multirow{2}{*}{Dataset} & \multirow{2}{*}{Model}& \multirow{2}{*}{M}& \multirow{2}{*}{B} & \multirow{2}{*}{$\sqrt{E/E_{sp}}$}& \multirow{2}{*}{$\sqrt{E}/H$} & \multirow{2}{*}{$\frac{1}{\alpha}$} & \multirow{2}{*}{$\beta$} & \multirow{2}{*}{$\widehat \beta$}& \multicolumn{4}{c}{$@ 4\%$}  & \multicolumn{4}{c}{$@ 10\%$}\\
&&&&&&&&&$k'_o$&$k^{''}_o$& $k'_n$& $k'$ &$k'_o$& $k^{''}_o$& $k'_n$ & $k'$\\
\hline 
\multirow{4}{*}{ \begin{tabular}{c}CT\\ (S=52000)\end{tabular}}& 
\multirow{4}{*}{ \begin{tabular}{c}Linear regr.\\ n=384\end{tabular}}& 
\multirow{2}{*}{16}        &128  & 7.92  &1.01 & 1.53 & 12.23 & 12.31& 1 &2&  $\infty$ & $\infty$ & 1&5& $\infty$ & $\infty$ \\
            & &            & 3250 & 38.45  &    1.00&1.64 & 62.86 & 60.97&1 & 2&$\infty$ & $\infty$ & 1&5& $\infty$ & $\infty$ \\ 
&&\multirow{2}{*}{100}     & 128 &7.75  &1.01 &1.54 &12.05 & 11.56&1 &2& 10 & $\infty$ & 1&4& $\infty$ & $\infty$\\
& &                        & 520 &15.58 &1.00 &1.51 &23.60 & 22.96&1 &2 &17 & $\infty$ & 1& 4& $\infty$ & $\infty$\\
\hline
\multirow{3}{*}{ \begin{tabular}{c}MNIST\\ (S=60000)\end{tabular}}& 
\multirow{4}{*}{ \begin{tabular}{c}2-conv layers\\ n=431080\end{tabular}}
& \multirow{2}{*}{16} &128 & 1.45 &1.42 &1.49 & 3.07 &2.92& 1 &5 &16 & $\infty$ & 1&11& 72 & $\infty$\\
     & &                   &500 & 2.15 & 1.14 &1.53 & 3.75 &3.71 & 1 &6& 22& 40&1 &14& 260 &  $\infty$ \\
%    & &                   &1000 & 2.64 & 1.08 &1.48 & 4.22 & \\
& &                     64    & 128 & 1.41 & 1.42 & 1.51  &3.02 &3.03& 1&5 &10 &$\infty$ & 1& 11& 24& $\infty$\\
\cline{1-1}\cline{3-17}
split by digit&&           10   &500 &1.01 & 1.00 &1.42 & 1.42 & 3.62& 1 &2& 3 & 60 & 1&3& 7& 100\\
\hline
\multirow{2}{*}{ \begin{tabular}{c}CIFAR-10\\ (S= 50000)\end{tabular}}& 
\multirow{2}{*}{ \begin{tabular}{c}ResNet18\\ n=11173962\end{tabular}}& 
\multirow{2}{*}{16}        &128  &1.07&3.35 &1.49 & 5.34&5.62 &1 &2& 10& 30 & 1&4&20&$    \infty$\\
            & &          & 500  &1.18&1.91 & 1.50&  3.40&3.52& 1  & 10& 21& $\infty$& 1&20& 250 & $\infty$\\ 
\hline

\end{tabular}
\end{table*}

Figures~\ref{f:cdf_spark} and \ref{f:cdf_asciq}  show the Cumulative Distribution Functions of computing times for the Spark cluster and for ASCI Q super-computer, respectively. 

The following experiments have been carried out:
\begin{enumerate}
	\item Linear regression on CT dataset on undirected expanders with computation time distribution from the Spark cluster (Fig.~\ref{f:regression_spark}) and from ASCI Q super-computer (Fig.~\ref{f:regression_asciq}).
	%\item SVM on SUSY dataset with computation time distribution from the Spark cluster~Fig.~\ref{f:classification_spark}.
	\item ResNet18 on MNIST dataset on directed ring lattices with computation time distribution from the Spark cluster (Fig.~\ref{f:nn_spark}).
	%\item SVM on SUSY dataset with computation time distribution from ASCI Q super-computer~Fig.~\ref{f:classification_asciq}.
	\item 2-conv layers on MNIST dataset on undirected expanders with computation time distribution from ASCI Q super-computer (Fig.~\ref{f:nn_asciq})
	\item ResNet18 on CIFAR-10 dataset on undirected expanders with computation time distribution from the Spark cluster (Fig.~\ref{f:cifar_spark}) and from ASCI Q super-computer (Fig.~\ref{f:cifar_asciq}).
\end{enumerate}

\begin{figure*}
%\vspace{.3in}
        \centering
        \begin{subfigure}[b]{0.4\textwidth}
            \centering
            \includegraphics[width=\textwidth]{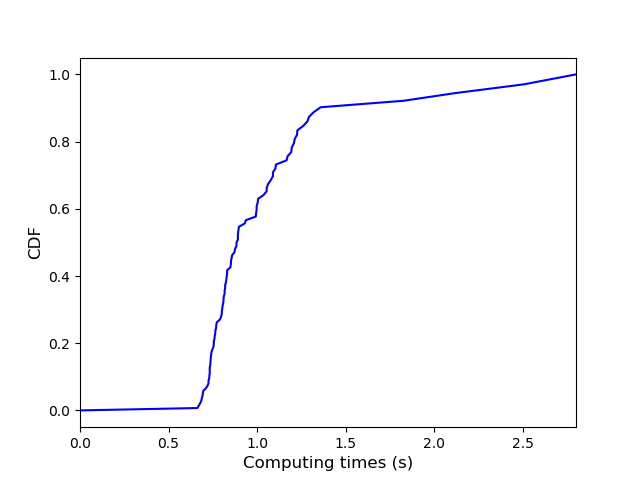}
            \caption[]%
            {{\small Spark cluster}}    
            \label{f:cdf_spark}
        \end{subfigure}
        \hfill
        \begin{subfigure}[b]{0.4\textwidth}  
            \centering 
            \includegraphics[width=\textwidth]{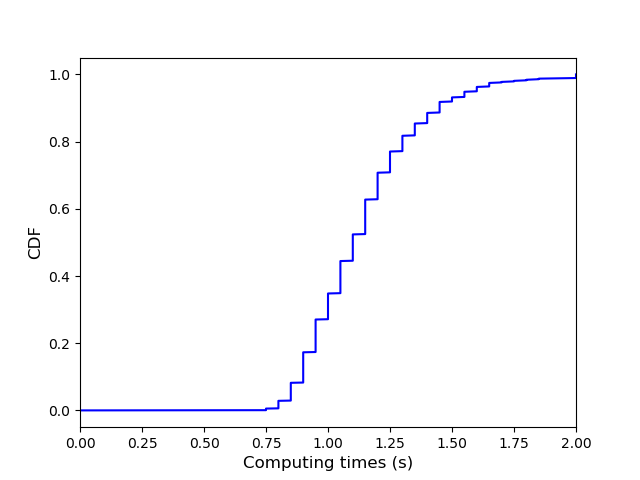}
            \caption[]%
            {{\small ASCI Q}}    
            \label{f:cdf_asciq}
        \end{subfigure}
        %\vskip\baselineskip
         \caption[ The average and standard deviation of critical parameters ]
        {\small Empirical distribution of the computation times.} 
        \label{f:cdf}
    \end{figure*}

    \begin{figure*}
%\vspace{.3in}
        \centering
        \begin{subfigure}[b]{0.3\textwidth}
            \centering
            \includegraphics[width=\textwidth]{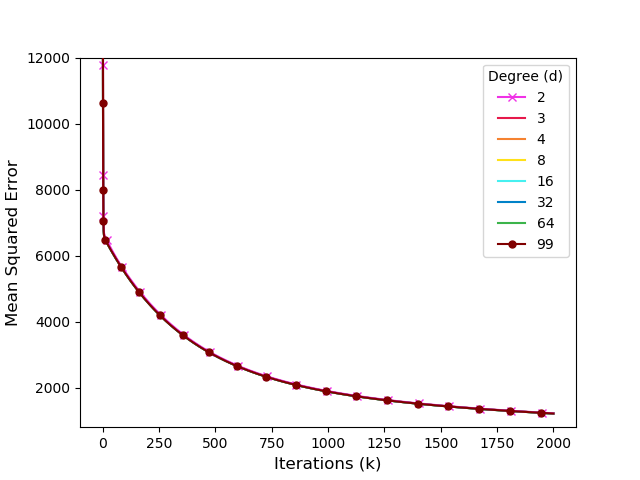}
            \caption[]%
            {{\small Error vs iterations}}    
            %\label{f:regr_err_vs_iter}
        \end{subfigure}
        \hfill
        \begin{subfigure}[b]{0.3\textwidth}  
            \centering 
            \includegraphics[width=\textwidth]{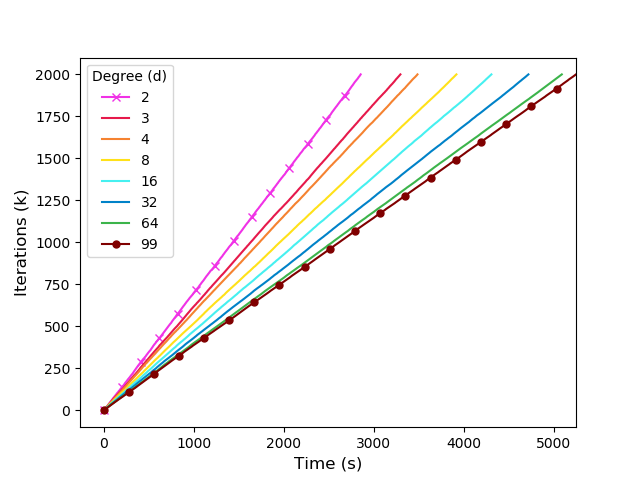}
            \caption[]%
            {{\small Throughput}}    
            %\label{f:regr_iter_vs_time}
        \end{subfigure}
        %\vskip\baselineskip
        \hfill
        \begin{subfigure}[b]{0.3\textwidth}   
            \centering 
            \includegraphics[width=\textwidth]{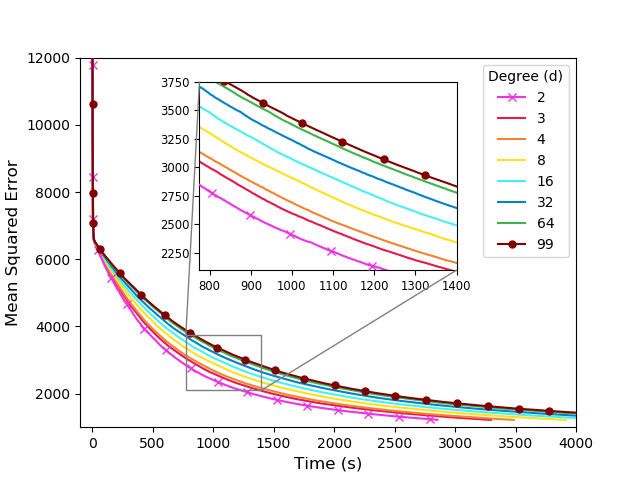}
            \caption[]%
            {{\small Error vs time}}    
            %\label{f:regr_error_vs_time}
        \end{subfigure}
         \caption[ The average and standard deviation of critical parameters ]
        {\small Effect of network connectivity  (degree $d$) on the convergence for linear regression on  dataset CT with computation times from a Spark cluster. M = 100, B = 128. } 
        \label{f:regression_spark}
    \end{figure*}
    \begin{figure*}
%\vspace{.3in}
        \centering
        \begin{subfigure}[b]{0.3\textwidth}
            \centering
            \includegraphics[width=\textwidth]{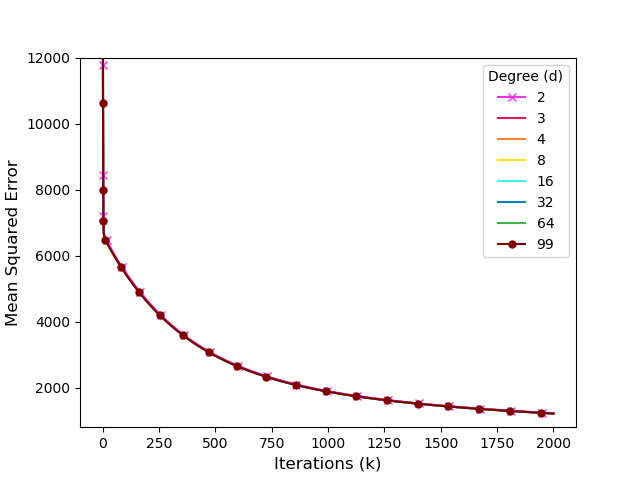}
            \caption[]%
            {{\small Error vs iterations}}    
            %\label{f:regr_err_vs_iter}
        \end{subfigure}
        \hfill
        \begin{subfigure}[b]{0.3\textwidth}  
            \centering 
            \includegraphics[width=\textwidth]{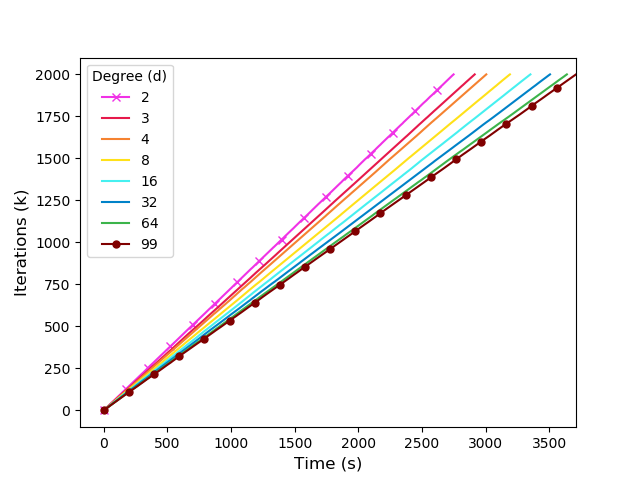}
            \caption[]%
            {{\small Throughput}}    
            %\label{f:regr_iter_vs_time}
        \end{subfigure}
        %\vskip\baselineskip
        \hfill
        \begin{subfigure}[b]{0.3\textwidth}   
            \centering 
            \includegraphics[width=\textwidth]{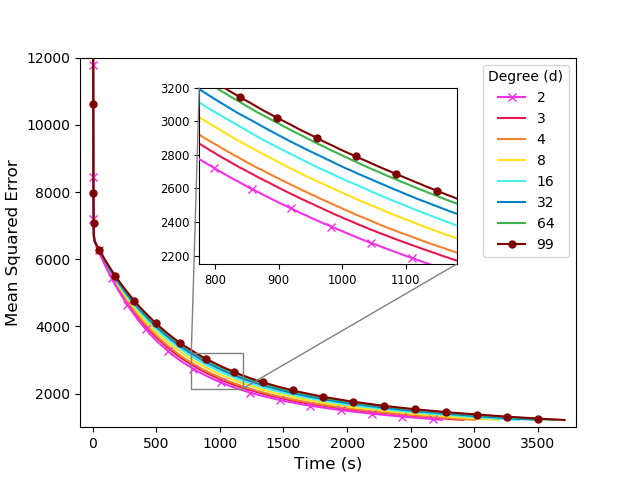}
            \caption[]%
            {{\small Error vs time}}    
            %\label{f:regr_error_vs_time}
        \end{subfigure}
         \caption[ The average and standard deviation of critical parameters ]
        {\small Effect of network connectivity  (degree $d$) on the convergence for linear regression on  dataset CT with computation times from ASCI-Q super-computer. M = 100, B = 128. } 
        \label{f:regression_asciq}
    \end{figure*}

%    \begin{figure*}[h]
%%\vspace{.3in}
%        \centering
%        \begin{subfigure}[b]{0.3\textwidth}
%            \centering
%            \includegraphics[width=\textwidth]{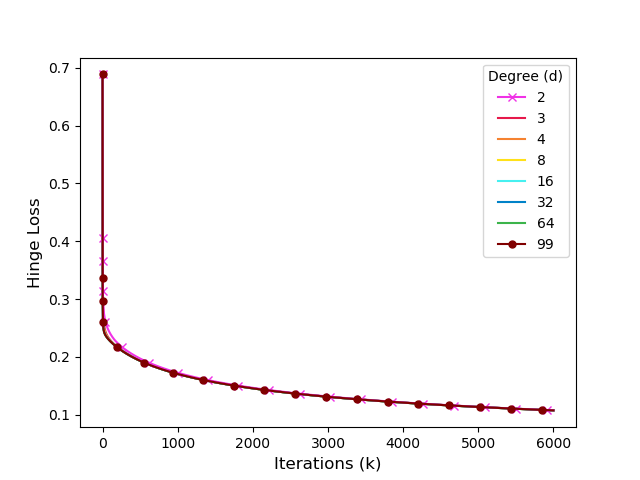}
%            \caption[]%
%            {{\small Error vs iterations}}    
            %\label{f:regr_err_vs_iter}
%        \end{subfigure}
%        \hfill
%        \begin{subfigure}[b]{0.3\textwidth}  
%            \centering 
%            \includegraphics[width=\textwidth]{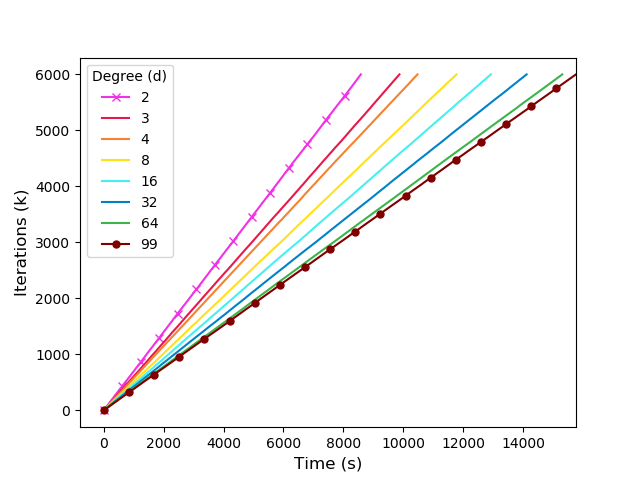}
%            \caption[]%
%            {{\small Throughput}}    
            %\label{f:regr_iter_vs_time}
%        \end{subfigure}
        %\vskip\baselineskip
%        \hfill
%        \begin{subfigure}[b]{0.3\textwidth}   
%            \centering 
%            \includegraphics[width=\textwidth]{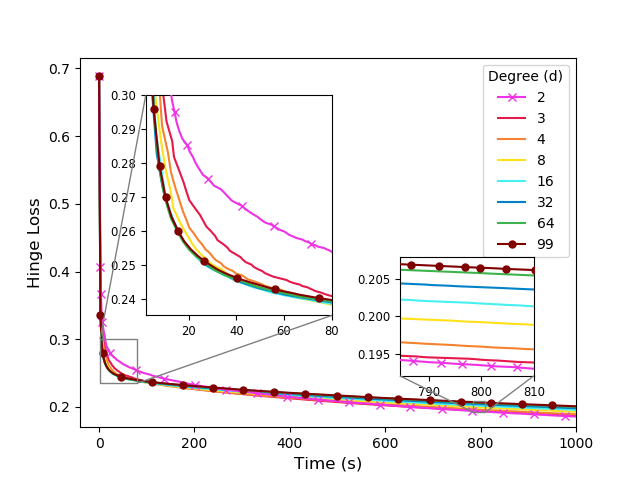}
%            \caption[]%
%            {{\small Error vs time}}    
            %\label{f:regr_error_vs_time}
%        \end{subfigure}
%         \caption[ The average and standard deviation of critical parameters ]
%        {\small Effect of network connectivity  (degree $d$) on the convergence for SVM on  dataset SUSY with computation times from a Spark cluster.} 
%        \label{f:classification_spark}
%    \end{figure*}

    \begin{figure*}
%\vspace{.3in}
        \centering
        \begin{subfigure}[b]{0.3275\textwidth}
            \centering
            \includegraphics[width=\textwidth]{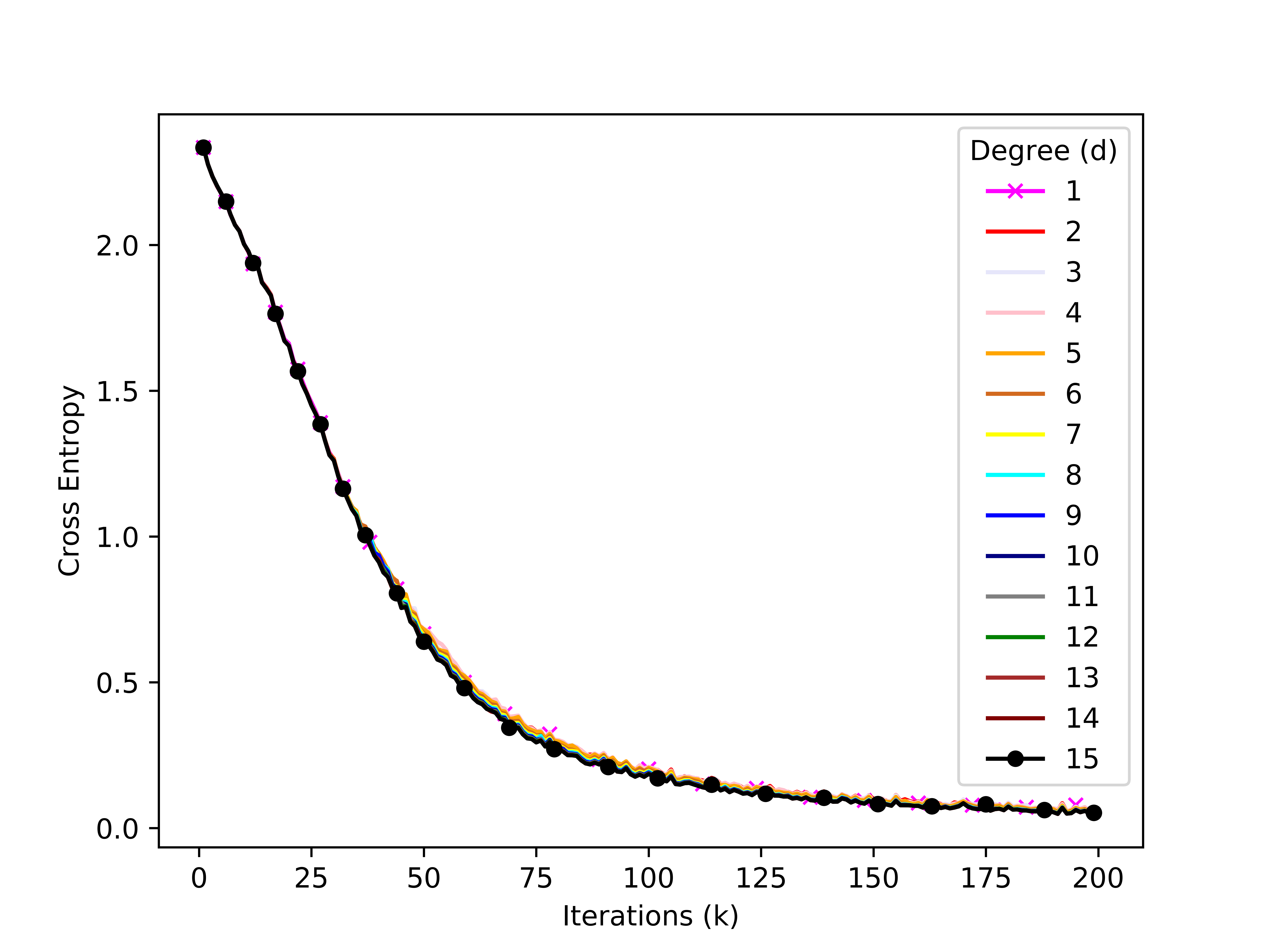}
            \caption[]%
            {{\small Error vs iterations}}    
            %\label{f:nn_err_vs_iter}
        \end{subfigure}
        \hfill
        \begin{subfigure}[b]{0.3275\textwidth}  
            \centering 
            \includegraphics[width=\textwidth]{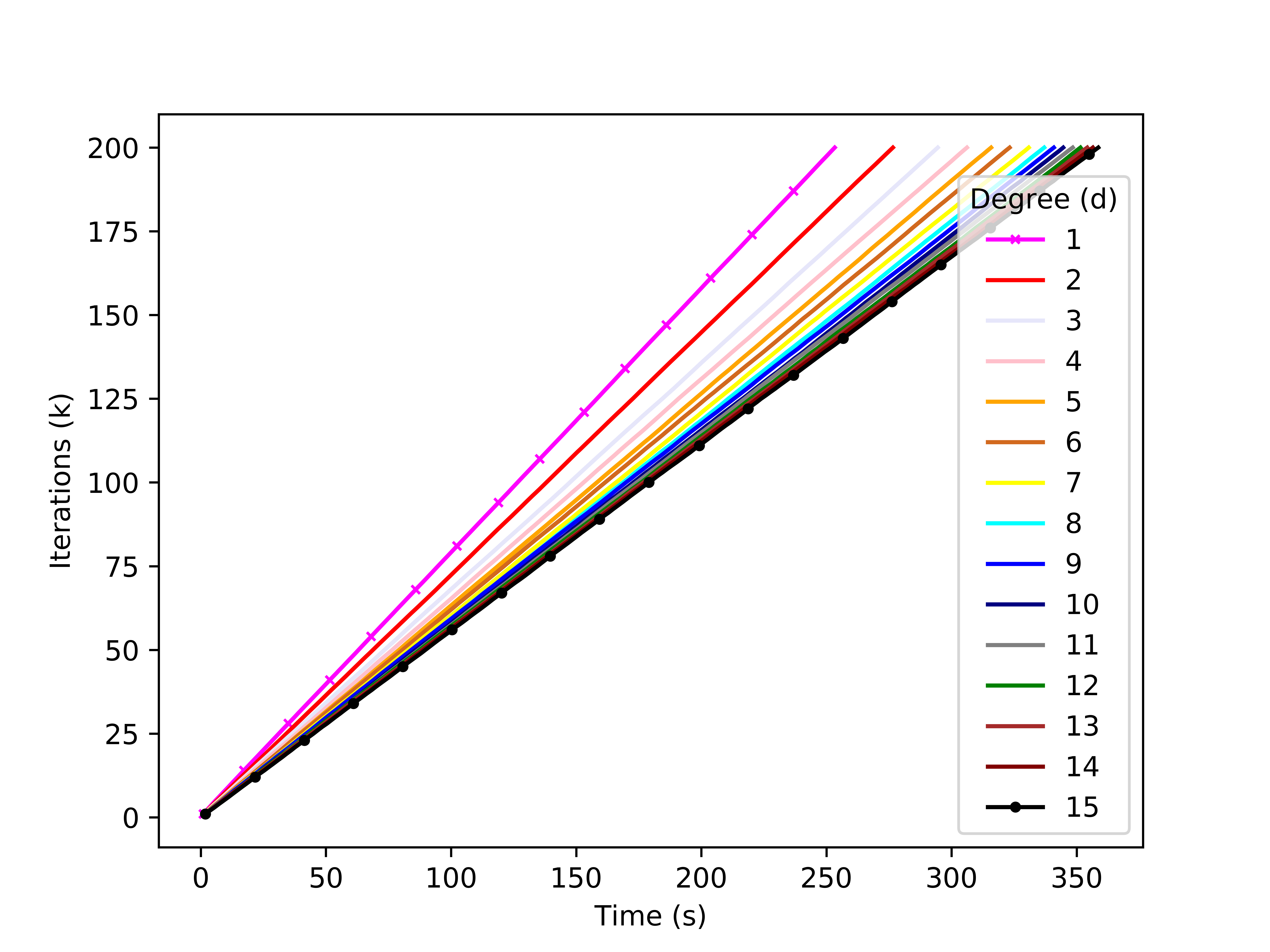}
            \caption[]%
            {{\small Throughput}}    
            %\label{f:nn_iter_vs_time}
        \end{subfigure}
        %\vskip\baselineskip
        \hfill
        \begin{subfigure}[b]{0.3275\textwidth}   
            \centering 
            \includegraphics[width=\textwidth]{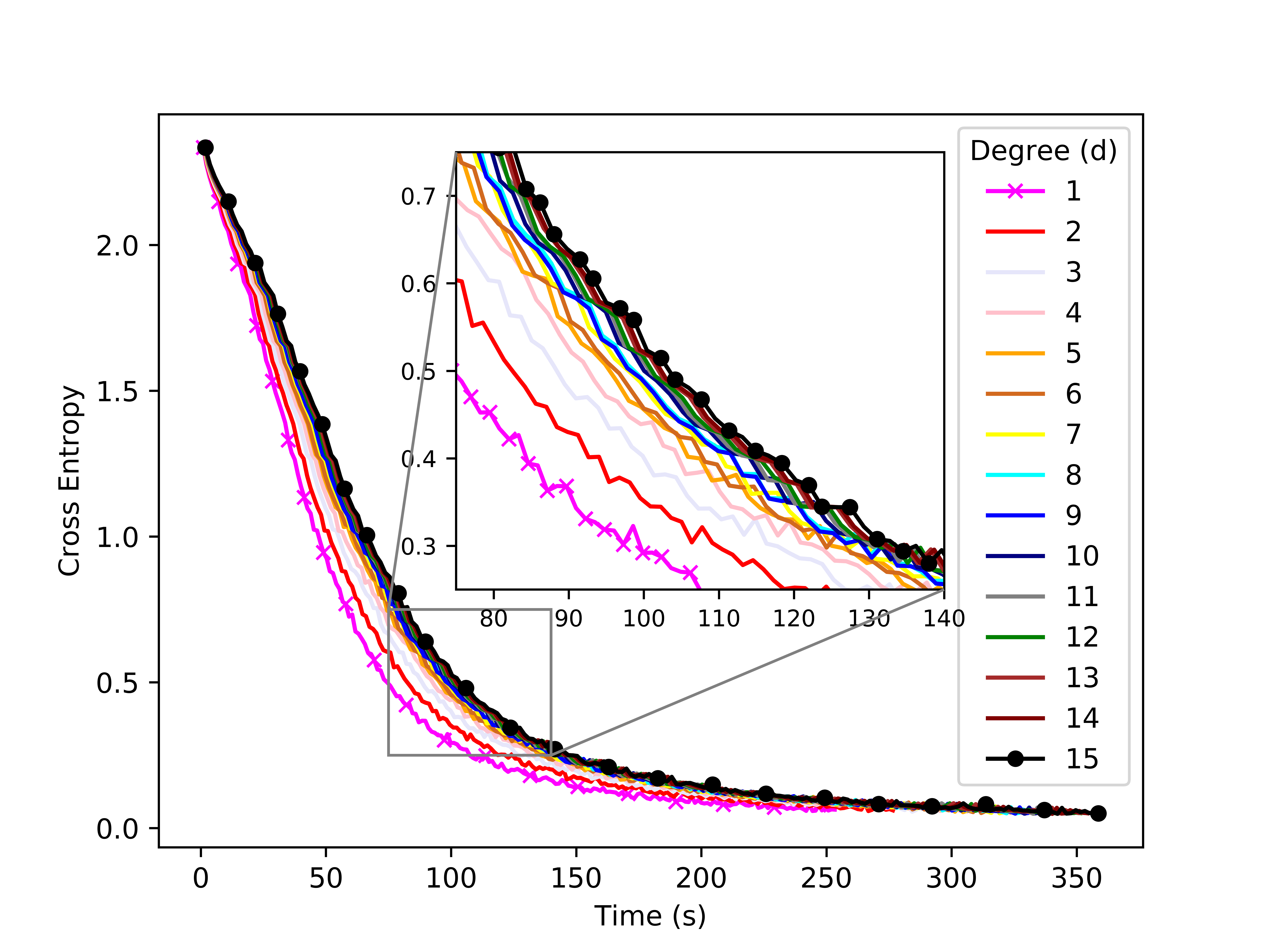}
            \caption[]%
            {{\small Error vs time}}    
            %\label{f:regr_error_vs_time}
        \end{subfigure}
         \caption[ The average and standard deviation of critical parameters ]
        {\small Effect of network connectivity  (degree $d$) on the convergence for ResNet18 on dataset MNIST with computation times from a Spark cluster. M = 16, B = 500.} 
        \label{f:nn_spark}
    \end{figure*}
   \begin{figure*}
%\vspace{.3in}
        \centering
        \begin{subfigure}[b]{0.3275\textwidth}
            \centering
            \includegraphics[width=\textwidth]{figures/simulations_wide_layout/MNIST_LOSSVSITER.png}

            \caption[]%
            {{\small Error vs iterations}}    
            %\label{f:nn_err_vs_iter}
        \end{subfigure}
        \hfill
        \begin{subfigure}[b]{0.3275\textwidth}  
            \centering 
            \includegraphics[width=\textwidth]{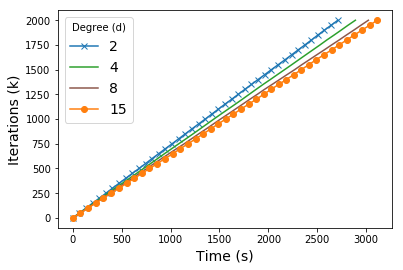}
            \caption[]%
            {{\small Throughput}}    
            %\label{f:nn_iter_vs_time}
        \end{subfigure}
        %\vskip\baselineskip
        \hfill
        \begin{subfigure}[b]{0.3275\textwidth}   
            \centering 
            \includegraphics[width=\textwidth]{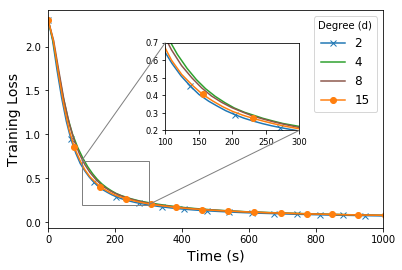}
            \caption[]%
            {{\small Error vs time}}    
            %\label{f:regr_error_vs_time}
        \end{subfigure}
         \caption[ The average and standard deviation of critical parameters ]
        {\small Effect of network connectivity  (in-degree $d$) on the convergence for 2-conv layers on dataset MNIST with computation times from ASCI-Q super-computer. M = 16, B = 500. } 
        \label{f:nn_asciq}
    \end{figure*}

        \begin{figure*}
%\vspace{.3in}
        \centering
        \begin{subfigure}[b]{0.3\textwidth}
            \centering
            \includegraphics[width=\textwidth]{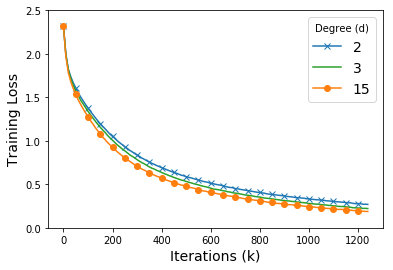}
            \caption[]%
            {{\small Error vs iterations}}    
            %\label{f:regr_err_vs_iter}
        \end{subfigure}
        \hfill
        \begin{subfigure}[b]{0.3\textwidth}  
            \centering 
            \includegraphics[width=\textwidth]{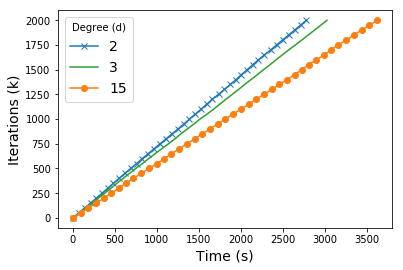}
            \caption[]%
            {{\small Throughput}}    
            %\label{f:regr_iter_vs_time}
        \end{subfigure}
        %\vskip\baselineskip
        \hfill
        \begin{subfigure}[b]{0.3\textwidth}   
            \centering 
            \includegraphics[width=\textwidth]{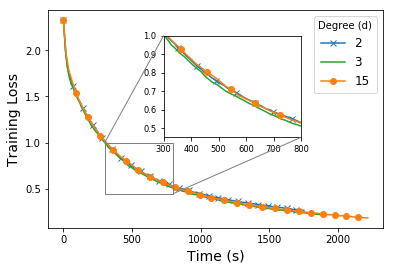}
            \caption[]%
            {{\small Error vs time}}    
            %\label{f:regr_error_vs_time}
        \end{subfigure}
         \caption[ The average and standard deviation of critical parameters ]
        {\small Effect of network connectivity  (degree $d$) on the convergence for CIFAR-10 with computation times from a spark cluster. M = 16, B = 128. } 
        \label{f:cifar_spark}
    \end{figure*}
    
        \begin{figure*}
%\vspace{.3in}
        \centering
        \begin{subfigure}[b]{0.3\textwidth}
            \centering
            \includegraphics[width=\textwidth]{figures/simulations_wide_layout/CIFAR10_LOSSVSITER.png}
            \caption[]%
            {{\small Error vs iterations}}    
            %\label{f:regr_err_vs_iter}
        \end{subfigure}
        \hfill
        \begin{subfigure}[b]{0.3\textwidth}  
            \centering 
            \includegraphics[width=\textwidth]{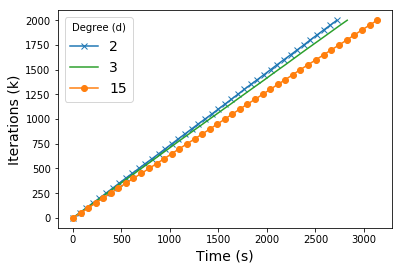}
            \caption[]%
            {{\small Throughput}}    
            %\label{f:regr_iter_vs_time}
        \end{subfigure}
        %\vskip\baselineskip
        \hfill
        \begin{subfigure}[b]{0.3\textwidth}   
            \centering 
            \includegraphics[width=\textwidth]{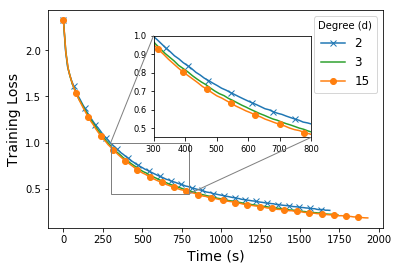}
            \caption[]%
            {{\small Error vs time}}    
            %\label{f:regr_error_vs_time}
        \end{subfigure}
         \caption[ The average and standard deviation of critical parameters ]
        {\small Effect of network connectivity  (degree $d$) on the convergence for CIFAR-10 with computation times from ASCI-Q super computer. M = 16, B = 128.} 
        \label{f:cifar_asciq}
    \end{figure*}
 %   \begin{figure*}
%\vspace{.3in}
 %       \centering
 %       \begin{subfigure}[b]{0.3\textwidth}
 %           \centering
 %           \includegraphics[width=\textwidth]{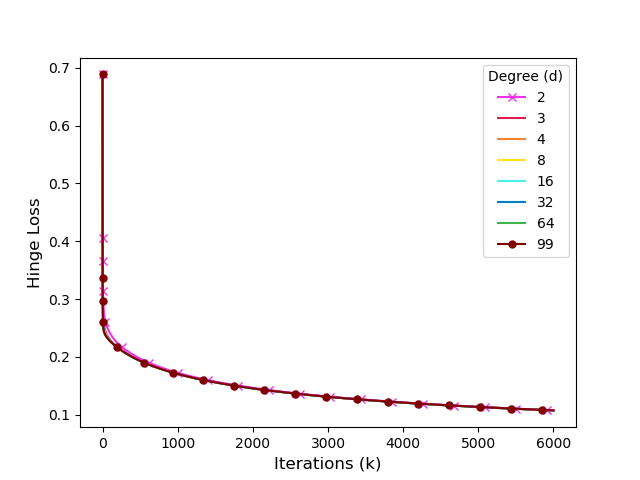}
 %           \caption[]%
 %           {{\small Error vs iterations}}    
            %\label{f:regr_err_vs_iter}
 %       \end{subfigure}
 %       \hfill
 %       \begin{subfigure}[b]{0.3\textwidth}  
 %           \centering 
 %           \includegraphics[width=\textwidth]{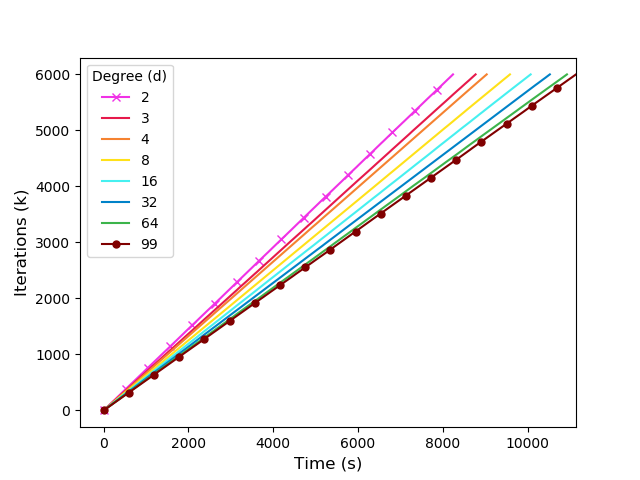}
 %           \caption[]%
 %           {{\small Throughput}}    
            %\label{f:regr_iter_vs_time}
 %       \end{subfigure}
        %\vskip\baselineskip
 %       \hfill
 %       \begin{subfigure}[b]{0.3\textwidth}   
 %           \centering 
 %           \includegraphics[width=\textwidth]{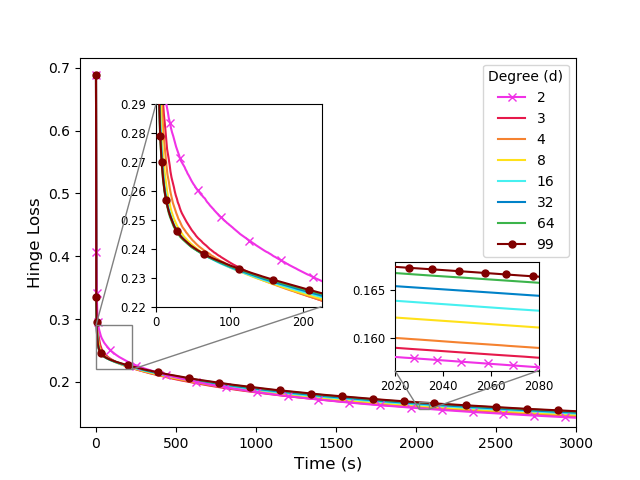}
 %           \caption[]%
 %           {{\small Error vs time}}    
            %\label{f:regr_error_vs_time}
 %       \end{subfigure}
 %        \caption[ The average and standard deviation of critical parameters ]
%        {\small Effect of network connectivity  (degree $d$) on the convergence for SVM on  dataset SUSY with computation times from ASCI Q super-computer.} 
%        \label{f:classification_asciq}
%    \end{figure*}

%\begin{figure}
%    \centering
%    \includegraphics[scale = 0.5]{figures/CT100.png}
%    \caption{Effect of network connectivity of CT data set}
%    \label{fig:my_label}
%\end{figure}

\end{document}